\documentclass{article}

\usepackage[preprint]{neurips_2020}
\usepackage{amsfonts}
\usepackage{times}
\usepackage{mathtools, enumitem}
\usepackage{amsthm} 
\usepackage{latexsym}
\usepackage{relsize}

\usepackage{mathrsfs, float}
\usepackage{textcomp}

\usepackage{hyperref}
\usepackage[capitalise]{cleveref}
\usepackage{xcolor}
\usepackage{dsfont}

\usepackage{algorithm,algorithmic}
\usepackage{natbib}
 \setcitestyle{numbers}

\usepackage{thm-restate}
\newtheorem{theorem}{Theorem}
\newtheorem{lemma}[theorem]{Lemma}

\newtheorem{corollary}[theorem]{Corollary}
\newtheorem{assumption}{Assumption}
\newtheorem{definition}{Definition}
\allowdisplaybreaks

\newcommand{\M}{\mathcal{M}}
\DeclareMathOperator{\Tr}{Tr}

\def\mL{{\mathcal L}}

\def\H{{\mathcal H}}




\newcommand{\D}{\mathcal{D}}

\newcommand{\A}{\mathcal{A}}

\newcommand{\sign}{\text{ } \mathrm{sign}}

\def\regret{\mbox{{Regret}}}

\newcommand{\ignore}[1]{}

\DeclareMathAlphabet{\mathbfsf}{\encodingdefault}{\sfdefault}{bx}{n}


\let\Pr\relax
\DeclareMathOperator{\Pr}{\mathbb{P}}





\newcommand{\E}{\mathbb{E}}

\newcommand{\poly}{\mathrm{poly}}

\newcommand{\reals}{\mathbb{R}}
\newcommand{\eps}{\varepsilon}

\renewcommand{\leq}{~\le~}

\let\oldtfrac\tfrac
\renewcommand{\tfrac}[2]{\smash{\oldtfrac{#1}{#2}}}

\let\nablaold\nabla
\renewcommand{\nabla}{\nablaold\mkern-2.5mu}

\title{Black-Box Control for Linear Dynamical Systems}

\author{
   Xinyi Chen$^{1\,2}$\hspace{4em}  Elad Hazan$^{1\,2}$\\
  $^1$ Google AI Princeton \\ $^2$ Department of Computer Science, Princeton University \\
  \texttt{\{xinyic,ehazan\}@cs.princeton.edu}
  }
\usepackage{times}
\begin{document}

\maketitle

\begin{abstract}
We consider the problem of controlling an unknown linear time-invariant dynamical system from a single chain of black-box interactions, with no access to “resets” or offline simulation. Under the assumption that the system is  controllable, we give the first {\it efficient} algorithm that is capable of attaining sublinear regret in a single trajectory under the setting of online nonstochastic control. This resolves an open problem since the work of \cite{csaba} on the stochastic LQR problem, and in a more challenging setting that allows for adversarial perturbations and adversarially chosen and changing convex loss functions. 

We give finite-time regret bounds for our algorithm on the order of $2^{\tilde{O}(\mL)} + \tilde{O}(\poly(\mL) T^{2/3})$ for general nonstochastic control, and $2^{\tilde{O}(\mL)} + \tilde{O}(\poly(\mL) \sqrt{T})$ for black-box LQR, where $\mL$ is the system size  which is an upper bound on the dimension. The crucial step is a new system identification method that is robust to adversarial noise, but incurs exponential cost.

To complete the picture, we investigate the complexity of the online black-box control problem, and give a matching lower bound of $2^{\Omega(\mL)}$ on the regret, showing that the additional exponential cost is inevitable. This lower bound holds even in the noiseless setting, and applies to any, randomized or deterministic, black-box control method.  
\end{abstract}


\section{Introduction}

The ultimate goal in the field of adaptive control and reinforcement learning is to produce a truly independent learning agent. Such an agent can start in an unknown environment and follow one
continuous and uninterrupted chain of experiences, until it performs as well as the optimal policy. 

In this paper we consider this goal for the fundamental problem of controlling an unknown, linear time-invariant (LTI) dynamical system. This problem has received significant attention in the recent ML literature. However, nearly all existing methods assume some knowledge about the environment, usually in the form of a stabilizing controller. \footnote{Roughly speaking, a stabilizing controller is a policy that ensures the system will not explode, i.e. that the states stay bounded, under bounded perturbations. We formally define this concept in later sections. }  
The only exception is the seminal work of \cite{csaba}, which gives near-optimal regret bounds for certain variants of this problem, albeit using an exponential-time algorithm. It has been an open question in the literature to find an efficient algorithm. Our main contributions are resolving this  question, and matching our regret bounds with a lower bound that is tight for nonstochastic online LQR. 

We henceforth describe a control algorithm that {\bf only has black-box access} to an LTI system, meaning it has no knowledge of a stabilizing controller. The algorithm is guaranteed to attain sublinear regret, converging on average to the performance of the best controller in hindsight among a set of reference policies. Furthermore, its guarantees apply to the setting of nonstochastic control, in which both the perturbations and cost functions can be adversarially chosen.   The question of controlling unknown systems under adversarial noise was posed in \citep{tu2019sample}; our results quantify the difficulty of this task and provide a polynomial time solution. As far as we know, these results are the first finite-time sublinear regret bounds known for black-box, single-trajectory control in the nonstochastic model. Table 1 provides a summary of results.


Our regret bounds are accompanied by a novel lower bound on the cost of black-box control.  We show that this cost is inherently exponential in the natural parameters of the problem for {\it any}, deterministic or randomized, control method. As far as we know, this is the first finite-time lower bound for the online control problem that is exponential in the {\it system dimension}.


\subsection{Statement of results}\label{sec:results}

\begin{table}
\begin{tabular}{|c |c| c| c| c | } 
\hline
 Algorithm  & Regret Bound & Efficient & Disturbances & Cost Functions \\
 \hline
\citep{csaba} & $2^\mL \sqrt{T} $ & No & Stochastic & Quadratic \\ 
 \hline
Ours & $2^{\mL\log \mL} + \sqrt{T}$ & Yes & Adversarial & Strongly convex \\ 
 \hline
Ours & $2^{\mL\log \mL} + T^{2/3}$ & Yes & Adversarial & General convex \\
\hline
\end{tabular}
\caption{Summary of settings and results} \label{setting_table}
\end{table}

Consider a given LTI dynamical system with black-box access. The only interaction of the controller with the system is by sequentially observing states $x_t$ and applying controls $u_t$. The evolution of the state is according to the dynamics equation 
$$
x_{t+1} = Ax_t + Bu_t + w_t,
$$
where $x_t\in \mathbb{R}^{d_x}$, $u_t\in\mathbb{R}^{d_u}$. The system dynamics $A, B$ are unknown to the controller, and the disturbance $w_t$ can be adversarially chosen at the start of each time step. An adversarially chosen convex cost function $c_t(x, u)$ is revealed after the controller's action, and the controller suffers $c_t(x_t, u_t)$. 
In this model, a controller $\A$ is simply a mapping from all previous states and costs to a control. 
The total cost of executing a controller $\A$, whose  sequence of controls is denoted as $u_t^\A$, is defined as 
$$
J_T(\A) = \sum_{t=1}^T c_t(x_t^\A, u_t^\A).
$$
For a randomized control algorithm, we consider the expected cost. Under the special case of quadratic cost functions, if the perturbations are i.i.d. stochastic, it is referred to as the {\bf online LQR} setting; if the perturbations are adversarial, it is referred to as the {\bf nonstochastic online LQR} setting.

In the nonstochastic setting the optimal controller cannot be determined a priori, and depends on the disturbance realization. For this reason, we consider a comparative performance metric. 
The goal of the learning algorithm is to choose a sequence of controls $\{u_t\}_{t=1}^T$ such that the total cost over $T$ iterations is competitive with that of the best controller in a reference policy class $\Pi$. Thus, the learner {\bf with only black-box access, and in a single trajectory}, seeks to minimize regret defined as
$$
\regret_T(\A) = J_T(\A) - \min_{\pi \in \Pi} J_T(\pi).
$$
As a comparator class, we consider the set of all Disturbance-action Controllers (Definition \ref{def:dac}), whose control is a linear function of past disturbances. This class is known to contain state of the art Linear Dynamical Controllers (Definition \ref{def:ldc}), and the $H_2$ and $H_\infty$  optimal controllers.

Let $\mL$ denote the upper bound on the system's natural parameters, and $\kappa^*$ be the controllability parameter of the stabilized system (Section \ref{sec:notations}). Let $\tilde{\kappa}$ denote an upper bound on the stability parameters of the recovered controller (Section \ref{sec:nonstochastic}). The following statements summarize our main results in Theorem \ref{thm:main} and Theorem \ref{thm:lowerbound_rand}:  
\begin{enumerate}[leftmargin=0.5cm]
    \item We give an efficient algorithm, whose regret in a single trajectory is with high probability at most $$ \regret_T(\A) \le 2^{O(\mL \log \mL)} + \tilde{O} ( \poly(\mL, \kappa^*)  T^{2/3}) . $$
    
    \item For the nonstochastic online LQR problem, we give an efficient algorithm, whose regret in a single trajectory is with high probability at most $$ \regret_T(\A) \le 2^{O(\mL \log \mL)} + \tilde{O} ( \poly(\mL, \tilde{\kappa})  \sqrt{T}) . $$ 
    
    \item We show that {\bf any  control algorithm} (randomized or deterministic) must suffer a worst-case exponential start-up cost in the regret, due to limited information. Formally, we show that for every controller $\A$, there exists an LTI dynamical system where (with high probability if the algorithm is randomized)
    $$ \regret_T(\A) \ge 2^{\Omega(\mL)}. $$
    
    Further, this lower bound holds even without disturbances, and when the system-input matrix $B$ is full rank. From existing results by \cite{tomerlowerbound}, in general the online LQR problem has regret lower bound $\Omega(\sqrt{T})$. Therefore any algorithm must incur regret at least $2^{\Omega(\mL)} + \sqrt{T}$.
\end{enumerate}

To the best of our knowledge, these are the first finite-time regret bounds for control in a single trajectory with black-box access to the system under the nonstochastic setting, and the first polynomial time and optimal regret algorithm for nonstochastic black-box online LQR.
Our result quantifies the price of information for controlling unknown LTI systems to be exponential in the natural parameters of the problem, and shows that this is inevitable \footnote{See conclusions section for a discussion of intriguing open problems that remain.} . 

The main challenge of designing an efficient algorithm is obtaining a stabilizing controller from black-box interactions in the presence of adversarial noise and changing convex costs.  Our method consists of three phases. In the first phase, we identify the dynamics matrices up to some accuracy in the spectral norm by injecting large controls into the system. Previous works on system identification under adversarial noise either require stable dynamics, or the knowledge of a strongly stable controller. However, our approach is not limited by these requirements. 

In the second phase, we use an SDP relaxation for the LQR proposed by \cite{cohen2018online} to solve for a strongly stable controller given the system estimates. After we identify a strongly stable controller, we use the techniques of \cite{hazan2019nonstochastic} for regret minimization in the third phase for general convex costs, and those of \cite{simchowitz2020making} for the nonstochastic online LQR problem.


For the lower bound, our approach is inspired by lower bounds for gradient-based methods from the optimization literature \citep{gradientlowerbound}. We give two separate lower bounds: one for deterministic algorithms, and one for any algorithm. The deterministic lower bound has the advantage that it holds even for degenerate systems, and has better constants. Given a controller, we show system constructions that forces the states, and thus costs, to grow exponentially before enough information about the system is revealed. 

\subsection{Related work}

The focus of our work is adaptive control, in which the controller does not have a priori knowledge of the underlying dynamics, and has to learn them as well as control the system according to given convex costs. 
This task, recently put forth in the machine learning literature, differs substantially from the classical literature on control theory that we survey below in the following aspects:
\begin{enumerate}[leftmargin=0.75cm]
    \item The system is unknown ahead of time to the learner, nor is a stabilizing controller given. 
    
    \item The learner does not know the cost functions in advance. They can be adversarially chosen.
    
    \item The disturbances are not assumed to be stochastic, and can be adversarially chosen. 
\end{enumerate}

\paragraph{Robust and Optimal Control.} When the underlying system is known to the learner, the noise is stochastic, and costs are  known ahead of time, it is theoretically possible to compute the  optimal controller a priori. In the LQR setting, the costs are quadratic, dynamics are linear, and it follows from the Bellman equations that the optimal policy for infinite horizon is linear  $u_t = K x_t$, where $K$ is the solution to the algebraic Ricatti equation \citep{Stengel1994OptimalCA, kemin,Bert17}. Control which is robust to worst-case noise, in a min-max sense, is formulated in the framework of $H_\infty$ control, see e.g. the text by \cite{kemin}.

\paragraph{Online Control.}  
Recent literature stemming from the machine learning community considers the online LQR setting \citep{csaba,dean2018regret,mania2019certainty,cohen2018online}, where the noise remains i.i.d. Gaussian, but the performance metric is regret instead of cost. Recent algorithms in \citep{mania2019certainty,cohen19b,cohen2018online} attain $\sqrt{T}$ regret, with polynomial runtime and dependence on relevant problem parameters in the regret. This was improved to $O(\poly(\log T))$ by \cite{log_regret} for strongly convex costs.  Regret bounds for partially observed systems are studied in \citep{anima1,anima2,anima3}, and the most recent bounds are in \citep{simchowitz19a}. All the above results assume the learner is given a stabilizing controller. 

Black-box control of an unknown LDS was studied by \cite{csaba} and $\sqrt{T}$ regret was obtained, though the algorithm is inefficient in the sense that it may take  exponential running time in the worst case.  In contrast, our algorithm runs in polynomial time, and our setting permits non-i.i.d., and even adversarial noise sequences, and adversarial loss functions. 

Regret {\bf lower bounds} for online LQR were studied by \cite{tomerlowerbound} and \cite{simchowitz2020naive}, who show polynomial lower bounds in terms of the parameter $T$. In comparison, our lower bound is exponential, and in terms of the system dimension rather than time.

Concurrently and independently, recent work by \citet{lale2020explore} considers the black-box online LQR setting and obtain $\tilde{O}(2^{\mL} + \sqrt{T})$ regret under the weaker condition of stabilizability. However, their setting is restricted to stochastic, rather than adversarial, noise and quadratic, rather than general, cost functions. 

\paragraph{Nonstochastic Control: } Moving away from stochastic noise, the nonstochastic control problem for linear dynamical systems was posed in \citep{agarwal2019online} to capture more robust online control (see survey  \citep{hazan2020lecture}). In this setting, the controller has no knowledge of the system dynamics or the adversarial noise sequence. The controller generates controls $u_t$ at each iteration to minimize regret over sequentially revealed adversarial convex cost functions, against all disturbance-action control policies. If a strongly stable controller is known, \cite{hazan2019nonstochastic} give an algorithm that achieves $\tilde{O}(poly(\mL, \kappa^*)T^{2/3})$ regret, where $\mL$ is an upper bound on the system's natural parameters and $\kappa^*$ is the controllability parameter of the stabilized system, as formalized in Section \ref{sec:notations}. This was recently extended in \citep{simchowitz2020improper} to partially observed systems, and better bounds for certain families of loss functions with semi-adversarial noise. In \citep{simchowitz2020making}, $\tilde{O}(\poly(\mL, \tilde{\kappa})\sqrt{T})$ regret was obtained for the nonstochastic LQR problem, where $\tilde{\kappa}$ is an upper bound on the stability parameters of the recovered controllers, see Section  \ref{sec:nonstochastic}. However, all of these works assume that a stabilizing controller is given to the learner, and are {\bf not black-box} as per our definition.

\paragraph{Identification and Stabilization of Linear Systems: } If the system has stochastic noise, the least squares method can be used to identify the dynamics in the partially observable and fully observable settings \citep{oymak, simchowitz18a, sarkar19a}. Using this method of system identification, recent work by \cite{stab} finds a stabilizing controller in finite time. However, no explicit bounds were given on the cost or the number of total iterations required to identify the system to sufficient accuracy.  Moreover, least squares can lead to inconsistent solutions under adversarial noise. The algorithm by \cite{simchowitz19a} tolerates adversarial noise, but the guarantees only hold for stable systems. 

In contrast, our paper provides explicit finite-time bounds for optimally controlling the system even in the presence of adversarial noise. Our results do not assume stability of the system (spectral radius bounded by 1), but the weaker condition of controllability. It remains open to relax this assumption even further, to that of stabilizability in the nonstochastic black-box model.

\section{Setting and Background}
To enable the analysis of non-asymptotic regret bounds,  we consider regret minimization against the class of strongly stable linear controllers.  The notion of strong stability was formalized in \citep{cohen2018online} to quantify the rate of convergence to the steady-state distribution. 

\begin{definition}[Strong Stability]
$K$ is a $(\kappa, \gamma)$ strongly stable controller for $(A, B)$ if $\|K\|\le \kappa$, and there exist matrices $H$, $L$ such that $A + BK = HLH^{-1}$, and $\|H\|\|H^{-1}\| \le \kappa$, $\|L\|\le 1-\gamma$. 
\end{definition}
The regret definition in Section \ref{sec:results} is meaningful only when the comparator set $\Pi$ is non-empty. As shown in \citep{cohen2018online}, a system $(A, B)$ has a strongly stable controller if it is strongly controllable. This notion is formalized in the next definition. 
\begin{definition} [Strong Controllability]\label{def:strong-controllability}
Given a system $(A, B)$, let $C_k$ denote
$$
C_k = [B\ AB\ A^2B\ \cdots A^{k-1}B] \in \mathbb{R}^{d_x\times kd_u}.
$$
Then $(A, B)$ is $(k, \kappa)$ strongly controllable if $C_k$ has full row-rank, and $\|(C_kC_k^\top)^{-1}\| \le \kappa$. 
\end{definition}
\begin{assumption} \label{a:controllable}
The system $(A, B)$ is $(k, \kappa)$ strongly controllable for $\kappa \ge 1$, and $\|A\|, \|B\|\le \beta$ for some $\beta \ge 1$. 
\end{assumption}
Under Assumption 1, the noiseless dynamical system $x_{t+1} = Ax_t + Bu_t$ starting from $x_1$ can be driven to the zero state in $k$ steps. Furthermore, Lemma B.4 in \citep{cohen2018online} gives an upper bound on the reset cost, defined as $\sum_{t=1}^{k} \|x_t\|^2 + \|u_t\|^2$. In Section \ref{sec:sdp} we show that a bounded reset cost implies the existence of a strongly stable controller. As a consequence of the Cayley-Hamilton theorem, a controllable system's controllability index $k$ is at most $d_x$.
\noindent
Finally we make the following mild assumptions on the noise sequence and the cost functions. 
\begin{assumption}
The noise sequence is bounded such that $\|w_t\|\le 1$ for all $t$.
\end{assumption}

\begin{assumption} \label{a:lipschitz}
The cost functions are convex, and for all $x, u$ such that $\|x\|, \|u\|\le D$, $\|\nabla_{(x, u)} c_t(x, u)\|\le GD$. Without loss of generality, assume $c_t(0, 0) = 0$.
\end{assumption}

\subsection{Notations} \label{sec:notations}
Inspired by the convention from the theory of Linear Programming \citep{nemi}, we use $\mL$ to denote an upper bound on the natural parameters, which we interpret as the complexity of the system, i.e. 
$$ \mL = kd_u + d_x + G + \beta + \kappa,\ \text{where}$$ 
\begin{itemize}

    \item $\kappa, k$ are the controllability parameter and controllability index of the true system, respectively.
    \item $d_x, d_u$ are the dimension of the states $x_t \in \reals^{d_x}$ and dimension of the controls $u_t \in \reals^{d_u}$.
    \item $G$ is an upper bound on the Lipschitz constant of the cost functions $c_t$.
    \item $\beta$ is an upper bound on the spectral norm of system dynamics $A, B$.
\end{itemize}
Given a $(\tilde{\kappa}, \tilde{\gamma})$ strongly stable controller $K$, we denote $\kappa^*$ as the upper bound on the controllability parameter of the stabilized system $(A+BK, B)$, and $\tilde{\kappa}$. We henceforth prove an upper bound on $\kappa^*$ for the controller we recover, and show in Section \ref{sec:nonstochastic} that $\kappa^* \le \poly( \kappa, \beta^{k},d_x)$. We use $\tilde{O}$ to denote bounds that hold with probability at least $1-\delta$, and omit the $\log(\delta^{-1})$ factor.

\subsection{Disturbance-action Controller} \label{sec: dac}
In the canonical parameterization of the nonstochastic control problem, the total cost of a linear controller $J(K)$ is not convex in $K$. This problem is solved by considering a class of controllers called Disturbance-action Controllers (DACs) \citep{agarwal2019online}, which executes controls that are linear in past noises. The total cost of DACs is convex with respect to their parameters, and the cost of any strongly stable controller can be approximated by this class of controllers. Techniques in online convex optimization can then be used on this convex re-parameterization of the nonstochastic control problem. It is shown in \citep{hazan2019nonstochastic} that for an unknown system $(A, B)$ and a known $(\kappa, \gamma)$ strongly stable controller $K$, a DAC can achieve sublinear regret against all such controllers parametrized by $(K', M)$ where $K'$ is $(\kappa, \gamma)$ strongly stable.
\begin{definition}[Disturbance-action Controllers]\label{def:dac}
A disturbance-action controller with parameters $(K, M)$ where $M = [M^0, M^1, \ldots, M^{H-1}]$ outputs control $u_t$ at state $x_t$,

$$ u_t = Kx_t + \sum_{i=1}^H M^{i-1} w_{t-i}. $$
\end{definition}

DACs also include the class of Linear Dynamic Controllers (LDCs). LDCs are the state of the art in control, and is a generalization of static feedback controllers. Both $\H_2$ and $H_\infty$ optimal controllers under partial observation can be well-approximated by LDCs.

\begin{definition}[Linear Dynamic Controllers] \label{def:ldc}
A linear dynamic controller $\pi$ is a linear dynamical system $(A_\pi, B_\pi, C_\pi, D_\pi)$ with internal state $s_t\in \mathbb{R}^{d_\pi}$, input $x_t\in \mathbb{R}^{d_x}$ and output $u_t\in\mathbb{R}^{d_u}$ that satisfies
$$
s_{t+1} = A_\pi s_t + B_\pi x_t,\ \ u_t = C_\pi s_t + D_\pi x_t.
$$
\end{definition}

\subsection{SDP Relaxation for LQ Control} \label{sec:sdp}
In Linear Quadratic control the cost functions are known ahead of time and fixed,
$$
c_t(x, u) = x^\top Qx + u^\top R u,
$$
and the noise is i.i.d., $w_t\sim N(0, W)$. Given an instance of the LQ control problem defined by $(A, B, Q, R, W)$, the learner can obtain a strongly stable controller by solving the SDP relaxation for minimizing steady-state cost, proposed in \citep{cohen2018online}. For $\nu > 0$, the SDP is given by
\begin{equation*}
\begin{aligned}
& \underset{}{\text{minimize}}
& & J(\Sigma) = \begin{pmatrix}
Q & 0\\
0 & R
\end{pmatrix} \bullet \Sigma \\
& \text{subject to}
& & \Sigma_{xx} = \begin{pmatrix}
A & B
\end{pmatrix}\Sigma \begin{pmatrix}
A & B
\end{pmatrix}^\top + W, \ \ \Sigma = \begin{pmatrix}
\Sigma_{xx} & \Sigma_{xu}\\
\Sigma_{xu}^\top & \Sigma_{uu}
\end{pmatrix}.\\
& & & \Sigma \succeq 0,\ \Tr(\Sigma) \le  \nu.
\end{aligned}
\end{equation*}
Indeed, a strongly stable controller can be extracted from any feasible solution to the SDP, as guaranteed by the following lemma.
\begin{lemma}[Lemma 4.3 in \citep{cohen2018online}] Assume that $W\succeq \sigma^2 I$ and let $\kappa = \sqrt{\nu}/\sigma$. Let $\Sigma$ be any feasible solution for the SDP, 
then the controller $K = \Sigma_{xu}^\top \Sigma_{xx}^{-1}$ is $(\kappa, 1/2\kappa^2)$ strongly stable. \end{lemma}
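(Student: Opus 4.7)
The plan is to extract from the feasible $\Sigma$ a Lyapunov-type certificate for $A+BK$, using the SDP constraint as a matrix identity and the PSD-ness of $\Sigma$ via Schur complements. Throughout, I will write $P = \Sigma_{xx}$ and use that $\Sigma \succeq 0$ forces $\Sigma_{uu} - \Sigma_{xu}^\top P^{-1} \Sigma_{xu} \succeq 0$ (Schur complement).

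First I would bound the operator norm of $K = \Sigma_{xu}^\top P^{-1}$. The SDP constraint $\Sigma_{xx} = (A\ B)\Sigma(A\ B)^\top + W$ together with $W \succeq \sigma^2 I$ gives $P \succeq \sigma^2 I$, hence $\|P^{-1}\| \le \sigma^{-2}$. Since $\Sigma_{uu} \succeq \Sigma_{xu}^\top P^{-1}\Sigma_{xu} = K P K^\top$ and $\Tr(\Sigma_{uu}) \le \Tr(\Sigma) \le \nu$, the estimate $\|K\|^2 \le \|P^{-1}\|\cdot \|KPK^\top\| \le \sigma^{-2}\,\|\Sigma_{uu}\| \le \nu/\sigma^2 = \kappa^2$ gives $\|K\| \le \kappa$.

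Next, I would show a one-step contraction in the $P$-weighted norm. Expanding $(A+BK)P(A+BK)^\top$ with $K = \Sigma_{xu}^\top P^{-1}$ yields
\begin{equation*}
(A+BK)P(A+BK)^\top \eq A P A^\top + A\Sigma_{xu}B^\top + B\Sigma_{xu}^\top A^\top + B(\Sigma_{xu}^\top P^{-1}\Sigma_{xu})B^\top.
\end{equation*}
Comparing with the SDP constraint $P - W = APA^\top + A\Sigma_{xu}B^\top + B\Sigma_{xu}^\top A^\top + B\Sigma_{uu}B^\top$, the two expressions differ only in the last term, so
\begin{equation*}
(A+BK)P(A+BK)^\top \eq P - W - B\bigl(\Sigma_{uu} - \Sigma_{xu}^\top P^{-1}\Sigma_{xu}\bigr)B^\top \preceq P - \sigma^2 I,
\end{equation*}
where the inequality uses the Schur-complement PSD-ness and $W \succeq \sigma^2 I$.

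Finally, I would change basis by $H \eqdef P^{1/2}$ and set $L \eqdef H^{-1}(A+BK)H$, so $A+BK = HLH^{-1}$ by construction. The bounds $\|H\|^2 = \|P\| \le \Tr(\Sigma) \le \nu$ and $\|H^{-1}\|^2 \le \sigma^{-2}$ give $\|H\|\,\|H^{-1}\| \le \sqrt{\nu}/\sigma = \kappa$. Conjugating the contraction above yields
\begin{equation*}
L L^\top \eq P^{-1/2}(A+BK)P(A+BK)^\top P^{-1/2} \preceq I - \sigma^2 P^{-1} \preceq \lr{1 - \tfrac{\sigma^2}{\nu}} I \eq \lr{1 - \tfrac{1}{\kappa^2}} I,
\end{equation*}
so $\|L\| \le \sqrt{1 - 1/\kappa^2} \le 1 - 1/(2\kappa^2)$, completing the verification of $(\kappa, 1/(2\kappa^2))$ strong stability. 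The main conceptual step is noticing that a single feasible $\Sigma$ supplies both the similarity transform (via $P^{1/2}$) and the contraction slack (via the Schur complement gap $\Sigma_{uu} - \Sigma_{xu}^\top P^{-1}\Sigma_{xu}$); the rest is bookkeeping with norms and the $W \succeq \sigma^2 I$ lower bound.
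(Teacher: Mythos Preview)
Your proof is correct. Note, however, that the paper does not supply its own proof of this statement: it is quoted verbatim as Lemma~4.3 of \cite{cohen2018online} and used as a black box. Your argument is essentially the standard one given there---extract the Lyapunov certificate $P=\Sigma_{xx}$ from the SDP constraint, use the Schur complement of $\Sigma\succeq 0$ to control both $\|K\|$ and the slack $B(\Sigma_{uu}-\Sigma_{xu}^\top P^{-1}\Sigma_{xu})B^\top$, and diagonalize via $H=P^{1/2}$---so there is nothing to contrast.
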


\paragraph{Existence of Strongly Stable Controllers} Suppose a noiseless system $x_{t+1} = Ax_t + Bu_t$ can be driven to zero in $k$ steps with resetting cost $C\|x_1\|^2$. Thereom B.5 in \citep{cohen2018online} suggests that the SDP for the noisy system $x_{t+1} = Ax_t + Bu_t + w_t$ with $w_t\sim N(0, W)$ and $\nu = C\cdot \Tr(W)$ is feasible. Taking $W = I$, the system $(A, B)$ has a $(\sqrt{Cd_x}, 1/(2Cd_x))$ strongly stable controller. Lemma B.4 in \cite{cohen2018online} shows that under Assumption 1, $C = 3\kappa^2k^2\beta^{6k}$.


\ignore{
A linear dynamical system is given by
$$ x_{t+1} = A x_t + B u_t + \eps_t .$$
In online control a sequence of adversarially chosen convex cost functions are denoted $c_t(x_t,u_t)$. A linear controller sets controls of  the form 
 $u_t = K x_t $.

A non-stochastic contoller applies to LDS with adversarial cost sequence and perturbations. The regret for NSC controller $\A$ is defined to be 
$$  \regret(\A) = \E \left[ \sum_t c_t(x_t, u_t ) - \min_K \sum_t c_t(\hat{x}_t , K \hat{x}_t) \right], $$
where $u_t$ is the controller output at time $t$, i.e. 
$$ u_t = \A(x_{1:t},\mathcal{I}) . $$
Here $\mathcal{I}$ is any information known to the controller, such as the system or approximation thereof. 
}

\section{Algorithm and main theorem}
Now we describe our main algorithm for the black-box control problem, Algorithm
\ref{alg1_md}. Overall we use the explore-then-commit strategy, and split the algorithm into three phases. In phase 1, we identify the underlying system dynamics to within some accuracy with large controls. In phase 2, we extract a strongly stable controller for the estimated system using the SDP in Section \ref{sec:sdp}, and show that it is also strongly stable for the true system. We then alleviate the effects of using large controls by decaying the system to a state with constant magnitude. Finally in phase 3, we invoke Algorithm 1 in \cite{hazan2019nonstochastic} or Algorithm 3 in \citet{simchowitz2020making} to achieve sublinear regret. 

\begin{algorithm}
\caption{Nonstochastic Control with Black-box Access}
\label{alg1_md}
\begin{algorithmic}[1]
\STATE Input: horizon $T$, $k, \kappa$ such that the system $(A, B)$ is $(k, \kappa)$ strongly controllable, $\beta \ge 1$ such that $\|A\|, \|B\|\le \beta$. 
\STATE Set $\kappa' = \sqrt{Cd_x}$, $\gamma' = 1/(2\kappa'^2)$, where $C = 3\kappa^2k^2\beta^{6k}$.
\STATE \underline{Phase 1:} \textbf{Black-box System Identification}
\STATE Set $\eps = \frac{\gamma'^2}{10^5d_x^2\kappa'^8}$, $\lambda = 8\beta$.
\STATE $(\hat{A},\hat{B}) \leftarrow \text{AdvSysId}(\eps, \lambda, x_1, k, \kappa)$ for $T_1 =  d_u(k+1)+1 $ rounds. 
\STATE \underline{Phase 2:} \textbf{Stable Controller Recovery}
\STATE $\hat{K} \leftarrow \text{ControllerRecovery}(\hat{A}, \hat{B}, \eps, \kappa', \gamma')$, set $\tilde{\kappa} =\frac{2\kappa'^2d_x^{1/2}}{\gamma'^{1/2}},\ \tilde{\gamma} = \frac{\gamma'}{16d_x\kappa'^4}.$
\STATE Execute $\hat{K}$ for $T_2=\max\{\frac{\ln(\tilde{\gamma} \|x_{T_1}\|)}{\tilde{\gamma}}, 0\}$ rounds .
\STATE \underline{Phase 3:} \textbf{Nonstochastic Control}
\STATE Set $\kappa^* =  4\tilde{\kappa}^2k^2\beta^{2k}\kappa$, $W =2\kappa^*/\tilde{\gamma}$.
\STATE General convex costs: call Algorithm 1 in \cite{hazan2019nonstochastic} with inputs $\hat{K}$, $\kappa^*$, $\tilde{\gamma}$, $W$ for $T - T_1 - T_2$ rounds.
\STATE Strongly convex costs: call Algorithm 3 in \citet{simchowitz2020making} for $T - T_1 - T_2$ rounds.
\end{algorithmic}
\end{algorithm}

Our main theorem below is stated using asymptotic notation that hides constants independent of the system parameters, and uses $\mL$ for an upper bound on the system parameters as defined in section \ref{sec:notations}. Exact specification of the constants appear in the proofs. 
\begin{theorem}\label{thm:main}
Under Assumptions 1, 2, 3, with high probability the regret of Algorithm $\ref{alg1_md}$ is at most
$$ \regret_T(\A_1) \leq 2^{O(\mL \log \mL)} + \tilde{O}(\poly(\mL, \kappa^*) T^{2/3} ) .$$ If the loss functions are in addition $\alpha$-strongly convex, and without loss of generality assuming $\tilde{\kappa} \ge \tilde{\gamma}^{-1}$, the regret of Algorithm $\ref{alg1_md}$ is at most
$$ \regret_T(\A_1) \leq 2^{O(\mL \log \mL)} + \tilde{O}(\poly(\mL, \tilde{\kappa}, \alpha^{-1}) \sqrt{T} ) .$$
This is composed of 
\begin{enumerate}[leftmargin=0.8cm]
\setlength\itemsep{0.1em}
    \item Phase 1: after $T_1$ rounds we have $\|x_{T_1}\|^2 \le 2^{O(\mL \log \mL)} $. The total cost is at most 
    $  2^{O(\mL \log \mL)} $.
     \item
    Phase 2: Computing $\hat{K}$ has zero cost. Decaying the system has total cost at most $O\big(G \tilde{\kappa}^4\|x_{T_1}\|^3\tilde{\gamma}^{-3}\big), $ where
    $\tilde{\kappa}$, $\tilde{\gamma}$ are as defined in the algorithm. 
    This phase has total cost $2^{O(\mL \log \mL)}$.
    \item Phase 3: Nonstochastic control with a known strongly stable controller for general convex costs incurs regret at most $\tilde{O}(\poly(\mL, \kappa^*)(T-T_1-T_2)^{2/3})$, with high probability. If the cost functions are $\alpha$-strongly convex, with high probability the regret is bounded by 
     
     $\tilde{O}(\poly(\tilde{\kappa}, \mL, \alpha^{-1})\sqrt{T-T_1-T_2})$.
\end{enumerate}
\end{theorem}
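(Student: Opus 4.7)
}

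The plan is to handle each of the three phases separately and then add the contributions, using the fact that the comparator's cost over the (constant-length) initial phases contributes only lower-order terms to the regret. Concretely, writing $J_T(\A_1) = J_{[1,T_1]} + J_{[T_1+1, T_1+T_2]} + J_{[T_1+T_2+1,T]}$, I will argue that the first two summands are each at most $2^{O(\mL \log \mL)}$, that the comparator's cost on the first two phases is at most $\poly(\mL)\cdot (T_1+T_2) = 2^{O(\mL \log \mL)}$ (since any strongly stable comparator produces $\poly(\mL)$-bounded states and controls under Assumption 2 and Assumption 3), and that the third summand contributes the stated $T^{2/3}$ or $\sqrt{T}$ regret through a black-box call to \citep{hazan2019nonstochastic} or \citep{simchowitz2020making}.

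For \textbf{Phase 1}, the key facts I would use are (i) the explicit choice $\lambda = 8\beta$ forces states to grow at rate at most $(\|A\| + \|B\|\lambda)^{T_1} \le (9\beta^2)^{O(kd_u)}$, which gives $\|x_{T_1}\|^2 \le 2^{O(\mL \log \mL)}$, and (ii) the guarantee on \textsc{AdvSysId} (established earlier in the paper) ensures that the returned $(\hat A,\hat B)$ satisfies $\|\hat A - A\|, \|\hat B - B\| \le \eps$ for the $\eps$ set in the algorithm. The per-step cost along phase 1 is bounded via Assumption 3 by $G\cdot(\|x_t\|^2+\|u_t\|^2)$, which I sum geometrically over $T_1 = d_u(k+1)+1$ rounds. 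Since $T_1$ is polynomial in $\mL$ and each term is at most $\poly(\mL)\cdot 2^{O(\mL\log\mL)}$, the total phase 1 cost is absorbed into $2^{O(\mL\log\mL)}$.

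For \textbf{Phase 2}, I would proceed in two steps. First, transferring strong stability: the SDP feasibility argument of Section~\ref{sec:sdp} applied to $(\hat A, \hat B)$ with noise covariance $I$ returns $\hat K = \Sigma_{xu}^\top \Sigma_{xx}^{-1}$ that is $(\kappa', \gamma')$ strongly stable for $(\hat A, \hat B)$; then, because $\|\hat A - A\|, \|\hat B - B\| \le \eps$ with $\eps$ chosen precisely to make the perturbation in $A+BK$ smaller than the spectral gap $\gamma'$, a standard perturbation argument on $H L H^{-1}$ shows $\hat K$ is $(\tilde \kappa, \tilde\gamma)$ strongly stable for the true $(A,B)$ with the parameters specified in the algorithm. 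Second, the decay analysis: under a $(\tilde \kappa, \tilde \gamma)$ strongly stable controller the unforced closed-loop contracts as $\|x_{T_1 + t}\| \lesssim \tilde\kappa (1-\tilde\gamma)^t \|x_{T_1}\| + \tilde\kappa/\tilde\gamma$, so the geometric sum of $G(\|x_t\|^2+\|u_t\|^2) \lesssim G\tilde\kappa^2 \|x\|^2$ gives total decay cost $O(G\tilde\kappa^4 \|x_{T_1}\|^3 \tilde\gamma^{-3})$; plugging in $\|x_{T_1}\|^2 \le 2^{O(\mL\log\mL)}$ keeps this in $2^{O(\mL\log\mL)}$. I also need to check that $T_2$ rounds suffice to reduce $\|x\|$ to the threshold $W = 2\kappa^*/\tilde\gamma$ required by Phase 3; this is exactly the logarithmic choice of $T_2$ in the algorithm.

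For \textbf{Phase 3}, I would invoke the regret guarantees of \citep{hazan2019nonstochastic} (general convex) and \citep{simchowitz2020making} (strongly convex) as black boxes. The two hypotheses to verify are: (a) $\hat K$ is $(\tilde\kappa, \tilde\gamma)$-strongly stable for $(A,B)$, which is the output of phase 2; and (b) the closed-loop system $(A+B\hat K, B)$ is $(k, \kappa^*)$-strongly controllable, which I would derive from Assumption~\ref{a:controllable} together with the explicit bound $\kappa^* \le 4\tilde\kappa^2 k^2 \beta^{2k}\kappa$ justified in Section~\ref{sec:nonstochastic}; and (c) the state entering phase 3 is bounded by $W$, ensured by the choice of $T_2$. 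Putting the three phase bounds together and noting $T-T_1-T_2 \le T$ yields the advertised $2^{O(\mL \log \mL)} + \tilde O(\poly(\mL,\kappa^*)\,T^{2/3})$ (resp.\ $\sqrt{T}$) regret. The \textbf{main obstacle} I expect is the robustness-to-perturbation step in phase 2: the transition from ``$\hat K$ stabilizes $(\hat A, \hat B)$'' to ``$\hat K$ stabilizes $(A, B)$ with only slightly degraded parameters'' forces the choice of $\eps$ to scale as $\gamma'^2/(\kappa')^8$, which in turn forces the exponential sample-complexity price in phase 1 via $\kappa' = \sqrt{3\kappa^2 k^2 \beta^{6k} d_x}$; carefully threading these dependencies so that they land inside $2^{O(\mL \log \mL)}$ rather than in the $T$-dependent term is the delicate part of the argument.
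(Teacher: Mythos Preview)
Your overall decomposition and the Phase~3 black-box reduction match the paper exactly, and you have correctly identified the perturbation step in Phase~2 as the place where the exponential constant originates. Two points need correction, though neither derails the proof.

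\textbf{Phase 1 state bound.} The estimate $\|x_t\|\le (\|A\|+\|B\|\lambda)^{T_1}$ is not right: the controls in \textsc{AdvSysId} are not of size $\lambda$ but $\xi_i=\lambda^{t-1}\eps_0^{-i}$, so the state grows like $\lambda^{t-1}\eps_0^{-i}$ (this is exactly Claim~\ref{claim: magnitude_1}), and after $T_1$ rounds $\|x_{T_1}\|^2+\|u_{T_1}\|^2 \le 2(\lambda^{4k}\eps_0^{-2})^{d_u}$. The $\eps_0^{-d_u}$ factor, not merely $(9\beta^2)^{kd_u}$, is what carries the $\kappa'^{O(d_u)}\sim\beta^{O(kd_u)}$ dependence you allude to at the end of your plan; you should quote this bound rather than the one you wrote. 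After substitution it still lands in $2^{O(\mL\log\mL)}$, so your conclusion survives.

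\textbf{Phase 2 perturbation.} The paper applies the perturbation lemma (Lemma~\ref{lem:close_lemma}) \emph{twice}, not once. The first application transfers the $(\kappa',\gamma')$ strongly stable controller for $(A,B)$ (whose existence is guaranteed by controllability, Section~\ref{sec:sdp}) to a $(\kappa',\gamma'-2\eps\kappa'^2)$ strongly stable controller for $(\hat A,\hat B)$; this is what makes the SDP \emph{feasible} with the stated $\nu$. The SDP then returns $\hat K$ that is $(\sqrt\nu,1/2\nu)$ strongly stable for $(\hat A,\hat B)$ --- not $(\kappa',\gamma')$ as you wrote --- and only then does the second perturbation application give $(\tilde\kappa,\tilde\gamma)$ stability for $(A,B)$. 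Without the first step you have no argument that the SDP on the estimated system has any feasible point at all.
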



\section{Analysis Outline}
We provide an outline of our regret analysis in this section. Formal statements are in the appendix.

\subsection{Black-box system identification } \label{sec:phase1}
In this phase we obtain estimates of the system $\hat{A}, \hat{B}$ without knowing a stabilizing controller. Recall the definition of $C_k=[B, AB, \ldots, A^{k-1}B]$, and let $Y = [AB\ A^2B\ \cdots\ A^kB]$. The procedure AdvSysId (Algorithm \ref{alg:sysid_bias}) consists of two steps. In the first step, we estimate each $A^jB$ for $j = 0\ldots, k$ (in particular we obtain $\hat{B}$ close to $B$), and guarantee that $\|C_k - C_0\|_F$, $\|Y - C_1\|_F$ are small. In the second step, we take $\hat{A}$ to be the solution to the system of equations in $X$: $XC_0 = C_1$.

For the first step, the algorithm estimates matrices $A^jB$ by using controls that are scaled standard basis vectors once every $k+1$ iterations, and using zero controls for the iterations in between. The state evolution satisfies
\[ x_{t+1} = A^t x_1 + \sum_{i=1}^t (A^{t-i}Bu_i + A^{t-i}w_i).\]
Intuitively, we choose scaling factors $\xi_i$ such that $j$ iterations after a non-zero control $\xi_i\cdot e_i$ is used, the state is dominated by $\xi_i A^{j-1}Be_i$, the scaled $i$-th column of $A^{j-1}B$. In the algorithm $\hat{M}_j$ is the concatenation of estimates for $A^jBe_i$, and we concatenate the $\hat{M}_j$'s to obtain $C_0, C_1$. We show in Lemma \ref{lem:frob} that $\|\hat{M}_j - A^jB\|_F \le O(d_u^2k\lambda^{2k}\eps_0)$, which implies the closeness of $C_0, C_1$ to $C_k, Y$, respectively. 

Under the assumption that $(A, B)$ is $(k, \kappa)$ strongly controllable, $A$ is the unique solution to the system of equations in $X$: $XC_k = Y$. By perturbation analysis of linear systems, the solution to the system of equations $XC_0 = C_1$ is close to $A$, as long as $\|C_0 - C_k\|_F, \|C_1 - Y\|_F$ are sufficiently small. By our choice of $\eps_0$, we conclude that $\|\hat{A} - A\|\le \eps$, $\|\hat{B} - B\|\le \eps$. Lemma \ref{lem:cost_1} shows that the total cost of this phase is bounded by $2^{O(\mL\log \mL)}$.
\begin{algorithm}[ht]
\caption{AdvSysId }
\label{alg:sysid_bias}
\begin{algorithmic}[1]
\STATE Input: accuracy parameter $\eps < 1/2$, $\|x_1\|\le 1$. Let $\lambda \ge 1$ be such that $ \|A\|,\|B\| \leq \frac{1}{4} \lambda - 1$, $(k, \kappa)$ such that the system $(A, B)$ is $(k, \kappa)$ strongly controllable.
\STATE Set $\eps_0 = \frac{\eps}{10^2d_u^2k^2\lambda^{3k}d_x\kappa^{1/2}}$.
\FOR{$t = 1, \ldots, (k+1)d_u$}
\STATE observe $x_t$.
\IF {$t = 1 \pmod{k+1}$}
\STATE Let $i = (t-1)/(k+1) + 1$.
\STATE control with $u_t = \xi_i \cdot e_i$ for $\xi_i =  \lambda^{t-1}\eps_0^{-i}$, where $e_i$ is the $i$-th standard basis vector.
\ELSE
\STATE control with $u_t = 0.$
\ENDIF
\STATE pay cost $c_t(x_t,u_t)$.
\ENDFOR
\STATE For $0\le j\le k$, $1\le i\le d_u$, define $l(i, j) = (i-1)(k+1)+j+2$. Let $x_i^j = x_{l(i, j)}$. Construct
 $$\hat{M_j} =  [\frac{x_1^j}{\xi_1}\  \frac{x_2^j}{\xi_2}\ \cdots \ \frac{x_{d_u}^j}{\xi_{d_u}}]\in \mathbb{R}^{d_x\times d_u}.  $$
\STATE Define $C_0 = [\hat{M}_0\ \hat{M}_1\ \cdots\ \hat{M}_{k-1}]$, $C_1 = [\hat{M}_1\ \hat{M}_2\ \cdots\ \hat{M}_{k}]\in \mathbb{R}^{d_x\times d_uk}$. 
\STATE Output $\hat{A} = C_1C_0^\top(C_0C_0^\top)^{-1}$, $\hat{B} = \hat{M}_0$.
\end{algorithmic}
\end{algorithm}

\subsection{Computing a stabilizing controller} \label{sec:phase2}
The goal of phase 2 is to recover a strongly stable controller from system estimates obtained in phase 1 by solving the SDP presented in Section \ref{sec:sdp}. The key to our task is setting the trace upper bound $\nu$ appropriately, so that the SDP is feasible and the recovered controller is strongly stable even for the original system. We justify our choice of $\nu$ in Lemma \ref{lem:sdp}, and show that by our choice of $\eps$, $\hat{A}, \hat{B}$ are sufficiently accurate and $\hat{K}$ is $(\tilde{\kappa}, \tilde{\gamma})$ strongly stable for the true system. We remark that \cite{simchowitz2020naive} is an alternative procedure for recovering $K$, given system estimates.

\begin{algorithm}[ht]
\caption{ControllerRecovery}
\label{alg:k_recovery}
\begin{algorithmic}[1]
\STATE Input: $\kappa'$, $\gamma'$ such that there exists $K$ that is $(\kappa', \gamma')$ strongly stable for $(A, B)$; accuracy parameter $\eps$, and $\hat{A}$, $\hat{B}$ such that $\|A - \hat{A}\| \le \eps$, $\|B - \hat{B}\| \le \eps$.
\STATE Set $\nu = \frac{2\kappa'^4d_x}{\gamma' - 2\eps\kappa'^2}$.
\STATE Solve the following SDP:
\begin{equation*}
\begin{aligned}
& \underset{}{\text{minimize}}
& & 0 \\
& \text{subject to}
& & \Sigma_{xx} = \begin{pmatrix}
\hat{A} & \hat{B}
\end{pmatrix}\Sigma \begin{pmatrix}
\hat{A} & \hat{B}
\end{pmatrix}^\top + I,\ \text{where}\\
& & & \Sigma = \begin{pmatrix}
\Sigma_{xx} & \Sigma_{xu}\\
\Sigma_{xu}^\top & \Sigma_{uu}
\end{pmatrix},\ \Sigma \succeq 0,\ \Tr(\Sigma) \le  \nu.
\end{aligned}
\end{equation*}
\STATE Denote a feasible solution as $\hat{\Sigma} = \begin{pmatrix}
\hat{\Sigma}_{xx} & \hat{\Sigma}_{xu}\\
\hat{\Sigma}_{xu}^\top & \hat{\Sigma}_{uu}
\end{pmatrix},
$ return
$\hat{K} = \hat{\Sigma}_{xu}^\top\hat{\Sigma}_{xx}^{-1}.$
\end{algorithmic}
\end{algorithm}
\subsubsection{Decaying the system}

In phase 1 the algorithm uses large controls to estimated the system, and after $T_1$ iterations the state might have an exponentially large magnitude. Equipped with a strongly stable controller, we decay the system so that the state has a constant magnitude before starting phase 3. We show in Lemma \ref{lem:decay} that following the policy $u_t = \hat{K}x_t$ for $T_2$ iterations decays the state to at most $2\tilde{\kappa}/\tilde{\gamma}$ in magnitude.

\subsection{Nonstochastic control } \label{sec:nonstochastic}

Given a $(\tilde{\kappa}, \tilde{\gamma})$ strongly stable controller $\hat{K}$ for the true system, if the costs are general convex functions, we run Algorithm 1 in \citep{hazan2019nonstochastic} (Algorithm \ref{alg:gpc_md} in the appendix) which achieves sublinear regret. By Lemma \ref{lem:controllable}, the system $(A+B\hat{K}, B)$ is $(k, 4\tilde{\kappa}^2k^2\beta^{2k}\kappa)$ strongly controllable.

If we start Algorithm \ref{alg:gpc_md} from $t=T_1 + T_2$, the setting is consistent with the nonstochastic control setting where the noise is bounded by $\|x_{T_1+T_2}\|$, and with total iteration number $T-T_1 - T_2$.
By Theorem 12 in \cite{hazan2019nonstochastic}, setting $\kappa^* =  4\tilde{\kappa}^2k^2\beta^{2k}\kappa$, $W =2\kappa^*/\tilde{\gamma}$, and noticing that $\tilde{\gamma}^{-1}=\poly(\kappa^*)$, with high probability, our total regret is at most
$
\tilde{O}(\poly(\kappa^*, k, d_x, d_u, G)T^{2/3}).$

If the cost functions are $\alpha$-strongly convex, we use Algorithm 3 in \citet{simchowitz2020making}. Note that this algorithm does not need controllability assumptions on the system. By Theorem 3.2 in \citet{simchowitz2020making}, and without loss of generality assuming $\tilde{\kappa} \ge \tilde{\gamma}^{-1}$, with high probability the total regret of this phase is bounded by 
$
\tilde{O}(\poly(\tilde{\kappa}, \beta, d_x, d_u, G, \alpha^{-1})\sqrt{T}).
$
\section{Lower Bound on Black-box Control}

In this section we prove that with high probability, any randomized black-box control algorithm incurs a loss which is exponential in the system dimension, even for noiseless LTI systems. Our lower bound is partially based on the construction in \citet{gradientlowerbound}. In addition, we provide a lower bound for deterministic black-box control algorithms in Appendix \ref{a:lower_bound_det}, which can be extended to degenerate systems and has improved constants. We first define the relevant concepts.

\begin{definition}[Black-box Control Algorithm] \label{defn:bb-control-alg}
A randomized black-box control algorithm $\A$ has random string $\sigma_t$ and outputs a control $u_t$ at each iteration $t$, where $u_t$ is a function of past information and the random string, i.e. $u_t = \A(x_1,...,x_t, c_1, \ldots, c_t, u_1, \ldots, u_{t-1}, \sigma_t)$. 
\end{definition}

\begin{definition}[Control Problem Instance]
\label{defn:control-problem}
An instance of a control problem is defined by a noiseless system $(A, B)$, an initial state $x_1$, and a sequence of oblivious convex cost functions $\{c_t\}$.
\end{definition}

\begin{theorem} \label{thm:lowerbound_rand}
Let $\A$ be a randomized control algorithm as per Definition \ref{defn:bb-control-alg}. Then there exists a control problem instance with system dimension $d_x$, where the system is stabilizable and $(1, 1)$-strongly controllable, such that with $T = d_x/8$, with probability at least $1 - \exp(-\frac{d_x}{100})$, we have 
$$ \regret_T(\A)\ge 2^{\Omega(\mL)} . $$
\end{theorem}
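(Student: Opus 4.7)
The plan is to establish the lower bound via Yao's minimax principle: first exhibit a distribution over noiseless control instances on which every \emph{deterministic} black-box algorithm incurs total cost $2^{\Omega(d_x)}$ with high probability, and then transfer the bound to randomized algorithms on some fixed instance. The construction parallels the ``random rotation'' technique behind oracle lower bounds in convex optimization: drawing $A$ as a scaled Haar-random orthogonal matrix hides the informative directions of the dynamics, so that each interaction reveals $A$'s action on at most one new direction, while the remaining $d_x - O(t)$ directions continue to make the state blow up geometrically.

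Concretely, I would set $d_u = d_x$, $B = I$, $x_1 = e_1$, $c_t(x, u) = \|x\|^2$, and $A = \beta U$ with $\beta = 2$ and $U$ a Haar-uniform $d_x \times d_x$ orthogonal matrix. The system is noiseless, $(1, 1)$-strongly controllable since $C_1 = B = I$ gives $\|(BB^\top)^{-1}\| = 1$, stabilizable (e.g.\ by $K = -A$), and the loss is convex, $O(1)$-Lipschitz on bounded sets, with $c_t(0, 0) = 0$. The optimal policy in hindsight, knowing $U$, plays $u_t = -\beta U x_t$, so $x_{t+1} = 0$ for all $t \ge 1$ and its cumulative cost is $\|x_1\|^2 = 1$.

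The core of the argument is a deferred-decisions analysis. Let $V_{t-1} \equiv \mathrm{span}\{x_1, \ldots, x_{t-1}\}$ and $\eta_t \equiv \|(I - P_{V_{t-1}}) x_t\|$, where $P_V$ denotes orthogonal projection onto $V$. At step $t$ the algorithm has seen $x_1, \ldots, x_t$ and $u_1, \ldots, u_{t-1}$ and can reconstruct $A x_s = x_{s+1} - u_s$ for $s < t$; thus it knows $A$ restricted to $V_{t-1}$, but nothing about $A|_{V_{t-1}^\perp}$. By rotation-invariance of the Haar measure, conditional on the full observed history, the restriction of $U$ to $V_{t-1}^\perp$ is still Haar-uniform onto $(U V_{t-1})^\perp$. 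Consequently, whatever $u_t$ the algorithm chooses, the unknown component $A(I - P_{V_{t-1}}) x_t = \beta U(I - P_{V_{t-1}}) x_t$ is distributed uniformly on a sphere of radius $\beta \eta_t$ in a subspace of codimension $t-1$. Writing $x_{t+1} = A(I - P_{V_{t-1}}) x_t + y_t$, where $y_t \equiv A P_{V_{t-1}} x_t + u_t$ is the vector the algorithm effectively chooses, and projecting onto $V_t^\perp$, sphere concentration yields
$$
\eta_{t+1}^2 \;=\; \|(I - P_{V_t}) x_{t+1}\|^2 \;\ge\; \alpha \,\beta^2 \eta_t^2
$$
for an absolute constant $\alpha > 1/4$, with probability $1 - \exp(-\Omega(d_x))$, provided $t \le d_x/8$, since the intersection $(U V_{t-1})^\perp \cap V_t^\perp$ has dimension at least $d_x - 2t \ge 3 d_x/4$. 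A union bound across $t = 1, \ldots, T = d_x/8$ gives $\eta_t^2 \ge (\alpha \beta^2)^{t-1}$ for all such $t$ with probability at least $1 - T\exp(-\Omega(d_x)) \ge 1 - \exp(-d_x/100)$ for suitable constants, and since $\alpha \beta^2 > 1$ the algorithm's total cost is $\sum_{t=1}^T \|x_t\|^2 \ge \sum_{t=1}^T \eta_t^2 = 2^{\Omega(d_x)}$, against an optimal benchmark of $O(1)$.

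Yao's minimax principle then transfers the bound to randomized algorithms: since every deterministic algorithm incurs cost $2^{\Omega(d_x)}$ on a Haar-random $U$ with probability $1 - \exp(-\Omega(d_x))$, a standard Markov-style argument produces, for any randomized $\A$, a \emph{fixed} orthogonal $U^\star$ for which $\A$ suffers the same exponential cost with probability at least $1 - \exp(-d_x/100)$ over its internal random string $\sigma$. The main obstacle is proving the per-step recursion $\eta_{t+1}^2 \ge \alpha \beta^2 \eta_t^2$ \emph{uniformly} in the algorithm's (possibly adversarial) $u_t$: the key observation is that the $V_t^\perp$-component of $y_t$ is a deterministic vector given the history, while $(I - P_{V_t}) A (I - P_{V_{t-1}}) x_t$ is uniform on a sphere in an $\Omega(d_x)$-dimensional subspace, so any fixed ``guess'' $y_t$ can cancel only a vanishing fraction of its squared norm; this recursion is what powers the exponential blowup in the state, and matches the lower bound $2^{\Omega(\mL)}$ since $\mL \ge d_x$ in the construction.
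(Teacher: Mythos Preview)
Your approach is essentially the paper's: draw a rotationally invariant random $A$, run a deferred-decisions argument showing that the component of $x_{t+1}$ orthogonal to the span of past observations is a fresh random vector whose norm grows geometrically regardless of $u_t$, then invoke Yao to pass from deterministic to randomized algorithms. The skeleton, the recursion $\eta_{t+1}^2 \gtrsim c\,\eta_t^2$ with $c>1$, and the probabilistic bookkeeping are all the same.

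There are two genuine but minor differences worth noting. First, the paper takes $A$ Gaussian (entries i.i.d.\ $N(0,\gamma/d_x)$) rather than Haar-orthogonal; this makes the conditional-distribution computation slightly cleaner (conditioning on some columns leaves the rest i.i.d.\ Gaussian, so the ``fresh'' part $(V_t^\perp)^\top x_{t+1}$ is exactly $c_t + z_t$ with $z_t$ Gaussian), at the cost of needing a separate high-probability bound on $\|A\|$ to control $\mL$. Your Haar choice gives $\|A\|=\beta$ deterministically but makes the conditional law a Haar measure on a smaller orthogonal group, which is fine but a bit heavier. Second, the paper grants the algorithm the extra observations $Au_1,\ldots,Au_{t-1}$ and accordingly takes $V_t$ to span both states \emph{and} controls; this simplifies the argument into a clean ``two-queries-per-round'' oracle model (their Corollary~\ref{cor:lower_bound_construction}), whereas you track only $\mathrm{span}(x_1,\ldots,x_t)$ and must argue directly that $u_t$ cannot cancel the random component. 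Both routes work.

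One imprecision: the vector $(I-P_{V_t})A(I-P_{V_{t-1}})x_t$ is not literally uniform on a sphere (it is the projection onto $V_t^\perp$ of a vector uniform on a sphere in $(UV_{t-1})^\perp$). What you actually need, and what your sketch implicitly uses, is that (i) its squared norm concentrates near $\beta^2\eta_t^2$ because $V_t$ removes only $O(t)$ of the $\Omega(d_x)$ directions, and (ii) its inner product with any fixed $z$ is $O(\|z\|/\sqrt{d_x})$ with high probability. With these two facts the quadratic $\|g+z\|^2 \ge \|g\|^2 + \|z\|^2 - O(\beta\eta_t\|z\|/\sqrt{d_x})$ is minimized at $\|g\|^2 - O(\beta^2\eta_t^2/d_x)$, giving $\alpha$ arbitrarily close to $1$ for large $d_x$, well above the threshold $1/4$ you need for $\beta=2$.
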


\begin{proof}
The proof follows from the intuition that a matrix with i.i.d. random Gaussian entries is rotation invariant, and therefore for such a matrix of dimension $d$, one needs to observe at least $O(d)$ matrix-vector products to "learn" it.
\paragraph{The construction.} Let $x_1 = e_1$, and $c_t(x, u) = \|x\|^2 + \|u\|^2$ for all $t$. Consider the noiseless system $x_{t+1} = Ax_t + u_t $, where $A\sim N(d_x,d_x,\frac{\gamma}{d_x})$, for $N(a,b,c)$ describing a distribution over matrices of dimension $a \times b$, with each entry being i.i.d. normally distributed with mean zero and variance $c$. Note that this system is $(1, 1)$-strongly controllable, and $-A$ is a stabilizing controller that gives constant regret. Moreover, with high probability, the system has bounded size: by Corollary 35 in \citep{vershynin2011introduction}, with probability at least $1-2\exp(-\frac{d_x}{2})$, $\|A\| \le 3\sqrt{\gamma}$. Let $\mL(A)$ denote the system upper bound of the control problem instance defined by our choice of $x_1, \{c_t\}$, and $A$. Under this event, we have $\mL(A) \le 2d_x + 4 + 1 + 3\sqrt{\gamma}\le 4d_x$ for $\gamma = 40$ and $d_x$ large.
\\

In this proof, we first consider deterministic black-box control algorithms and show that with high probability, the total cost of any deterministic control algorithm is exponential in the system dimension. Then we treat a randomized algorithm as a distribution over deterministic algorithms, and use a probabilistic argument to show that there exists a hard control problem for every randomized algorithm. We define the equivalent information model for deterministic control algorithms below.

\paragraph{Information model.} At every iteration the controller observes $x_t$, then computes $u_t$ as a deterministic function of $x_1, x_2, \ldots, x_t$, $u_1, u_2, \ldots u_{t-1}$, and then observes $x_{t+1} = Ax_t + u_t$. Without loss of generality, we can assume that the controller also observes $Au_1, \ldots, Au_{t-1}$ before computing $u_t$, but does not act on this information. Then the states of our dynamical system can be seen as queries and observations in the following information model: a player makes deterministic, adaptive queries defined by vectors $x_1, u_1, x_2, u_2, \ldots, x_{t-1}, u_{t-1}, x_t, u_t$ and observes $Ax_1, Au_1, \ldots, Ax_{t-1}, Au_{t-1}, Ax_t, Au_t$. Each pair of queries $x_t, u_t$ are deterministic functions of previous queries and observations: $x_1, \ldots, x_{t-1},$ $u_1, \ldots, u_{t-1}, Ax_1, \ldots, Ax_{t-1}. Au_1, \ldots, Au_{t-1}$. Note that even though $u_t$ can depend on $x_t$, we have $x_t = Ax_{t-1}+u_{t-1}$, so without loss of generality we can assume $u_t$ only depends on previous queries and observations. \\

Under this information model, for every $x_t$, there exists a subspace $(V_{t-1}^\perp)^\top$ such that $(V_{t-1}^\perp)^\top x_t$ has a random component. Importantly, the subspace only depends on the queries and observations seen so far and not on any future information. Further, we can show that with high probability, the magnitude of the random component grows exponentially with time. The following two lemmas make this precise.

 \begin{lemma} \label{lem:distr}
Let $T = d_x/8$. There exists a sequence of orthonormal matrices $V_1, \ldots, V_{T}$, such that for $t\in [T]$, $V_t$ only depends on $x_1, \ldots, x_t, u_1, \ldots, u_t$ and they satisfy the following condition: \\Let $r_t$ denote the rank of $\text{span}(x_1, \ldots, x_t,$ $ u_1, \ldots, u_t)$, and denote the first $r_t$ columns of $V_t$ as $V_t^{\|}$, and the last $d-r_t$ columns of $V_t$ as $V_t^\bot$.
Let 
$h_t = (V_{t-1}^{\bot})^\top x_t$, then for all $t\in[T]$, conditioned on $x_1, x_2, \ldots, x_t,$ $u_1, u_2, \ldots, u_{t}, Au_1, \ldots, Au_{t-1}$, we have $(V_{t}^\bot)^\top x_{t+1} = c_t + z_t$, where the coordinates of $z_t$ are i.i.d. normally distributed, i.e. $z_t(i) \sim N(0, \frac{\gamma \|h_t\|^2}{d})$.
\end{lemma}

\begin{lemma}\label{lem:lower_bound_magnitude}
Let $V_1, \ldots, V_T$ be as in Lemma \ref{lem:distr}, and $T = d_x/8$. Let $h_t = (V_{t-1}^{\bot})^\top x_t$, with probability at least $1-\exp(-\frac{d_x}{25})$, conditioned on $x_1, x_2, \ldots, x_t, u_1, u_2, \ldots, u_t$, $Au_1, \ldots, Au_{t-1}$, we have $\|(V_{t}^\bot)^\top x_{t+1}\|^2 \ge \frac{\gamma \|h_t\|^2}{20}$.
\end{lemma}

 Consider the construction of matrices $V_1, V_2, \ldots, V_T$ as in Lemma \ref{lem:distr}. Then conditioned on $x_1, u_1,$ we have $h_1 = (V_0^{\bot})^\top x_1 = x_1$, and $\|h_1\| = 1$. Here $u_1$ can depend on $x_1$ because $x_1$ is independent of $A$. By Lemma \ref{lem:lower_bound_magnitude}, for $t\le T$, conditioned on $x_1, \ldots, x_t$, $u_1, \ldots, u_t$, $Au_1, \ldots, Au_{t-1}$, with probability at least $1 - \exp(-\frac{d_x}{25})$, we have $\|h_{t+1}\|^2 \ge 2\|h_t\|^2$ with our choice of $\gamma$. Therefore, with probability at least $(1 - \exp(-\frac{d_x}{25}))^{T-1}$, $\|h_T\|^2 \ge 2^{T-1}$. Note that $\|x_T\|^2  = \|V_{T-1}x_T\|^2 \ge \|V_{T-1}^\bot x_T\|^2 = \|h_T\|^2 \ge 2^{d_x/8-1}$. Since for small $\eps$, we have $(1-\eps)^t \ge 1-2t\eps$, we have $(1 - \exp(-\frac{d_x}{25}))^{\frac{d_x}{8}-1} \ge 1-\frac{d_x}{4}\exp(-\frac{d_x}{25}) \ge 1 - \exp(-\frac{d_x}{50})$ for $d_x$ large. Therefore with high probability, the total cost of any deterministic black-box control algorithm $\A$ over $T$ iterations is at least $2^{d_x/8 - 1}$ by our choice of cost functions. Note that this result holds with any realization of $Au_1, Au_2, \ldots, Au_T$. Since there exists a stabilizing controller that incurs constant cost, we conclude that $\regret_T(\A)\ge 2^{\Omega(d_x)}$ with probability at least $1 - \exp(-\frac{d_x}{50})$, for any deterministic $\A$.
 
 Now we consider randomized control algorithms. For any randomized algorithm $\A_{\text{rand}}$, its randomness is independent of the distribution over the system, and can be considered as a random string whose value is chosen before the start of the algorithm. Let $\sigma_T$ denote the randomness of $\A_{\text{rand}}$ over $T$ iterations, and for any value $b_T$ of $\sigma_T$, let $\A_{\text{rand}}(b_T)$ denote the algorithm which is $\A_{\text{rand}}$ with $\sigma_T$ fixed to $b_T$. Then $\A_{\text{rand}}(b_T)$ is a deterministic algorithm. Let $\regret_T(\A_{\text{rand}}(b_t), A)$ denote the regret of $\A_{\text{rand}}(b_T)$ on the system $A$. We can write 
 \begin{align*}
 \Pr_{A, \sigma_T}[\regret_T(\A_{\text{rand}}, A)\ge 2^{\Omega(d_x)}] &=\sum_{b_T} \Pr[\sigma_T = b_T]\Pr_A[\regret_T(\A_{\text{rand}}(b_T), A) \ge 2^{\Omega(d_x)}]\\
 &\ge \min_{\A_{\text{det}}}\Pr_A[\regret_T(\A_{\text{det}}, A) \ge 2^{\Omega(d_x)}] \ge 1 - \exp(-\frac{d_x}{50}).
 \end{align*}
In addition to having regret exponential in the system dimension, we also need the size of the system to be bounded. Let $\D$ denote the distribution of $A$ conditioned on the event $\mathcal{E}: \mL(A) \le O(d_x)$. Since the event $\mathcal{E}$ happens with probablility at most $2\exp(-\frac{d_x}{2})$, we have 
 \begin{align*}
 \Pr_{\A_{\text{rand}}, A\sim \D}[\regret_T(\A_{\text{rand}}, A) \ge 2^{\Omega(d_x)}] &\ge  \Pr_{\A_{\text{rand}}, A}[\regret_T(\A_{\text{rand}}, A) \ge 2^{\Omega(d_x)}] - \Pr[\mathcal{E}]\\
 &\ge 1 - 2\exp(-\frac{d_x}{50}) \\
 &\ge 1 - \exp(-\frac{d_x}{100}).
 \end{align*}
It follows that there exist a system $A^\star$ with $\mL(A^\star) \le O(d_x)$, such that over the randomness of $\A_{\text{rand}}$, with probability at least $1 - \exp(-\frac{d_x}{100})$,
$
\regret_T(\A_{\text{rand}}, A^\star) \ge 2^{\Omega(\mL(A^\star))}.
$
\end{proof}

\section{Conclusion}
We present the first end-to-end, efficient black-box control algorithm for unknown linear dynamical systems in the nonstochastic control setting. This improves upon previous work in several dimensions: computational efficiency (previous methods were exponential time), robustness (tolerating adversarial noise), and generality (our algorithm permits general adversarial convex functions instead of restricting them to quadratic functions). 

The startup cost of our algorithm is exponential in the system dimension. However we show that this cost is nearly optimal by giving a novel lower bound for any randomized or deterministic black-box control algorithm. Combined with previous results, our algorithm applied to the nonstochastic online LQR setting achieves near optimal regret.

One intriguing open problem in our setting is whether or not it is possible to achieve our regret upper bound with control signals that are not exponential in the system parameters. As far as we know, this possibility remains open. 


\begin{ack}
We acknolwedge helpful discussions with Blake Woodworth and Max Simchowitz.
\end{ack}


\bibliographystyle{plainnat}
\bibliography{COLT/ref}

\newpage
\appendix
\section{Proofs for Section \ref{sec:phase1}}
In this section we present proofs for phase 1 of Algorithm 1. We show that the estimates $\hat{A}, \hat{B}$ satisfy $\|\hat{A} - A\| \le \eps, \|\hat{B} - B\|\le \eps$, and bound the total cost of this phase. We first bound the magnitude of states in each iteration to guide our choice of scaling factors $\xi_i$.
\begin{restatable}{claim}{magnitude}
\label{claim: magnitude_1}
In Algorithm \ref{alg:sysid_bias}, for all $t = 2, \ldots, (k+1)d_u$, let $j = t-2 \pmod{k+1}$, $i = (t-2-j)/(k+1)+1$, we have
$\|x_t\| \leq \lambda^{t-1}\eps_0^{-i}$.
\end{restatable}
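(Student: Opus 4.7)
The plan is to prove the claim by induction on $t$. Before starting, I would set up the index bookkeeping implicit in the claim: the algorithm injects a non-zero control $u_{t-1} = \xi_{i'} e_{i'}$ precisely when $t-1 \equiv 1 \pmod{k+1}$, equivalently when $j_t := (t-2) \bmod (k+1)$ equals $0$, and in that case the control index $i'$ coincides with the state index $i$ from the claim (so $u_{t-1} = \xi_i e_i$ with $\xi_i = \lambda^{t-2}\eps_0^{-i}$), while the state $x_{t-1}$ carries epoch index $i-1$ (when $t > 2$) and $j_{t-1} = k$. At every other $t$, we have $u_{t-1} = 0$ and $i_t = i_{t-1}$. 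The bound $\lambda^{t-1}\eps_0^{-i}$ is therefore calibrated to absorb one factor of $\lambda$ per $A$-application and an extra factor of $\eps_0^{-1}$ each time the epoch $i$ increments.

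For the base case $t = 2$, I would plug $\|x_1\| \le 1$, $u_1 = \eps_0^{-1} e_1$, $\|w_1\| \le 1$, and the standing bounds $\|A\|, \|B\| \le \lambda/4 - 1$ into $x_2 = Ax_1 + Bu_1 + w_1$ to obtain $\|x_2\| \le (\lambda/4 - 1)(1 + \eps_0^{-1}) + 1 \le \lambda \eps_0^{-1}$, matching the claim with $i=1$. For the inductive step, assume the bound at $t-1$ and split on whether $u_{t-1}$ is zero. If $u_{t-1} = 0$, then $i_t = i_{t-1}$, so $\|x_t\| \le \beta \|x_{t-1}\| + 1 \le (\beta+1)\lambda^{t-2}\eps_0^{-i_t} \le \lambda^{t-1}\eps_0^{-i_t}$, using $\beta + 1 \le \lambda/4$ and $\eps_0^{-i_t} \ge 1$. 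If $u_{t-1} = \xi_{i_t} e_{i_t}$, the induction hypothesis yields $\|x_{t-1}\| \le \lambda^{t-2}\eps_0^{-(i_t-1)} \le \lambda^{t-2}\eps_0^{-i_t}$ (since $\eps_0 \le 1$), while $\|u_{t-1}\| = \lambda^{t-2}\eps_0^{-i_t}$; combining gives $\|x_t\| \le (2\beta+1)\lambda^{t-2}\eps_0^{-i_t} \le \lambda^{t-1}\eps_0^{-i_t}$, using $2\beta + 1 \le \lambda/2$.

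There is essentially no substantive obstacle here; the argument reduces to carefully tracking which iterations trigger an epoch transition and then verifying the constant inequality $\max(\beta+1,\,2\beta+1) \le \lambda$, which is immediate from $\beta \le \lambda/4 - 1$. The only point that benefits from mild caution is the boundary $t = 2$, where $\|x_1\| \le 1$ must be invoked from the input assumption rather than from the induction hypothesis, and the transition $t \equiv 2 \pmod{k+1}$ with $t > 2$, where one must remember that the relevant $\xi_{i_t}$ multiplies $\lambda^{t-2}$ (not $\lambda^{t-1}$) so that the dominant contribution is $\beta \xi_{i_t}$ and the bound tightens into $\lambda^{t-1}\eps_0^{-i_t}$ exactly because of the slack $\lambda \ge 4(\beta+1)$.
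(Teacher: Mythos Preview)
Your proposal is correct and follows essentially the same approach as the paper: induction on $t$, with base case $t=2$ handled directly from the input assumption $\|x_1\|\le 1$, and the inductive step split according to whether the preceding control $u_{t-1}$ is zero or equals $\xi_{i_t} e_{i_t}$. The only cosmetic differences are that the paper steps from $x_t$ to $x_{t+1}$ (you step from $x_{t-1}$ to $x_t$) and bounds $\|A\|,\|B\|$ by $\lambda/4$ directly rather than via $\beta+1\le \lambda/4$; the arithmetic and index bookkeeping are otherwise identical.
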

\begin{proof}
This can be seen by induction. For our base case, consider $x_2$, where $i=1, j=0$. $\|x_2\| \le \|A\| + \|B\|\|u_1\| + 1\le \frac{1}{4}\lambda(1+\eps_0^{-1})+1 \le \lambda\eps_0^{-1}$. Assume $\|x_t\|\le \lambda^{t-1}\eps_0^{-i}$ for $t = (i-1)(k+1)+j+2$. If $j=k$, then $t = i(k+1) + 1$, $\|u_t\| = \lambda^{t-1}\eps_0^{-i-1}$, and $t+1 = i(k+1)+2$.
$$ \|x_{t+1}\| \leq \|A\| \|x_{t}\| + \|B\| \|u_{t}\| + \|w_{t}\| \leq \frac{1}{4} \lambda ( \lambda^{t-1}\eps_0^{-i} + \lambda^{t-1}\eps_0^{-i-1}) + 1 \leq \lambda^t \eps_0^{-i-1}.$$
Otherwise, we have $0\le j \le k-1$, and $u_t = 0$. Moreover, $t+1\in\{(i-1)(k+1)+3, \ldots,(i-1)(k+1)+2+k\}$. Therefore
$$
\|x_{t+1}\| \leq \|A\| \|x_{t}\| + \|B\| \|u_{t}\| + \|w_{t}\| \leq \frac{1}{4} \lambda^{t}\eps_0^{-i} + 1 \leq \lambda^t \eps_0^{-i}.
$$
\end{proof}

With appropriate choice of $\xi_i$, we ensure that $\hat{M_j}$ and $A^jB$ are close in the Frobenius norm. 
\begin{restatable}{lemma}{frob}
\label{lem:frob}
For $j = 0, \ldots, k$, $\hat{M}_j$ satisfies 
$$
\|\hat{M}_j - A^jB\|_F \le 3d_u^2k\lambda^{2k}\eps_0.
$$
In particular, $\|\hat{M}_0 - B\| \le 3d_u^2k\lambda^{2k}\eps_0\le 
\eps$.
\end{restatable}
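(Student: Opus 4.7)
The plan is to unroll the state recursion explicitly between consecutive non-zero controls, so that $x_i^j$ decomposes as a large ``signal'' term $\xi_i A^j B e_i$ plus small error terms coming from (i) the propagation of the state $x_{t_i}$ at the beginning of the $i$-th block, and (ii) the accumulated noise. Dividing by $\xi_i$ will then give the $i$-th column of $\hat{M}_j - A^j B$, and summing these bounds will yield the Frobenius bound.

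First, let $t_i = (i-1)(k+1)+1$, so that $u_{t_i} = \xi_i e_i$ and $u_\tau = 0$ for $\tau \in \{t_i+1,\ldots,t_i+k\}$. By unrolling $x_{t+1} = A x_t + B u_t + w_t$ from time $t_i$ up to time $t_i + j + 1 = l(i,j)$ and using that controls vanish in between, one gets
\[
x_i^j \eq \xi_i A^j B e_i + A^{j+1} x_{t_i} + \sum_{s=0}^{j} A^{j-s} w_{t_i+s}.
\]
Dividing by $\xi_i$, the $i$-th column of $\hat{M}_j - A^j B$ equals $\xi_i^{-1}\bigl(A^{j+1} x_{t_i} + \sum_{s=0}^{j} A^{j-s} w_{t_i+s}\bigr)$.

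Second, I bound the two error terms. For the initial-state propagation, I apply Claim~\ref{claim: magnitude_1} to $x_{t_i}$: the indexing gives $\|x_{t_i}\| \le \lambda^{t_i-1}\eps_0^{-(i-1)}$ for $i \ge 2$, and $\|x_{t_i}\|=\|x_1\|\le 1$ for $i=1$; in either case, since $\xi_i = \lambda^{t_i-1}\eps_0^{-i}$ and $\|A\|\le \beta \le \lambda/4$, this yields $\xi_i^{-1}\|A^{j+1}x_{t_i}\| \le \beta^{j+1}\eps_0 \le \lambda^{k}\eps_0$. For the noise term, using $\|w_t\|\le 1$ and $\xi_i \ge \eps_0^{-1}$ gives $\xi_i^{-1}\|\sum_{s=0}^j A^{j-s}w_{t_i+s}\| \le (j+1)\beta^j\eps_0 \le k\lambda^k \eps_0$. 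Thus each column has norm at most $2k\lambda^k\eps_0$, and the Frobenius norm of $\hat{M}_j - A^jB$ is at most $\sqrt{d_u}\cdot 2k\lambda^k \eps_0 \le 3 d_u^2 k \lambda^{2k} \eps_0$ (with slack to spare).

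Finally, since the spectral norm is upper bounded by the Frobenius norm, $\|\hat{M}_0 - B\| \le 3 d_u^2 k \lambda^{2k}\eps_0$; plugging in the chosen $\eps_0 = \eps/(10^2 d_u^2 k^2 \lambda^{3k} d_x \kappa^{1/2})$ gives $\|\hat{M}_0 - B\|\le \eps$. The only delicate step is the bookkeeping: matching the value of $t_i$ and $\xi_i$ to the correct instance of Claim~\ref{claim: magnitude_1} so that the $\xi_i^{-1}$ factor cancels the growth of $\|x_{t_i}\|$. Once that cancellation is set up correctly, the rest is routine.
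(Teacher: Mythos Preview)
Your argument is correct and actually cleaner than the paper's, but the decomposition is genuinely different, so a brief comparison is warranted.

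The paper unrolls the recursion all the way back to $x_1$: it writes $x_i^j = A^{t-1}x_1 + \sum_{s=1}^{t-1} A^{t-1-s}Bu_s + \sum_{s=1}^{t-1} A^{t-1-s}w_s$ with $t = l(i,j)$, then explicitly sums the contributions of \emph{all} earlier nonzero controls $u_{t_1},\ldots,u_{t_{i-1}}$. After dividing by $\xi_i$, the residual from those earlier controls is bounded by $(d_u-1)\lambda^{2k}\eps_0$, and the initial-state-plus-noise term by $t\lambda^{t-1}/\xi_i$; together this gives a per-column bound of $3d_uk\lambda^{2k}\eps_0$ and hence the stated Frobenius bound. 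Your block-wise unrolling instead absorbs the effect of all earlier controls into the single state $x_{t_i}$ and invokes Claim~\ref{claim: magnitude_1} to bound it, which avoids the explicit geometric sum over previous $\xi_r$'s. This is more modular (it reuses the already-proved state bound rather than redoing the same telescoping) and gives a tighter per-column constant, at the cost of making the proof depend on Claim~\ref{claim: magnitude_1}.

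One minor slip: for $j=k$ you have $j+1=k+1$, so $\beta^{j+1}\le\lambda^{k}$ and $(j+1)\beta^j\le k\lambda^k$ are not literally true; the correct intermediate bound is $\lambda^{k+1}$ (or $(k+1)\lambda^{k}$). This is harmless because you immediately relax to $3d_u^2k\lambda^{2k}\eps_0$, which absorbs the extra factor of $\lambda$ (and of $(k+1)/k$) with room to spare, but you should fix the intermediate constants for hygiene.
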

\begin{proof}
Observe that by definition,
\begin{align*}
    x_{t+1} = A^tx_1 + \sum_{s=1}^t A^{t-s}Bu_s + A^{t-s}w_s.
\end{align*}
We have $\|A^tx_1 +  \sum_{s=1}^t A^{t-s}w_s\|\le \lambda^t + \sum_{s=1}^t\lambda^{t-s} \le (t+1)\lambda^t.$
Note that the magnitude of this term should be small once we normalize by $\xi_i$. Let $j = t-2 \pmod{k+1}$, $i = (t-2-j) / (k+1) + 1$, then $t = (i-1)(k+1)+j+2$. We proceed to bound $\|x_t/\xi_i- (A^jB)_i\|$, where $(A^jB)_i$ is the $i$th column of $A^jB$. The largest sum in $x_t$ can be analyzed as follows,
\begin{align*}
  \sum_{s=1}^{t-1} A^{t-1-s}Bu_s &= \sum_{s=1}^{(i-1)(k+1)+j+1} A^{(i-1)(k+1)+j+1-s}Bu_s \\
  &= \sum_{r=0}^{i-1}A^{(i-1-r)(k+1)+j}Bu_{r(k+1)+1}\\
  &= \sum_{r=0}^{i-1}A^{(i-1-r)(k+1)+j}B\xi_{r+1}e_{r+1}\\
  &=\sum_{r=1}^{i}\eps_0^{-r}\lambda^{(r-1)(k+1)}A^{(i-r)(k+1)+j}Be_{r}.
\end{align*}
Normalizing by the scaling factor,
\begin{align*}
    \frac{1}{\xi_i} \sum_{s=1}^{t-1} A^{t-1-s}Bu_s &= \eps_0^{i}\lambda^{(1-i)(k+1)} \sum_{r=1}^{i}\eps_0^{-r}\lambda^{(r-1)(k+1)}A^{(i-r)(k+1)+j}Be_{r}\\
    &= \sum_{r=1}^{i}\eps_0^{i-r}\lambda^{(r-i)(k+1)}A^{(i-r)(k+1)+j}Be_{r}\\
    &= A^jBe_i + \sum_{r=1}^{i-1}\eps_0^{i-r}\lambda^{(r-i)(k+1)}A^{(i-r)(k+1)+j}Be_{r}.
\end{align*}
The second term can be bounded as
\begin{align*}
    \|\sum_{r=1}^{i-1}\eps_0^{i-r}\lambda^{(r-i)(k+1)}A^{(i-r)(k+1)+j}Be_{r}\| &\le \sum_{r=1}^{i-1}\eps_0^{i-r}\lambda^{(r-i)(k+1)}\|A^{(i-r)(k+1)+j}B\|\\
    &\le \lambda^{j+1} \sum_{r=1}^{i-1}\eps_0^{i-r} \le (d_u-1)\lambda^{2k}\eps_0.
\end{align*}
Let $(\hat{M}_j)_i$ denote the $i$-th column of $\hat{M}_j$, then we have
\begin{align*}
    \|(\hat{M}_j)_i - (A^jB)_i\| &= \|\frac{x_i^j}{\xi_i} - (A^jB)_i\|\\
    &\le \frac{1}{\xi_i}\|A^{t-1}x_1 + \sum_{s=1}^{t-1}A^{t-1-s}w_s\| + \|\frac{1}{\xi_i}\sum_{s=1}^{t-1}A^{t-1-s}Bu_s - (A^jB)_i\|\\
    &\le \frac{1}{\xi_i}t\lambda^{t-1} + (d_u-1)\lambda^{2k}\eps_0 \\
    &\le t\eps_0^{i}\lambda^{2k} + (d_u-1)\lambda^{2k}\eps_0\le 3d_uk\lambda^{2k}\eps_0.
\end{align*}
Thus we can bound the Frobenius norm of $\hat{M}_j - A^jB$ by 
\begin{align*}
    \|\hat{M}_j - A^jB\|_F^2  = \sum_{i=1}^{d_u} \|(\hat{M}_j)_i - (A^jB)_i\|^2 \le 9d_u^3k^2\lambda^{4k}\eps_0^2.
\end{align*}
\end{proof}
We show that by our choice of $\eps_0$, $\|C_0 - C_k\|_F$, $\|C_1 - Y\|_F$ are sufficiently small to guarantee $\hat{A}$ and $A$ are close. 
\begin{restatable}{lemma}{frobA}
\label{lem:frobA}
Algorithm \ref{alg:sysid_bias} outputs $\hat{A}$ such that
$
\|\hat{A} - A\| \le \eps.
$
\end{restatable}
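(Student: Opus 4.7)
The plan is to view $\hat{A}$ and $A$ as the least--squares solutions to two nearby linear systems and to bound the difference via a pseudoinverse perturbation argument. Under the $(k,\kappa)$ strong controllability assumption, $C_k$ has full row rank, so $A$ is the unique matrix satisfying $A C_k = Y$; equivalently $A = Y C_k^+$ where $C_k^+ = C_k^\tr (C_k C_k^\tr)^{-1}$ and $\|C_k^+\| = \|(C_k C_k^\tr)^{-1}\|^{1/2} \le \sqrt{\kappa}$. By construction, $\hat{A} = C_1 C_0^+$. So it suffices to control how $C_0$ and $C_1$ perturb their idealized counterparts $C_k$ and $Y$.

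First, I would aggregate the per-block bound from Lemma~\ref{lem:frob} across the $k$ blocks of $C_0 - C_k$ and of $C_1 - Y$, obtaining
$$\|C_0 - C_k\|_F, \ \|C_1 - Y\|_F \le \sqrt{k}\cdot 3 d_u^2 k \lambda^{2k} \eps_0.$$
By the choice $\eps_0 = \eps / (10^2 d_u^2 k^2 \lambda^{3k} d_x \kappa^{1/2})$, this perturbation is much smaller than $1/\|C_k^+\| \le 1$, so $C_0$ retains full row rank and a standard Wedin-type perturbation bound for the Moore--Penrose pseudoinverse yields
$$\|C_0^+ - C_k^+\| \le c\cdot \max\bigl(\|C_0^+\|, \|C_k^+\|\bigr)^2 \|C_0 - C_k\| = O\bigl(\kappa \|C_0 - C_k\|\bigr),$$
with the same argument giving $\|C_0^+\| \le 2\sqrt{\kappa}$.

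Second, I would decompose
$$\hat{A} - A = C_1 C_0^+ - Y C_k^+ = (C_1 - Y) C_0^+ + Y\bigl(C_0^+ - C_k^+\bigr),$$
take norms, and use $\|Y\| \le \|Y\|_F \le \sqrt{k d_x d_u}\, \lambda^{k+1}$ (since $\|A^j B\| \le \lambda^{j+1}$ for $j \le k$). Substituting all bounds,
$$\|\hat{A} - A\| \le \|C_1-Y\| \cdot 2\sqrt{\kappa} + \sqrt{k d_x d_u}\,\lambda^{k+1} \cdot O\bigl(\kappa \|C_0 - C_k\|\bigr).$$
Both terms are then dominated by a universal constant times $d_u^{5/2} k^2 \lambda^{3k} d_x^{1/2}\kappa\, \eps_0$, which the specified $\eps_0$ makes at most $\eps$.

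The main obstacle will be the pseudoinverse perturbation step: the classical Wedin bound requires verifying that $C_0$ has the same rank as $C_k$, which forces the explicit quantitative check that $\|C_0-C_k\|\cdot \|C_k^+\| < 1/2$, and this is precisely what dictates the $\kappa^{1/2}$ factor inside $\eps_0$. Bookkeeping the various $\lambda^k$, $\kappa$, $d_x$, $d_u$, $k$ factors and making sure they are absorbed by $\eps_0$ is the only substantive calculation; the rest is a direct application of the decomposition above.
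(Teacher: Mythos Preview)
Your overall strategy---write $A = Y C_k^+$ and $\hat A = C_1 C_0^+$, then control the perturbation---is sound and close in spirit to the paper. But the particular decomposition you chose does not close with the $\eps_0$ specified in Algorithm~\ref{alg:sysid_bias}.

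The issue is the second term $Y(C_0^+ - C_k^+)$. Wedin's bound gives $\|C_0^+ - C_k^+\| = O(\kappa\,\|C_0-C_k\|)$, so this term carries a full factor of $\kappa$. When you substitute $\eps_0 = \eps/(10^2 d_u^2 k^2 \lambda^{3k} d_x \kappa^{1/2})$ into your final expression $O\bigl(d_u^{5/2} k^2 \lambda^{3k} d_x^{1/2}\kappa\,\eps_0\bigr)$, the $\kappa$ in the numerator only cancels against $\kappa^{1/2}$ in the denominator, leaving a residual factor of order $\kappa^{1/2} d_u^{1/2}/d_x^{1/2}$. Since $\kappa$ is an unbounded problem parameter (part of $\mL$), this is not $\le \eps$ in general, so the last sentence of your proposal is false.

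The fix---and what the paper effectively does---is to exploit the identity $Y = A C_k$ to avoid Wedin altogether. Since $C_0$ has full row rank, $C_0 C_0^+ = I$, so
\[
\hat A - A \;=\; C_1 C_0^+ - A C_0 C_0^+ \;=\; (C_1 - A C_0)\,C_0^+ \;=\; \bigl((C_1 - Y) + A(C_k - C_0)\bigr)\,C_0^+,
\]
which yields $\|\hat A - A\| \le (\|C_1-Y\| + \|A\|\,\|C_k - C_0\|)\cdot \|C_0^+\|$. This picks up only a single $\|C_0^+\| \le 2\sqrt{\kappa}$ and replaces your $\|Y\|$ by $\|A\| \le \lambda$; the resulting bound is $O(d_u^2 k^{3/2}\lambda^{2k+1}\sqrt{\kappa}\,\eps_0)$, which the given $\eps_0$ does absorb. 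The paper carries out exactly this computation row by row (invoking Lemma~22 of \cite{hazan2019nonstochastic}) and then passes to the Frobenius norm.
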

\begin{proof}
By Lemma \ref{lem:frob}, for all $j$, $\|\hat{M}_j - A^jB\|_F\le 3d_u^2k\lambda^{2k}\eps_0$. Let $C_k = [B\ AB\ A^2B\ \cdots A^{k-1}B]$, and $Y = [AB\ A^2B\ \ \cdots A^kB]$. We have 
\begin{align*}
    \|C_0 - C_k\|_F^2 = \sum_{j=0}^{k-1} \|\hat{M}_j - A^jB\|_F^2 \le 9d_u^4k^3\lambda^{4k}\eps_0^2.
\end{align*}
Similarly, $\|C_1 - Y\|^2_F\le 9d_u^4k^3\lambda^{4k}\eps_0^2$.

Recall that $A$ is the unique solution to the system of equations in $X$: $XC_k = Y$. Let $A_i$ denote the $i$-th row of $A$, and let $\hat{A}_i$ denote the $i$-th row of $\hat{A}$. By Lemma 22 in \citep{hazan2019nonstochastic}, as long as $\|C_0 - C_k\|_F\le \sigma_{min}(C_k)$, 
$$
\|A_i - \hat{A}_i\| \le \frac{\|C_1 - Y\|_F + \|C_0 - C_k\|_F\|A_i\|}{\sigma_{min}(C_k) - \|C_0 - C_k\|_F}
$$
By our assumption, $\|(C_kC_k^\top)^{-1}\|\le \kappa$, so $\sigma_{min}(C_k) \ge \kappa^{-1/2}$. We have 
$$
\|C_0 - C_k\|_F\le  3d_u^2k^2\lambda^{2k}\eps_0 \le \frac{\eps}{2\lambda^k d_x\sqrt{\kappa}}\le \kappa^{-1/2}/2\le \sigma_{min}(C_k).
$$
Further notice that $\|A\|\le \lambda$ implies $\|A_i\|\le \|A\|_F\le \lambda\sqrt{d_x}$, 
\begin{align*}
    \|A_i - \hat{A}_i\| &\le \frac{3d_u^2k^2\lambda^{2k}\eps_0(1 + \lambda\sqrt{d_x})}{\kappa^{-1/2}-3d_u^2k^2\lambda^{2k}\eps_0} \le \frac{\eps}{\sqrt{d_x}}.
\end{align*}
Finally, we have 
$$
\|A - \hat{A}\| \le \|A - \hat{A}\|_F = \sqrt{\sum_{i=1}^{d_x} \|A_i - \hat{A_i}\|^2}\le\eps.
$$
\end{proof}

\begin{lemma}
\label{lem:cost_1}
The total cost of estimating A, B starting from $\|x_1\|\le 1$ is bounded by 


$$G(10^5\lambda^{10k}\eps^{-2}\kappa d_x^{2}k^{5}d_u^{5})^{d_u}.$$
\end{lemma}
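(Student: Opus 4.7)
The plan is to bound the per-iteration cost using Assumption~3 and the magnitude bounds from Claim~\ref{claim: magnitude_1}, then sum over the $(k{+}1)d_u$ iterations of phase~1 and substitute the choice of $\eps_0$.

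First, I would translate Assumption~3 into a pointwise upper bound on $c_t$: since $c_t$ is convex with $c_t(0,0)=0$, and whenever $\|x\|,\|u\|\le D$ its gradient has norm at most $GD$, the Lipschitz inequality applied along the segment from $(0,0)$ to $(x,u)$ yields
\[
c_t(x,u)\leq GD\sqrt{\|x\|^2+\|u\|^2}\leq \sqrt{2}\,GD^2.
\]
Thus for each $t$ the cost is controlled by $G\cdot D_t^{2}$ up to an absolute constant, where $D_t=\max(\|x_t\|,\|u_t\|)$.

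Second, I would apply Claim~\ref{claim: magnitude_1}: writing $t=(i-1)(k+1)+j+2$ with $0\le j\le k$ and $1\le i\le d_u$, one gets $\|x_t\|\le \lambda^{t-1}\eps_0^{-i}$, and by construction $\|u_t\|$ is either $0$ or $\xi_i=\lambda^{t-1}\eps_0^{-i}$. Hence $D_t\le \lambda^{t-1}\eps_0^{-i}$ for every $t\le (k+1)d_u$. Summing,
\[
\sum_{t=1}^{(k+1)d_u} c_t(x_t,u_t)\;\leq\;2G\sum_{i=1}^{d_u}\sum_{j=0}^{k}\lambda^{2(t-1)}\eps_0^{-2i},
\]
which is a geometric-type sum dominated by its largest term at $i=d_u$, $j=k$, namely $t=(k+1)d_u$. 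Absorbing the $(k+1)d_u$ factor into constants, the total cost is at most of order
\[
G\,(k+1)d_u\,\lambda^{2((k+1)d_u-1)}\,\eps_0^{-2d_u}.
\]

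Third, I would substitute $\eps_0=\eps\,/\,(10^{2}d_u^{2}k^{2}\lambda^{3k}d_x\kappa^{1/2})$ from Algorithm~\ref{alg:sysid_bias}, obtaining
\[
\eps_0^{-2d_u}=\bigl(10^{2}d_u^{2}k^{2}\lambda^{3k}d_x\kappa^{1/2}\bigr)^{2d_u}\eps^{-2d_u}=10^{4d_u}d_u^{4d_u}k^{4d_u}\lambda^{6kd_u}d_x^{2d_u}\kappa^{d_u}\eps^{-2d_u}.
\]
Multiplying by $\lambda^{2((k+1)d_u-1)}\le \lambda^{(2k+2)d_u}$ and the polynomial prefactor $(k{+}1)d_u$, every factor can be folded into a single $d_u$-th power of the form $(10^{5}\lambda^{10k}\eps^{-2}\kappa d_x^{2}k^{5}d_u^{5})$, which yields the claimed bound $G(10^{5}\lambda^{10k}\eps^{-2}\kappa d_x^{2}k^{5}d_u^{5})^{d_u}$.

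The work is essentially book-keeping: the only non-routine step is converting Assumption~3 (which bounds the gradient rather than the value) into the per-iteration cost bound $\sqrt{2}GD_t^{2}$, and making sure the slack between $\lambda^{(2k+2)d_u}$ and $\lambda^{10kd_u}$, as well as the $(k+1)d_u$ prefactor, are absorbed by raising $k,d_u$ exponents from $4d_u$ to $5d_u$ and the constant $10^{4d_u}$ to $10^{5d_u}$. The main obstacle I anticipate is cleanly executing this absorption while keeping the form of the bound exactly $(\cdots)^{d_u}$, as required.
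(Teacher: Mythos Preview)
Your proposal is correct and follows essentially the same route as the paper: bound each per-step cost by $2GD^2$ via Assumption~3 (using convexity and $c_t(0,0)=0$), take $D^2=\|x_t\|^2+\|u_t\|^2\le 2\lambda^{2(t-1)}\eps_0^{-2d_u}$ from Claim~\ref{claim: magnitude_1} and the definition of $\xi_i$, sum over the $(k{+}1)d_u\le 2kd_u$ iterations, and substitute $\eps_0$ to fold everything into the stated $d_u$-th power. The paper simply takes the uniform bound $\lambda^{2(t-1)}\le\lambda^{4kd_u}$ rather than treating the sum as geometric, but the arithmetic and final absorption into $(10^{5}\lambda^{10k}\eps^{-2}\kappa d_x^{2}k^{5}d_u^{5})^{d_u}$ are identical to what you outline.
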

\begin{proof}
The magnitude of the state and control is bounded by
\begin{align*}
    \|x_t\|^2 + \|u_t\|^2\le 2\lambda^{2t-2}\eps_0^{-2d_u} \le 2\lambda^{4d_uk}\eps_0^{-2d_u}&= 2(\lambda^{4k}\eps_0^{-2})^{d_u} = 2(10^4\eps^{-2}d_u^4k^4\lambda^{10k}d_x^2\kappa)^{d_u}
\end{align*}
By Assumption \ref{a:lipschitz}, taking $D^2 = 2(10^4\eps^{-2}d_u^4k^4\lambda^{10k}d_x^2\kappa)^{d_u}$,  
$$
c_t(x_t, u_t) \le \|\nabla_{(x, u)} c_t(x_t, u_t)\|\|(x_t, u_t)\| \le 2GD^2.
$$
Summing over $(k+1)d_u \le 2kd_u$ iterations, the total cost is upper bounded by
$$
8Gkd_u(10^4\eps^{-2}d_u^4k^4\lambda^{10k}d_x^2\kappa)^{d_u} \le G(10^5\lambda^{10k}\eps^{-2}\kappa d_x^{2}k^{5}d_u^{5})^{d_u}.
$$
\end{proof}

Using our choice of $\eps$ and $\lambda$, the total cost is bounded by
\begin{align*}
    G(10^5\lambda^{10k}\eps^{-2}\kappa d_x^{2}k^{5}d_u^{5})^{d_u} &\le G(10^{25k}\beta^{10k}\gamma'^{-4}\kappa d_x^{6}k^{5}d_u^{5}\kappa'^{16})^{d_u}\\
    &\le G(10^{30k}\beta^{10k}\kappa d_x^{6}k^{5}d_u^{5}\kappa'^{24})^{d_u}\\
    &\le G(10^{30k}\beta^{10k}\kappa d_x^{18}k^{5}d_u^{5}C^{12})^{d_u}\\
    &\le G(10^{40k}\beta^{82k} d_x^{18}k^{30}d_u^{5}\kappa^{25})^{d_u}\\
    &= 2^{O(\mL\log\mL)}.
\end{align*}

\section{Proofs for Section \ref{sec:phase2}}
In this section we prove that a $(\tilde{\kappa}, \tilde{\gamma})$ strongly stable controller can be obtained by solving the SDP in Algorithm \ref{alg:k_recovery}. We first argue that for two systems close in spectral norm, a strongly stable controller for one system is also strongly stable for ther other system.
\begin{restatable}{lemma}{close}
\label{lem:close_lemma}
If $K$ is $(\kappa, \gamma)$ strongly stable for a system $(A, B)$ with $\kappa \ge 1$, and if $\hat{A}$, $\hat{B}$ satisfy $\|A - \hat{A}\| \le \eps$, $\|B - \hat{B}\| \le \eps$, then $K$ is $(\kappa, \gamma - 2\eps\kappa^2)$ strongly stable for $(\hat{A}, \hat{B})$.
\end{restatable}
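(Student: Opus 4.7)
The plan is to show that the same similarity transform $H$ witnessing strong stability of $K$ for $(A,B)$ can be reused for $(\hat{A},\hat{B})$, with only a small additive perturbation appearing inside the middle factor. Concretely, by the definition of $(\kappa,\gamma)$ strong stability there exist $H,L$ with $A+BK = HLH^{-1}$, $\|H\|\|H^{-1}\| \le \kappa$, and $\|L\| \le 1-\gamma$. Writing
\[
\hat{A} + \hat{B}K \eq (A+BK) + \bigl[(\hat{A}-A) + (\hat{B}-B)K\bigr] \eq H\bigl(L + H^{-1}EH\bigr)H^{-1},
\]
where $E \defeq (\hat{A}-A) + (\hat{B}-B)K$, I would take $H' = H$ and $L' = L + H^{-1}EH$ as the candidate decomposition for the perturbed system.

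Next I would bound $\|E\|$ using the hypotheses and $\|K\|\leq \kappa$: $\|E\| \le \eps + \eps\kappa \le 2\eps\kappa$, where the last inequality uses $\kappa \ge 1$. Combining this with $\|H\|\|H^{-1}\|\leq \kappa$ gives
\[
\|L'\| \leq \|L\| + \|H^{-1}\|\,\|E\|\,\|H\| \leq (1-\gamma) + 2\eps\kappa^2 \eq 1 - (\gamma - 2\eps\kappa^2).
\]
Since $\|K\|\leq \kappa$ is unchanged and $\|H'\|\|H'^{-1}\| = \|H\|\|H^{-1}\| \le \kappa$, all three conditions in the definition of $(\kappa,\gamma - 2\eps\kappa^2)$ strong stability for $(\hat{A},\hat{B})$ are verified.

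There is no real obstacle here — the statement is essentially a one-line perturbation computation once one notices that the similarity matrix $H$ does not depend on the specific dynamics and can be transferred verbatim to the perturbed system; the only subtlety is remembering to absorb the $\|K\|$ factor in $E$ using $\kappa \ge 1$ in order to get the clean $2\eps\kappa^2$ slack (rather than $\eps\kappa^2(1+\kappa^{-1})$ or similar), which is what the downstream application in Lemma~\ref{lem:sdp} will use to calibrate the SDP trace bound $\nu$.
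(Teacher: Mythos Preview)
Your proposal is correct and essentially identical to the paper's own proof: the paper also reuses the same $H$, writes $\hat{A}+\hat{B}K = H(L + H^{-1}EH)H^{-1}$ with $E = (\hat{A}-A)+(\hat{B}-B)K$, and bounds $\|L + H^{-1}EH\| \le 1-\gamma + \kappa\eps(1+\kappa) \le 1-\gamma+2\eps\kappa^2$ using $\kappa\ge 1$.
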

\begin{proof}
    By definition, we have
\begin{align*}
    \hat{A} + \hat{B}K &= A + BK -A -BK +  \hat{A} + \hat{B}K\\
    &= HLH^{-1} + (\hat{A} - A) + (\hat{B} - B)K\\
    &= H(L + H^{-1}(\hat{A} - A +(\hat{B} - B)K)H)H^{-1}
\end{align*}
The lemma follows by observing that 
$$\|L + H^{-1}(\hat{A} - A +(\hat{B} - B)K)H\| \le 1 - \gamma + \kappa\eps(1 + \kappa) \le 1-\gamma+2\eps\kappa^2.
$$
\end{proof}

Now, we use Lemma \ref{lem:close_lemma} twice to show that the recovered controller $\hat{K}$ is strongly stable for the original system $(A, B)$. The following lemma computes $\tilde{\kappa}, \tilde{\gamma}$ in terms of $\eps$.
\begin{restatable}{lemma}{est_k}
Algorithm \ref{alg:k_recovery} returns $\hat{K}$ that is $(\tilde{\kappa}, \tilde{\gamma} )$ strongly stable for $A$ and $B$, where 
$$\tilde{\kappa} = \big(\frac{\kappa'^42d_x}{\gamma' - 2\eps\kappa'^2}\big)^{1/2},\ \  \tilde{\gamma} = \frac{\gamma' - 2\eps\kappa'^2}{4d_x\kappa'^4}- 2\eps\tilde{\kappa}^2.$$
\end{restatable}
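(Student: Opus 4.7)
The plan is to run Lemma \ref{lem:close_lemma} twice: once to transfer strong stability from $(A,B)$ to $(\hat A,\hat B)$ so that the SDP is feasible, and once more to transfer strong stability of the recovered $\hat K$ from $(\hat A,\hat B)$ back to $(A,B)$. In between, I use the quoted SDP guarantee (Lemma 4.3 of \citep{cohen2018online}) to extract $\hat K$ from any feasible solution.

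\textbf{Step 1 (transfer the existing stabilizer).} By hypothesis $(A,B)$ admits a $(\kappa',\gamma')$-strongly stable controller $K^\star$. Since $\|A-\hat A\|,\|B-\hat B\|\le\eps$, Lemma \ref{lem:close_lemma} yields that $K^\star$ is $(\kappa',\gamma'-2\eps\kappa'^2)$-strongly stable for $(\hat A,\hat B)$.

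\textbf{Step 2 (SDP feasibility with the prescribed $\nu$).} I would exhibit an explicit feasible solution to the SDP in Algorithm \ref{alg:k_recovery}. Plugging $K=K^\star$ into the closed-loop dynamics $x_{t+1}=(\hat A+\hat B K^\star)x_t+w_t$ with $w_t\sim\mathcal N(0,I)$, the steady-state covariance $\Sigma^\star_{xx}$ solves the Lyapunov equation $\Sigma^\star_{xx}=(\hat A+\hat B K^\star)\Sigma^\star_{xx}(\hat A+\hat B K^\star)^\top+I$. Writing $\hat A+\hat BK^\star=HLH^{-1}$ with $\|H\|\|H^{-1}\|\le\kappa'$ and $\|L\|\le 1-(\gamma'-2\eps\kappa'^2)$, I geometrically sum to get $\|\Sigma^\star_{xx}\|\le \kappa'^2/(\gamma'-2\eps\kappa'^2)$. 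Then $\Sigma^\star_{uu}=K^\star\Sigma^\star_{xx}K^{\star\top}$ obeys $\trace(\Sigma^\star_{uu})\le \kappa'^2\trace(\Sigma^\star_{xx})$, so
\begin{equation*}
\trace(\Sigma^\star)\le (1+\kappa'^2)\,d_x\cdot\frac{\kappa'^2}{\gamma'-2\eps\kappa'^2}\le \frac{2\kappa'^4 d_x}{\gamma'-2\eps\kappa'^2}=\nu,
\end{equation*}
which together with $\Sigma^\star\succeq 0$ and the linear matrix equality demonstrates feasibility.

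\textbf{Step 3 (apply the SDP extraction lemma).} With $W=I$ (so $\sigma=1$), Lemma 4.3 of \citep{cohen2018online} says any feasible $\hat\Sigma$ produces $\hat K=\hat\Sigma_{xu}^\top\hat\Sigma_{xx}^{-1}$ that is $(\sqrt{\nu},\,1/(2\nu))$-strongly stable for $(\hat A,\hat B)$. Substituting $\nu$ gives $\hat K$ strongly stable for $(\hat A,\hat B)$ with parameters
\begin{equation*}
\tilde\kappa=\sqrt\nu=\Big(\tfrac{2\kappa'^4 d_x}{\gamma'-2\eps\kappa'^2}\Big)^{1/2},\qquad \tilde\gamma_0=\frac{1}{2\nu}=\frac{\gamma'-2\eps\kappa'^2}{4 d_x \kappa'^4}.
\end{equation*}

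\textbf{Step 4 (transfer back to $(A,B)$).} Finally, applying Lemma \ref{lem:close_lemma} once more—now in the reverse direction, from $(\hat A,\hat B)$ to $(A,B)$—preserves $\tilde\kappa$ but degrades the contraction by $2\eps\tilde\kappa^2$, giving exactly $\tilde\gamma=\tilde\gamma_0-2\eps\tilde\kappa^2$ as claimed.

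The main obstacle is Step 2: the paper states the "Existence of Strongly Stable Controllers" paragraph for systems given via the controllability parameter, whereas here the pertinent object is a transferred strongly stable controller for $(\hat A,\hat B)$. I therefore need to directly solve the discrete Lyapunov equation and carry out the trace bookkeeping to match the exact $\nu$ used in Algorithm \ref{alg:k_recovery}; the remaining steps are direct invocations of already-stated lemmas.
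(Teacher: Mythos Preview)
Your proposal is correct and follows essentially the same approach as the paper: apply Lemma~\ref{lem:close_lemma} to transfer a $(\kappa',\gamma')$-stabilizer to $(\hat A,\hat B)$, invoke the SDP feasibility/extraction argument (the paper packages your Steps~2--3 as Lemma~\ref{lem:sdp}, whose proof uses the same Lyapunov/trace computation via Lemmas~3.3 and~4.1 of \citep{cohen2018online}), and then apply Lemma~\ref{lem:close_lemma} again to transfer $\hat K$ back to $(A,B)$. What you flag as the ``main obstacle'' is exactly the content of Lemma~\ref{lem:sdp}, so there is no additional work beyond what you have already sketched.
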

\begin{proof}
We show in Section \ref{sec:sdp} that a $(\kappa', \gamma')$ strongly stable controller exists for $(A, B)$. Let $K$ be a $(\kappa', \gamma')$ strongly stable controller. By Lemma \ref{lem:frob}, \ref{lem:frobA}, and \ref{lem:close_lemma}, $K$ is $(\bar{\kappa}, \bar{\gamma})$ strongly stable for $\hat{A}, \hat{B}$, where $\bar{\kappa} = \kappa', \bar{\gamma}= \gamma' - 2\eps\kappa'^2$. With knowledge of $\bar{\kappa}, \bar{\gamma}$, we can set the trace upper bound appropriately to extract a strongly stable controller from a feasible solution of the SDP. Specifically, we set 
$$\nu = \frac{2\bar{\kappa}^4d_x}{\bar{\gamma}}$$ 
as in Lemma \ref{lem:sdp}, and the SDP is feasible. We obtain $\hat{K}$ that is $(\hat{\kappa}, \hat{\gamma})$ strongly stable for the system $\hat{A}, \hat{B}$, where $\hat{\kappa} = \frac{\bar{\kappa}^2\sqrt{2d_x}}{\sqrt{\bar{\gamma}}} = \big(\frac{\kappa'^42d_x}{\gamma' - 2
\eps\kappa'^2}\big)^{1/2}$, $\hat{\gamma} = \frac{\bar{\gamma}}{4d_x\bar{\kappa}^4} = \frac{\gamma' - 2\eps\kappa'^2}{4d_x\kappa'^4}$.
We apply Lemma \ref{lem:close_lemma} again and conclude that $\hat{K}$ is $(\hat{\kappa}, \hat{\gamma} - 2\eps\hat{\kappa}^2)$ strongly stable for $A, B.$ 
\end{proof}

With our choice of $\eps$, we compute the final values of $\tilde{\kappa}, \tilde{\gamma}$.
\begin{restatable}{lemma}{sdpesp}
Setting $\eps = \frac{\gamma'^2}{10^5d_x^2\kappa'^8}$, $\hat{K}$ returned by Algorithm \ref{alg:k_recovery} is
$ (\frac{2\kappa'^2d_x^{1/2}}{\gamma'^{1/2}}, \frac{\gamma'}{16d_x\kappa'^4})
$ strongly stable for $(A, B)$.
\end{restatable}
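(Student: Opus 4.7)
The plan is to plug the chosen $\eps = \frac{\gamma'^2}{10^5 d_x^2 \kappa'^8}$ into the two expressions for the stability parameters derived in the previous lemma, namely
\[
\tilde{\kappa} = \Bigl(\tfrac{2\kappa'^4 d_x}{\gamma' - 2\eps\kappa'^2}\Bigr)^{1/2}, \qquad \tilde{\gamma} = \tfrac{\gamma' - 2\eps\kappa'^2}{4 d_x \kappa'^4} - 2\eps\tilde{\kappa}^2,
\]
and verify that the claimed pair $\bigl(\tfrac{2\kappa'^2 d_x^{1/2}}{\gamma'^{1/2}}, \tfrac{\gamma'}{16 d_x \kappa'^4}\bigr)$ majorizes (resp.\ minorizes) them. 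Since we are checking a numerical inequality, no new conceptual machinery is required beyond the previous lemma.

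First, I would observe that with the chosen $\eps$,
\[
2\eps\kappa'^2 = \tfrac{2\gamma'^2}{10^5 d_x^2 \kappa'^6} \leq \tfrac{\gamma'}{2},
\]
using that $\gamma' \leq 1$ and $d_x,\kappa' \geq 1$. Hence $\gamma' - 2\eps\kappa'^2 \geq \gamma'/2$. Substituting into the expression for $\tilde{\kappa}$ gives
\[
\tilde{\kappa} \leq \Bigl(\tfrac{2\kappa'^4 d_x}{\gamma'/2}\Bigr)^{1/2} = \tfrac{2\kappa'^2 d_x^{1/2}}{\gamma'^{1/2}},
\]
which matches the stated bound.

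Next, I would bound the first term of $\tilde{\gamma}$ via $\tfrac{\gamma' - 2\eps\kappa'^2}{4 d_x \kappa'^4} \geq \tfrac{\gamma'}{8 d_x \kappa'^4}$, and control the correction $2\eps\tilde{\kappa}^2$ using the upper bound $\tilde{\kappa}^2 \leq \tfrac{4\kappa'^4 d_x}{\gamma'}$:
\[
2\eps\tilde{\kappa}^2 \leq \tfrac{8 d_x \kappa'^4}{\gamma'} \cdot \tfrac{\gamma'^2}{10^5 d_x^2 \kappa'^8} = \tfrac{8\gamma'}{10^5 d_x \kappa'^4}.
\]
Combining these two estimates,
\[
\tilde{\gamma} \geq \tfrac{\gamma'}{d_x \kappa'^4}\Bigl(\tfrac{1}{8} - \tfrac{8}{10^5}\Bigr) \geq \tfrac{\gamma'}{16 d_x \kappa'^4},
\]
as claimed. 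The only potential ``obstacle'' is bookkeeping: one must be careful that the slack in $\gamma' - 2\eps\kappa'^2 \geq \gamma'/2$ is preserved through the quadratic expression for $\tilde{\kappa}^2$ that feeds back into $\tilde{\gamma}$, which is precisely why the constant $10^5$ (rather than a smaller one) appears in the definition of $\eps$; the choice provides enough headroom to absorb the loss from going between $\tfrac{1}{8}$ and $\tfrac{1}{16}$.
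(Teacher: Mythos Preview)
Your proposal is correct and follows essentially the same approach as the paper's own proof: both use $2\eps\kappa'^2 \le \gamma'/2$ to bound $\tilde{\kappa}$, then feed the resulting bound on $\tilde{\kappa}^2$ back into the expression for $\tilde{\gamma}$. The only cosmetic difference is that the paper bounds $2\eps\tilde{\kappa}^2$ by $\tfrac{\gamma'}{10^2 d_x\kappa'^4}$ where you keep the tighter $\tfrac{8\gamma'}{10^5 d_x\kappa'^4}$; both suffice for the final $\tfrac{1}{8}-\tfrac{1}{16}$ gap.
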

\begin{proof}
    With this choice of $\eps$, we have $2\eps\kappa'^2 = \frac{2\gamma'^2}{10^5d_x^2\kappa'^6}\le \frac{\gamma'}{2}$. It follows that
    $$
        \tilde{\kappa} = \big(\frac{\kappa'^42d_x}{\gamma' - 2\eps\kappa'^2}\big)^{1/2} \le \frac{2\kappa'^2 \sqrt{d_x}}{\sqrt{\gamma'}}.
  $$ 
  Therefore we have $
  2\eps\tilde{\kappa}^2 \le \frac{\gamma'}{10^2d_x\kappa'^4} 
  $. We obtain a lower bound on $\tilde{\gamma}$ as follows
  \begin{align*}
      \tilde{\gamma} = \frac{\gamma' - 2\eps\kappa'^2}{4d_x\kappa'^4}- 2\eps\tilde{\kappa}^2 &\ge \frac{\gamma' - 2\eps\kappa'^2}{4d_x\kappa'^4} - \frac{\gamma'}{10^2d_x\kappa'^4} \ge \frac{\gamma'}{8d_x\kappa'^4} - \frac{\gamma'}{10^2d_x\kappa'^4} \ge \frac{\gamma'}{16d_x\kappa'^4}.
  \end{align*}
\end{proof}

The following lemma details how we set the trace upper bound $\nu$ in the SDP, and our application of results from \cite{cohen2018online} to extract $\hat{K}$.

\begin{restatable}{lemma}{sdp}
\label{lem:sdp}
For any system $A, B$ with a $(\kappa, \gamma)$ strongly stable controller, the SDP in Algorithm \ref{alg:k_recovery} defined by $(A, B)$ with trace constraint $\nu = \frac{2 \kappa^4d_x}{\gamma}$ is feasible. Moreover, a policy $K$ such that $K$ is $(\frac{\kappa^2\sqrt{2d_x}}{\sqrt{\gamma}}, \frac{\gamma}{4d_x\kappa^4})$ strongly stable for $A, B$ can be extracted from any feasible solution of the SDP. 
\end{restatable}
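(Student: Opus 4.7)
My plan is to prove the two claims separately: feasibility of the SDP, and the strong-stability guarantee on the extracted controller. The second claim is essentially a direct invocation of the already-quoted Lemma 4.3 in \citep{cohen2018online}. In our setting the ``noise covariance'' is $W = I$, so $\sigma = 1$, and that lemma states that from any feasible $\Sigma$, the controller $K = \Sigma_{xu}^\top \Sigma_{xx}^{-1}$ is $(\sqrt{\nu},\, 1/(2\nu))$ strongly stable for $(A, B)$. Substituting $\nu = 2\kappa^4 d_x/\gamma$ yields $\big(\kappa^2\sqrt{2 d_x/\gamma},\, \gamma/(4 d_x \kappa^4)\big)$, which matches the statement.

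For the feasibility claim, I will exhibit an explicit feasible point built from the given $(\kappa,\gamma)$ strongly stable controller $K^\star$. Let $X$ denote the steady-state covariance of the closed-loop system $x_{t+1} = (A + B K^\star) x_t + w_t$ driven by unit-covariance noise; it exists because $A+BK^\star$ has spectral radius bounded away from $1$, and satisfies the discrete Lyapunov equation $X = (A+BK^\star) X (A+BK^\star)^\top + I$. My candidate feasible solution is
$$\Sigma \;=\; \begin{pmatrix} I \\ K^\star \end{pmatrix} X \begin{pmatrix} I \\ K^\star \end{pmatrix}^\top \;=\; \begin{pmatrix} X & X K^{\star\top} \\ K^\star X & K^\star X K^{\star\top} \end{pmatrix},$$
which is manifestly PSD. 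The equality constraint follows by expanding $(A\ B)\,\Sigma\,(A\ B)^\top = (A+BK^\star) X (A+BK^\star)^\top$ and applying the Lyapunov equation, so that $(A\ B)\,\Sigma\,(A\ B)^\top + I = X = \Sigma_{xx}$. Note also that this choice is designed so that the extraction formula $\Sigma_{xu}^\top \Sigma_{xx}^{-1}$ recovers $K^\star$ itself when applied to this particular $\Sigma$, which makes the construction natural.

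The only quantitative step is the trace bound $\Tr(\Sigma) \le \nu$. I will unroll the Lyapunov equation into the series $X = \sum_{i \ge 0} (A+BK^\star)^i (A+BK^\star)^{i\top}$ and use strong stability through $A+BK^\star = H L H^{-1}$ with $\|L\| \le 1-\gamma$ and $\|H\|\|H^{-1}\| \le \kappa$, giving $\|(A+BK^\star)^i\| \le \kappa(1-\gamma)^i$. This yields $\|X\| \le \kappa^2/\gamma$ and hence $\Tr(X) \le d_x \kappa^2/\gamma$ since $X \succeq 0$. Combined with $\Tr(K^\star X K^{\star\top}) \le \|K^\star\|^2 \Tr(X) \le \kappa^2 \Tr(X)$, I get $\Tr(\Sigma) \le (1+\kappa^2)\,\Tr(X) \le 2\kappa^4 d_x/\gamma = \nu$, using $\kappa \ge 1$. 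There is no genuine obstacle in the argument; the main point of care is simply choosing $\Sigma$ in a rank-$d_x$ form that both satisfies the Lyapunov constraint and keeps the trace estimate tight enough to match $\nu$.
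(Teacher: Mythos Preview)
Your proposal is correct and follows essentially the same approach as the paper: you construct the same feasible point $\Sigma = \begin{pmatrix} I \\ K^\star \end{pmatrix} X \begin{pmatrix} I \\ K^\star \end{pmatrix}^\top$ from the steady-state covariance $X$, and invoke the same $(\sqrt{\nu},1/(2\nu))$ extraction guarantee (Lemma~4.3 of \citep{cohen2018online}). The only cosmetic difference is that the paper cites Lemmas~3.3 and~4.1 of \citep{cohen2018online} for the trace bound and feasibility, whereas you verify these inline by unrolling the Lyapunov series and checking the equality constraint directly; both routes yield the same estimate $\Tr(\Sigma) \le (1+\kappa^2)\,d_x\kappa^2/\gamma \le \nu$.
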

\begin{proof}
We first show that the SDP is feasible. Let $K$ be the $(\kappa, \gamma)$ strongly stable controller for $(A, B)$, and consider the system with Gaussian noise $x_{t+1} = Ax_t + Bu_t + w_t$, $w_t\sim N(0, I)$. This system will converge to a steady state where the state covariance $X = \E[xx^\top]$ satisfies
$$
X = (A+BK)X(A+BK)^\top+I.
$$

Let $KXK^\top$ be the steady-state covariance of $u$ when following $K$. By Lemma 3.3 in \citep{cohen2018online}, $\Tr(X)\le \frac{\kappa^2d_x}{\gamma}$, $\Tr(KXK^\top) \le \frac{\kappa^4d_x}{\gamma}$. 

Consider the matrix 
$$
\Sigma = \begin{pmatrix}
X & XK^{\top}\\
KX & KXK^\top
\end{pmatrix}.
$$ 
By Lemma 4.1 in \citep{cohen2018online}, $\Sigma$ is feasible for the SDP if $\nu \ge \Tr(X) + \Tr(KXK^\top)$; since $\nu = \frac{2\kappa^4 d_x}{\gamma}$, $\Sigma$ is feasible for the SDP. Now let $\hat{\Sigma}$ be any feasible solution of the SDP, and write
$$\hat{\Sigma} = \begin{pmatrix}
\hat{\Sigma}_{xx} & \hat{\Sigma}_{xu}\\
\hat{\Sigma}_{xu}^\top & \hat{\Sigma}_{uu}
\end{pmatrix}.
$$
Consider $\hat{K} = \hat{\Sigma}_{xu}^\top\hat{\Sigma}_{xx}^{-1}$, which is well-defined because $\hat{\Sigma}_{xx} \succeq I$ by the steady-state constraint. As shown in Lemma 4.3 in \citep{cohen2018online}, $\hat{K}$ is $(\sqrt{\nu}, 1/(2\nu))$ strongly stable for $A, B$. Under our choice of $\nu$,
$\hat{K}$ is 
$
(\frac{\kappa^2\sqrt{2d_x}}{\sqrt{\gamma}}, \frac{\gamma}{4d_x\kappa^4})
$
strongly stable for $A, B$.
\end{proof}

\subsection{Decaying the System}
\begin{lemma}\label{lem:decay}
Let $K$ be a $(\tilde{\kappa}, \tilde{\gamma})$ strongly stable controller for the system. and $x_1$ be any starting state. Suppose $\tilde{\kappa}\ge 1$.
After following $K$ for $T_2 =\max\{ \frac{\ln(\tilde{\gamma} \|x_1\|)}{\tilde{\gamma}}, 0\}$ iterations , the final state $x_{T_2+1}$ satisfies $\|x_{T_2+1}\|\le 2\tilde{\kappa}/\tilde{\gamma}$, and the total cost is bounded by
$$
O\big(G \tilde{\kappa}^4\|x_{1}\|^3\tilde{\gamma}^{-3}\big).
$$
\end{lemma}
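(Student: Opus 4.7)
The plan is to unroll the closed-loop recurrence $x_{t+1} = (A+BK)x_t + w_t$ using the strong-stability decomposition $A+BK = HLH^{-1}$, and then separately bound (i) the terminal state and (ii) the cumulative cost along the trajectory. The control is $u_t = Kx_t$, so $\|u_t\| \le \tilde{\kappa}\|x_t\|$.

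For the state bound, I would write $x_{t+1} = (HLH^{-1})^t x_1 + \sum_{s=1}^t (HLH^{-1})^{t-s} w_s$, and apply the triangle inequality together with $\|(HLH^{-1})^j\| \le \tilde{\kappa}(1-\tilde{\gamma})^j$ and $\|w_s\| \le 1$ from Assumption 2. This yields
\[
\|x_{t+1}\| \leq \tilde{\kappa}(1-\tilde{\gamma})^t \|x_1\| + \tilde{\kappa}\sum_{s=0}^{t-1}(1-\tilde{\gamma})^s \leq \tilde{\kappa}(1-\tilde{\gamma})^t \|x_1\| + \tilde{\kappa}/\tilde{\gamma}.
\]
Plugging in $t = T_2 = \ln(\tilde{\gamma}\|x_1\|)/\tilde{\gamma}$ and using $(1-\tilde{\gamma})^{T_2} \le e^{-\tilde{\gamma} T_2} = 1/(\tilde{\gamma}\|x_1\|)$ makes the transient term collapse to $\tilde{\kappa}/\tilde{\gamma}$, giving the claimed $\|x_{T_2+1}\| \le 2\tilde{\kappa}/\tilde{\gamma}$. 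The edge case $T_2 = 0$ (which happens exactly when $\tilde{\gamma}\|x_1\| \le 1$) is handled directly since then $\|x_1\| \le 1/\tilde{\gamma} \le \tilde{\kappa}/\tilde{\gamma}$.

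For the cost, I would use Assumption 3 to get $c_t(x_t,u_t) \le G\,(\|x_t\|^2 + \|u_t\|^2) \le 2G\tilde{\kappa}^2 \|x_t\|^2$, and then use the state bound $\|x_t\|^2 \le 2\tilde{\kappa}^2(1-\tilde{\gamma})^{2(t-1)}\|x_1\|^2 + 2\tilde{\kappa}^2/\tilde{\gamma}^2$. Summing the geometric series gives $\sum_{t=1}^{T_2} (1-\tilde{\gamma})^{2(t-1)} \le 1/\tilde{\gamma}$, so
\[
\sum_{t=1}^{T_2} c_t(x_t,u_t) \;\le\; 2G\tilde{\kappa}^2\Bigl(\tfrac{2\tilde{\kappa}^2\|x_1\|^2}{\tilde{\gamma}} + \tfrac{2T_2\tilde{\kappa}^2}{\tilde{\gamma}^2}\Bigr).
\]
Using $T_2 \le \|x_1\|/\tilde{\gamma}$ (since $\ln z \le z$) and absorbing the weaker first term into the second via $\|x_1\| \ge 1$ and $\tilde{\gamma} \le 1$, both terms are dominated by $O(G\tilde{\kappa}^4\|x_1\|^3/\tilde{\gamma}^3)$.

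The only subtlety I anticipate is keeping the bookkeeping clean around the boundary case $T_2 = 0$ and making sure the cost bound uses Assumption 3 correctly (the Lipschitz constant scales with the radius, so the cost at a point of norm $D$ is bounded by $GD^2$ rather than $GD$). Once that is handled, the calculation is routine.
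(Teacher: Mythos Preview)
Your proposal is correct and follows essentially the same route as the paper: unroll the closed-loop dynamics, use $\|(A+BK)^t\|\le\tilde\kappa(1-\tilde\gamma)^t$, bound the transient by $(1-\tilde\gamma)^{T_2}\le e^{-\tilde\gamma T_2}=1/(\tilde\gamma\|x_1\|)$, and handle the $T_2=0$ case directly. The only difference is in the cost bookkeeping: the paper uses the crude uniform bound $\|x_t\|\le\tilde\kappa\|x_1\|+\tilde\kappa/\tilde\gamma$ and simply multiplies by $T_2$, whereas you keep the time-varying bound and sum the geometric series. Your route is marginally sharper (your first term is $O(G\tilde\kappa^4\|x_1\|^2/\tilde\gamma)$ rather than $O(G\tilde\kappa^4\|x_1\|^2 T_2)$), but both get absorbed into the same $O(G\tilde\kappa^4\|x_1\|^3\tilde\gamma^{-3})$, so the difference is cosmetic. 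Note, as the paper does implicitly, that the assumption $\|x_1\|\ge 1$ you invoke at the end is automatic in the nontrivial case $T_2>0$, since then $\tilde\gamma\|x_1\|\ge 1$ and $\tilde\gamma\le 1$.
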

\begin{proof}
    Under the controller $K$, the state evolution satisfies 
    \begin{align*}
        x_{t+1} = (A+BK)^t x_1 + \sum_{i=1}^t(A+BK)^{t-i}w_i.
    \end{align*}
By definition of strong stability,
$
    \|(A+BK)^t\| \le \|H\|\|H^{-1}\|\|L\|^t \le \tilde{\kappa}(1-\tilde{\gamma})^t.
$
It follows that
\begin{align*}
    \|x_{t+1}\| &\le \tilde{\kappa}(1-\tilde{\gamma})^t\|x_1\| + \tilde{\kappa} \sum_{i=1}^t (1-\tilde{\gamma})^{t-i} \le \tilde{\kappa}(1-\tilde{\gamma})^t\|x_1\| + \frac{\tilde{\kappa}}{\tilde{\gamma}}.
\end{align*}
Let $T_2 = \max\{\frac{\ln(\tilde{\gamma} \|x_1\|)}{\tilde{\gamma}}, 0\}$. If $\ln(\tilde{\gamma}\|x_1\|) \ge 0$, we have $T_2 \ge -\frac{\ln(\tilde{\gamma}\|x_1\|)}{\ln(1-\tilde{\gamma})}$, hence $(1-\tilde{\gamma})^{T_2} \le 1/(\tilde{\gamma}\|x_1\|$) and $\|x_{T_2+1}\|\le 2\tilde{\kappa}/\tilde{\gamma}.$ Otherwise $T_2 = 0$ and $\|x_1\| \le 1/\tilde{\gamma} < 2\tilde{\kappa}/\tilde{\gamma}$. Notice that $\|x_t\|\le \tilde{\kappa}\|x_1\| + \tilde{\kappa}/\tilde{\gamma}$, $\|u_t\|\le\tilde{\kappa}^2\|x_1\|+ \tilde{\kappa}^2/\tilde{\gamma}$ for all $t\in [T_2+1]$. Taking $D = \tilde{\kappa}^2\|x_1\|+ \tilde{\kappa}^2/\tilde{\gamma}$ and assuming $\ln(\tilde{\gamma} \|x_1\|) \ge 0$, the total cost of decaying the system is bounded by 
\begin{align*}
2(T_2+1) GD^2 &=
   2G(\frac{\ln(\tilde{\gamma} \|x_1\|)}{\tilde{\gamma}}+1)(\tilde{\kappa}^2\|x_1\|+ \tilde{\kappa}^2/\tilde{\gamma})^2\\
   &\le 4G(\frac{\ln(\tilde{\gamma} \|x_1\|)}{\tilde{\gamma}}+1) \tilde{\kappa}^4 (\|x_1\|^2 + \frac{1}{\tilde{\gamma}^2})\\
   &\le 8G(\frac{\ln( \|x_1\|)}{\tilde{\gamma}}+1) \tilde{\kappa}^4 \|x_1\|^2\tilde{\gamma}^{-2}\\
   &\le 8G(\ln( \|x_1\|)+1) \tilde{\kappa}^4 \|x_1\|^2\tilde{\gamma}^{-3}\\
   &\le 16G\tilde{\kappa}^4 \|x_1\|^3\tilde{\gamma}^{-3}.
\end{align*}
The same upper bound holds for $T_2 = 0$.
\end{proof}

\section{Proofs for Section \ref{sec:nonstochastic}}
In this section we give an upper bound on quantities related to the controllability of the stabilized system $(A+B\hat{K}, B)$, and include the main results in \cite{hazan2019nonstochastic} for completeness. The following lemma is an equivalent characterization of strong controllability.

\begin{lemma}\label{lem:controllability}
A system defined by $x_{t+1} = Ax_t + Bu_t$ is $(k,\kappa)$-strongly controllable if and only if it can drive $x_1 = 0$ to any state $x_f$ where $\|x_f\| = 1$ in $k$ steps with control cost at most $\kappa$. I.e., there exists $u_1, \ldots, u_k, x_2, \ldots, x_{k+1}$ such that $x_{k+1} = x_f$, $x_{t+1} = Ax_t + Bu_t$, and 
$$
\sum_{t=1}^k \|u_t\|^2 \leq \kappa.
$$
\end{lemma}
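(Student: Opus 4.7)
The plan is to express $x_{k+1}$ (starting from $x_1=0$) as a linear function of the stacked control sequence and then read off both directions from the standard minimum-norm formula for linear systems. First I would unroll the recursion: starting from $x_1=0$, a direct induction gives
\[
x_{k+1} \eq A^{k-1}Bu_1 + A^{k-2}Bu_2 + \cdots + ABu_{k-1} + Bu_k \eq \tilde{C}_k\,\bu,
\]
where $\bu = (u_1^\top,\ldots,u_k^\top)^\top$ and $\tilde{C}_k = [A^{k-1}B\ \cdots\ AB\ B]$ is a column-block permutation of the matrix $C_k$ from \cref{def:strong-controllability}. The crucial observation is that the Gram matrix is permutation-invariant:
\[
\tilde{C}_k\tilde{C}_k^\top \eq \sum_{j=0}^{k-1} A^j B B^\top (A^\top)^j \eq C_k C_k^\top.
\]
Thus the total control cost equals $\sum_t \|u_t\|^2 = \|\bu\|^2$, and reaching $x_f$ in $k$ steps amounts to solving $\tilde{C}_k \bu = x_f$.

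For the forward direction, assume $(A,B)$ is $(k,\kappa)$-strongly controllable, so $C_k$ has full row rank and $\|(C_kC_k^\top)^{-1}\|\le\kappa$. Then $\tilde{C}_k$ is also surjective, and for any $x_f$ with $\|x_f\|=1$ the minimum-norm solution is $\bu^\star = \tilde{C}_k^\top(\tilde{C}_k\tilde{C}_k^\top)^{-1}x_f$, whose squared norm is
\[
\|\bu^\star\|^2 \eq x_f^\top (C_k C_k^\top)^{-1} x_f \leq \|(C_k C_k^\top)^{-1}\|\cdot\|x_f\|^2 \leq \kappa.
\]
Reading off the corresponding $u_1,\ldots,u_k$ (and the induced $x_2,\ldots,x_{k+1}$) yields the desired control sequence.

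For the converse, suppose every unit-norm target $x_f$ is reachable in $k$ steps from $x_1=0$ with control cost at most $\kappa$. Reachability of every $x_f\in\R^{d_x}$ forces the range of $\tilde{C}_k$ to be all of $\R^{d_x}$, so $C_k$ has full row rank. Among all $\bu$ solving $\tilde{C}_k\bu = x_f$, the minimum-norm solution has squared norm $x_f^\top (C_kC_k^\top)^{-1}x_f$, which is bounded above by the cost of \emph{any} feasible controller, hence by $\kappa$. Maximizing over unit $x_f$ gives $\|(C_kC_k^\top)^{-1}\| = \max_{\|x_f\|=1} x_f^\top (C_kC_k^\top)^{-1}x_f \leq \kappa$, establishing $(k,\kappa)$-strong controllability.

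\paragraph{Main obstacle.} There is no serious obstacle once the Gram-matrix identity $\tilde{C}_k\tilde{C}_k^\top = C_kC_k^\top$ is noted; the rest is the textbook pseudoinverse argument. The only small subtlety is being careful with column ordering when relating $\tilde{C}_k$ to $C_k$, and with passing from ``there exists a feasible controller with cost $\le\kappa$'' to the minimum-norm solution in the converse direction.
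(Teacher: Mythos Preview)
Your proposal is correct and follows essentially the same approach as the paper: both unroll the dynamics to a single linear equation in the stacked controls, then use the minimum-norm (pseudoinverse) solution and the Rayleigh-quotient characterization of $\|(C_kC_k^\top)^{-1}\|$ for the two directions. The only cosmetic difference is that the paper absorbs the block reordering by writing $C_k(u_k,\ldots,u_1)=x_f$, whereas you introduce $\tilde{C}_k$ and note $\tilde{C}_k\tilde{C}_k^\top=C_kC_k^\top$; these are the same observation.
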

\begin{proof}
Consider the quadratic program: 
\begin{equation}
\begin{aligned}
\min_{(u_t)_{t=1}^k} \quad & \sum_{t=1}^k \|u_t\|^2\\
\textrm{s.t.} \quad & x_{t+1} = A x_t + B u_t\\
  &x_{k+1} = x_f,\ x_1 = 0    \\
\end{aligned}
\end{equation}
Recall $C_k = [B\ AB\ \cdots\ A^{k-1}B]$, and let $(v_1, v_2, \dots, v_k)\in \mathbb{R}^{kn}$ denote the concatenation of $k$ $n$-dimensional vectors. Then this is equivalent to 
\begin{equation}
\label{opt}
\begin{aligned}
\min_{(u_t)_{t=1}^k} \quad & \sum_{t=1}^k \|u_t\|^2\\
\textrm{s.t.} \quad & C_k (u_k, u_{k-1}, \ldots, u_1)  = x_f\\
\end{aligned}
\end{equation}
Suppose the system is $(k, \kappa)$ strongly controllable, then $C_k$ has full row-rank, and $C_kC_k^\top$ is invertible with $\|(C_kC_k^\top)^{-1}\|\le \kappa$. Therefore \eqref{opt} is feasible for all unit vectors $x_f$.
By Lemma B.6 in \cite{cohen2018online}, an optimal solution to \eqref{opt} is given by $C_k^\top(C_kC_k^\top)^{-1}x_f$, and its value is at most  
$$ \sum_{t=1}^k \|u_t\|^2 = \|C_k^\top (C_kC_k^\top)^{-1}x_f\|^2 = x_f^\top (C_kC_k^\top)^{-1}x_f\le \|(C_kC_k^\top)^{-1}\| = \kappa. $$
Now suppose for any unit vector $x_f$, there exists $u_1, \ldots, u_k, x_2, \ldots, x_{k+1}$ such that $x_{k+1} = x_f$, $x_{t+1} = Ax_t + Bu_t$, and $\sum_{t=1}^k \|u_t\|^2\le \kappa$. Then \eqref{opt} is feasible for any unit vector $x_f$, implying that $C_k$ has full row-rank and $(C_kC_k^\top)$ is invertible. Moreover, the optimal value is at most $\kappa$. Let $x_f$ be the eigenvector corresponding to the largest eigenvalue of $(C_kC_k^\top)^{-1}$. Then an optimal solution to \eqref{opt} is $C_k^\top(C_kC_k^\top)^{-1}x_f$, and the value satisfies $\|C_k^\top(C_kC_k^\top)^{-1}x_f\|^2 = x_f^\top(C_kC_k^\top)^{-1}x_f\le \kappa$. We conclude that $\|(C_kC_k^\top)^{-1}\|\le \kappa$, and the system is $(k, \kappa)$ strongly controllable.
\end{proof}

Using our characterization, we show an upper bound on the controllability parameter of $(A+BK, B)$ where $K$ is any linear controller with a bounded spectral norm.
\begin{restatable}{lemma}{controllable}
\label{lem:controllable}
Suppose $(A, B)$ is $(k, \kappa)$ strongly controllable and $\|A\|, \|B\|\le \beta$. Let $K$ be a linear controller with $\|K\|\le \kappa'$, then the system $(A+BK, B)$ is $(k, \kappa_0)$ strongly controllable, with $\kappa_0 = 4\kappa'^2k^2\beta^{2k}\kappa$.
\end{restatable}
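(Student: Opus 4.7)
The natural approach is to invoke the equivalent characterization of strong controllability from Lemma~\ref{lem:controllability}: it suffices to show that for any unit vector $x_f$, the closed-loop system $(A+BK,B)$ admits a sequence of $k$ controls that drives $x_1 = 0$ to $x_f$ with total squared cost at most $\kappa_0$. Given such an $x_f$, Lemma~\ref{lem:controllability} applied to $(A,B)$ yields controls $u_1,\dots,u_k$ (with corresponding trajectory $x_1=0,x_2,\dots,x_{k+1}=x_f$) such that $\sum_{t=1}^k \|u_t\|^2 \leq \kappa$.

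\textbf{Change of variables.} The key observation is that if we set $\tilde u_t \defeq u_t - K x_t$ and run the closed-loop system $(A+BK,B)$ with these controls starting at $x_1 = 0$, the resulting trajectory coincides exactly with $x_1,\dots,x_{k+1}$, because $(A+BK)x_t + B\tilde u_t = Ax_t + Bu_t$. In particular, the new controls also reach $x_f$ in $k$ steps. It remains to bound $\sum_{t=1}^k \|\tilde u_t\|^2 \leq 2\sum_{t=1}^k \|u_t\|^2 + 2\kappa'^2 \sum_{t=1}^k \|x_t\|^2$.

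\textbf{Bounding the intermediate states.} Since $x_1=0$, unrolling gives $x_{t+1} = \sum_{s=1}^t A^{t-s} B u_s$ for $t\leq k-1$. Using $\|A\|,\|B\|\leq \beta$ and Cauchy--Schwarz,
\begin{equation*}
\|x_{t+1}\|^2 \leq \Bigl(\sum_{s=1}^t \beta^{t-s+1}\|u_s\|\Bigr)^2 \leq \Bigl(\sum_{s=1}^t \beta^{2(t-s+1)}\Bigr)\Bigl(\sum_{s=1}^t \|u_s\|^2\Bigr) \leq k\beta^{2k}\kappa.
\end{equation*}
Summing over $t=1,\dots,k$ gives $\sum_{t=1}^k \|x_t\|^2 \leq k^2\beta^{2k}\kappa$. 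Plugging this back in and using $\kappa,\kappa',\beta,k\geq 1$, we obtain $\sum_{t=1}^k \|\tilde u_t\|^2 \leq 2\kappa + 2\kappa'^2 k^2 \beta^{2k}\kappa \leq 4\kappa'^2 k^2 \beta^{2k}\kappa = \kappa_0$, which by Lemma~\ref{lem:controllability} proves that $(A+BK,B)$ is $(k,\kappa_0)$-strongly controllable.

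\textbf{Main obstacle.} The argument is essentially mechanical once one notices the feedback change-of-variables $\tilde u_t = u_t - Kx_t$; the only calculation requiring any care is the state bound. One should make sure to use the Cauchy--Schwarz form rather than a naive triangle-inequality bound, since the latter would cost an extra factor of $k$ and inflate $\kappa_0$ by a $k$. Otherwise, each step is straightforward, and using Lemma~\ref{lem:controllability} as the bridge between the two notions of controllability makes the proof clean.
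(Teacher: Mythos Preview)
Your proof is correct and essentially identical to the paper's own argument: both invoke Lemma~\ref{lem:controllability}, use the feedback change of variables $\tilde u_t = u_t - Kx_t$ to carry the open-loop trajectory over to the closed-loop system, bound the intermediate states via Cauchy--Schwarz to get $\|x_t\|^2 \le k\beta^{2k}\kappa$, and combine to obtain $\kappa_0 = 4\kappa'^2 k^2 \beta^{2k}\kappa$. The only cosmetic difference is that the paper also spells out explicitly that the closed-loop controllability matrix has full row rank, but this is already subsumed by the ``only if'' direction of Lemma~\ref{lem:controllability} that you invoke.
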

\begin{proof}
Let $C_k = [B\ AB\ \cdots\ A^{k-1}B]$. By the definition of strong controllability, $C_k$ has full row-rank, and under the noiseless system $x_{t+1} = Ax_t + Bu_t$, any state is reachable by time $k+1$ starting from $x_1 = 0$. We will show that any state is reachable at time $t+1$ for the system $(A+BK, B)$ as well. Let $v\in\mathbb{R}^m$ be an arbitrary state, and the sequence of controls $(u_1, u_2, \ldots, u_k) = C_k^\top(C_kC_k^\top)^{-1}v$ can be used to reach $v$ from initial state $x_1 = 0$, i.e. 
$$
x_{k+1} = \sum_{i=1}^k A^{k-i}Bu_i = C_k(u_1, u_2, \ldots, u_k) = v.
$$
Let $\{x_t\}$ denote the state trajectory under controls $\{u_t\}$, where $x_{k+1} = v$. Consider the system $y_{t+1} = (A+BK)y_t + Bz_t = Ay_t + B(z_t + Ky_t)$, where $y_t$'s are states and $z_t$'s are controls. We claim that the sequence of controls $z_t = u_t - Ky_t$ can be used to drive the system to $v$ in $k+1$ steps from initial state $y_1 = 0$. Let $\{y_t\}$ denote the system's trajectory under controls $\{z_t\}$. For our base case, we have $y_2 = B(z_1+Ky_1) =Bu_1 = x_2$, since $y_1 = x_1=0, z_1 = u_1 - Ky_1$. Assume $x_t = y_t$ for some $t \le k$. For $t+1$,
$y_{t+1} = Ay_t + B(z_t + Ky_t) = Ay_t + Bu_t = Ax_t + Bu_t = x_{t+1}$. We conclude that the trajectories $\{x_t\}$ and $\{y_t\}$ are the same and $v = y_{k+1}$. Since we can write $y_{k+1} = \sum_{i=1}^k (A+BK)^{k-i}Bz_i$, $y_{k+1}$ is in the range of the matrix $C_k' = [B\ (A+BK)B\ \cdots\ (A+BK)^{k-1}B]$; therefore $C_k'$ has full row-rank.  

Now we show the controls $\{z_t\}$ satisfy $\sum_{t=1}^k \|z_t\|^2 \le 4\kappa'^2k^2\beta^{2k}\kappa\|v\|^2$. By our choice of $z_t$, we have $z_t = u_t - Ky_t = u_t - Kx_t$; therefore $\sum_{t=1}^k \|z_t\|^2 \le 2\sum_{t=1}^k (\|u_t\|^2 + \|K\|^2\|x_t\|^2)$. By our choice of $u_t$, we have
$$
\sum_{t=1}^k\|u_t\|^2 = \|C_k^\top (C_kC_k^\top)^{-1}v\|^2 = v^\top(C_kC_k^\top)^{-1}v \le \kappa \|v\|^2.
$$
Further, the trajectory $\{x_t\}_{t=1}^k$ satisfies
\begin{align*}
    \|x_t\|^2 = \|\sum_{i=1}^{t-1}A^{t-1-i}Bu_i \|^2\le k\sum_{i=1}^{t-1}\|A^{t-1-i}B\|^2\|u_i\|^2 \le k\beta^{2k}\kappa\|v\|^2
\end{align*}
Hence we have 
\begin{align*}
    \sum_{t=1}^k \|z_t\|^2 \le 2\kappa\|v\|^2 + 2\kappa'^2k^2\beta^{2k}\kappa\|v\|^2 \le  4\kappa'^2k^2\beta^{2k}\kappa\|v\|^2. 
\end{align*}
By Lemma \ref{lem:controllability}, $(A+BK, B)$ is $(k, 4\kappa'^2k^2\beta^{2k}\kappa)$ strongly controllable.
\end{proof}

Algorithm \ref{alg:gpc_md} is the main algorithm (Algorithm 1) in \cite{hazan2019nonstochastic}, where $T_0, \eta, H$ are internal parameters that can be set by the learner. In line 10, let $\Pi_\mathcal{M}$ denote projection onto the set $\mathcal{M}$, and let $f_t$ denote the surrogate cost at time $t$ as in Definition 11 of \cite{hazan2019nonstochastic}. Theorem \ref{thm: gpc} gives the regret bound for the algorithm when the internal parameters are set appropriately. 
\begin{algorithm}[]
\caption{Adversarial Control via System Identification} 
\label{alg:gpc_md}
\begin{algorithmic}[1]
\STATE \textbf{Input:} Number of iterations $T$, $\tilde{\gamma}, \hat{K}$ such that $\hat{K}$ is  $(\tilde{\kappa}, \tilde{\gamma})$ strongly stable, $\kappa^*, k$ such that $(A+B\hat{K}, B)$ is $(k, \kappa^*)$ strongly controllable, and $\kappa^*\ge\tilde{\kappa}$.
\STATE \underline{Phase 1:} \textbf{System Identification.}
\STATE Call Algorithm 2 in \citep{hazan2019nonstochastic} with a budget of $T_0$ rounds to obtain system estimates $\tilde{A}, \tilde{B}$.
\STATE \underline{Phase 2:} \textbf{Robust Control.}\\
Define the constraint set $\M = \{M = \{M^{0} \ldots M^{H-1}\}: \|M^{i-1}\| \leq \kappa^4(1-\gamma)^i \}$.
\STATE Initialize $\hat{w}_{T_0} = x_{T_0+1}$ and $\hat{w}_t = 0$ for $t < T_0$.
\FOR{$t = T_0+1, \ldots, T$}
\STATE Choose the action: 
\[u_t = \hat{K} x_t+ \sum_{i=1}^{H} M_t^{i-1} \hat{w}_{t-i}.\]
\STATE Observe the new state $x_{t+1}$ and cost $c_t(x_t, u_t)$.
\STATE Record estimate $\hat{w}_t=x_{t+1}-\tilde{A} x_t-\tilde{B}u_t$.
\STATE Update:
\[M_{t+1} = \Pi_{\M}(M_t - \eta \nabla f_{t}(M_t|\tilde{A}, \tilde{B}, \{\hat{w}\})) \]
\ENDFOR
\end{algorithmic}
\end{algorithm}
\begin{theorem}\label{thm: gpc}[Theorem 12 in \cite{hazan2019nonstochastic}]
Suppose $\hat{K}$ is $(\tilde{\kappa}, \tilde{\gamma})$ strongly stable for $(A, B)$, and the system $(A+B\hat{K}, B)$ is $(k, \kappa^*)$ strongly controllable. In addition, assume that the noise sequence $w_t$ satisfies $\|w_t\|\le W$ for all $t$. Then Algorithm \ref{alg:gpc_md} with $H = \Theta(\tilde{\gamma}^{-1}\log((\kappa^*)^2 T)), \eta = \Theta(GW\sqrt{T})^{-1}, T_0 = \Theta(T^{2/3}\log(1/\delta))$, incurs regret upper bounded by 
\begin{align*}
 \text{Regret} &= O(\poly(\kappa^*, \tilde{\gamma}^{-1}, k, d_x,d_u, G, W)T^{2/3}\log(1/\delta)).
 \end{align*}
with probability at least $1-\delta$ for controlling an unknown LDS.
\end{theorem}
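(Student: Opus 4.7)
The plan is to decompose the total cost into the Phase-1 exploration cost and the Phase-2 control regret, then choose $T_0$ to balance them. For \textbf{Phase 1 (system identification)}, Algorithm 2 of \citep{hazan2019nonstochastic} applies $u_t = \hat K x_t + \eta_t$ with bounded i.i.d.\ exploration noise $\eta_t$ for $T_0$ rounds. Because $\hat K$ is $(\tilde\kappa,\tilde\gamma)$-strongly stable and $\|w_t\|\le W$, I would first show the closed-loop states stay $\poly(\tilde\kappa,\tilde\gamma^{-1},W)$-bounded throughout this phase. Standard self-normalized / covariance-concentration arguments then give $\|\tilde A - A\| + \|\tilde B - B\| \le \tilde O(\sqrt{(d_x+d_u)/T_0})$ with probability $1-\delta$. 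Under Assumption \ref{a:lipschitz} the per-round cost is $\poly(\tilde\kappa,\tilde\gamma^{-1},G,W)$, so the total Phase-1 cost is $\tilde O(\poly(\cdot)\cdot T_0)$.

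For \textbf{Phase 2 (robust control via DAC)}, the key reparameterization is by disturbance-action controllers (Definition \ref{def:dac}). Unrolling $x_{t+1} = (A+B\hat K)x_t + B\sum_{i=1}^H M_t^{i-1}\hat w_{t-i} + w_t$, the actual state splits into a linear function of the last $H$ noise-estimates plus a remainder $(A+B\hat K)^{H}x_{t-H}$ that decays geometrically in $H$ via strong stability. I would then introduce the truncated surrogate loss $f_t(M \mid \tilde A,\tilde B,\{\hat w\})$: the cost at time $t$ under the counterfactuals that (i) the true system is $(\tilde A,\tilde B)$, (ii) the disturbances are $\{\hat w_s\}$, and (iii) the \emph{same} parameter $M$ was used for the preceding $H$ steps. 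This surrogate is convex in $M$ and has gradient norm $\poly(\kappa^*,\tilde\gamma^{-1},G,W)$ on the compact set $\mathcal{M}$.

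The regret would be decomposed as
\begin{align*}
\mathrm{Regret} &= \sum_{t\le T_0} c_t(x_t,u_t) + \sum_{t>T_0}\bigl[c_t(x_t,u_t) - f_t(M_t)\bigr] \\
&\quad + \sum_{t>T_0}\bigl[f_t(M_t) - f_t(M^\star)\bigr] + \sum_{t>T_0}\bigl[f_t(M^\star) - c_t(x_t^{\pi^\star},u_t^{\pi^\star})\bigr],
\end{align*}
where $M^\star$ is the best DAC and $\pi^\star$ the optimal comparator. The third summand is the OGD regret on convex losses $f_t$, giving $O(GW\sqrt{T})$ with $\eta=\Theta((GW\sqrt T)^{-1})$. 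The second and fourth summands collect three approximation errors: (a) truncation at horizon $H$ and the LDC-to-DAC approximation error, both $\le \poly(\kappa^*,W)(1-\tilde\gamma)^H$ using strong stability of $\hat K$ and $(k,\kappa^*)$ strong controllability of $(A+B\hat K,B)$; (b) the noise-residual error $\|\hat w_t - w_t\|\le (\|\tilde A - A\|+\|\tilde B - B\|)\cdot O(\|x_t\|+\|u_t\|)$; and (c) the system-mismatch error from evaluating $f_t$ under $(\tilde A,\tilde B)$ rather than $(A,B)$, which scales like (b) through the same unrolling.

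Choosing $H = \Theta(\tilde\gamma^{-1}\log((\kappa^*)^2 T))$ makes the exponentially-decaying errors $\poly(1/T)$, while (b) and (c) aggregate to $\tilde O(\poly(\cdot)\cdot T/\sqrt{T_0})$. Combined with the $\tilde O(T_0)$ Phase-1 cost, setting $T_0 = \Theta(T^{2/3}\log(1/\delta))$ balances the two and yields $\tilde O(\poly(\kappa^*,\tilde\gamma^{-1},k,d_x,d_u,G,W)\,T^{2/3}\log(1/\delta))$. The main obstacle is controlling how noise-estimate errors propagate into the state and thence into both $f_t$ and $c_t$: a naive bound would accumulate a factor linear in $t$. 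The resolution is to exploit that, along a trajectory whose feedback is $\hat K$ plus a bounded DAC perturbation $M_t\in\mathcal M$, the closed-loop map contracts geometrically, so per-step errors telescope instead of compounding. This geometric damping, together with Lemma \ref{lem:controllable}-style bounds on $(A+B\hat K,B)$, is the technical crux of the regret analysis.
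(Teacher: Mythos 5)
This statement is imported verbatim as Theorem 12 of \citep{hazan2019nonstochastic}; the paper gives no proof of its own, so there is nothing internal to compare against. Your sketch is a faithful reconstruction of the architecture of the cited proof: explore-then-commit with Phase-1 cost $\tilde{O}(T_0)$, a $1/\sqrt{T_0}$ parameter-estimation error that feeds a $T/\sqrt{T_0}$ mismatch term, truncated DAC surrogate losses with OGD giving $O(GW\sqrt{T})$, and the balance $T_0=\Theta(T^{2/3})$; the observation that geometric contraction of the closed loop prevents the $\hat{w}$-errors from compounding is indeed the technical crux. Two details are slightly off relative to the actual argument. First, in the adversarial-noise setting the estimation bound does not come from least-squares/self-normalized concentration (which needs stochastic noise); Algorithm 2 of \citep{hazan2019nonstochastic} injects i.i.d.\ sign noise $\eta_t$ and estimates the Markov parameters $N_j=(A+B\hat{K})^jB$ via the moments $\frac{1}{T_0}\sum_t x_{t+j+1}\eta_t^\top$, with a martingale concentration bound giving the $1/\sqrt{T_0}$ rate. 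Second, you invoke the $(k,\kappa^*)$ strong controllability of $(A+B\hat{K},B)$ to control the truncation/DAC-approximation error, but that error needs only strong stability of $\hat{K}$; the controllability hypothesis is what makes the identification step well-posed --- it lets one invert the map from the estimated $\{N_j\}$ back to $(\tilde{A},\tilde{B})$ with a perturbation bound controlled by $\kappa^*$, exactly as in the perturbation analysis of Lemma \ref{lem:frobA} in this paper. Neither point changes the final bound, but a complete write-up would need to place these hypotheses where they are actually used.
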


\section{Proofs for Lower Bound for Randomized Black-box Control Algorithms}

\begin{lemma} \label{lem:distr_restate}
Let $T = d_x/8$. There exists a sequence of orthonormal matrices $V_1, \ldots, V_{T}$, such that for $t\in [T]$, $V_t$ only depends on $x_1, \ldots, x_t, u_1, \ldots, u_t$ and they satisfy the following condition: \\Let $r_t$ denote the rank of $\text{span}(x_1, \ldots, x_t,$ $ u_1, \ldots, u_t)$, and denote the first $r_t$ columns of $V_t$ as $V_t^{\|}$, and the last $d-r_t$ columns of $V_t$ as $V_t^\bot$.
Let 
$h_t = (V_{t-1}^{\bot})^\top x_t$, then for all $t\in[T]$, conditioned on $x_1, x_2, \ldots, x_t,$ $u_1, u_2, \ldots, u_{t}, Au_1, \ldots, Au_{t-1}$, we have $(V_{t}^\bot)^\top x_{t+1} = c_t + z_t$, where the coordinates of $z_t$ are iid normally distributed, i.e. $z_t(i) \sim N(0, \frac{\gamma \|h_t\|^2}{d})$.
\end{lemma}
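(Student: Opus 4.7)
The plan is to construct the orthonormal frames $V_t$ inductively by Gram--Schmidt on the observed queries and then use rotation invariance of the Gaussian distribution of $A$ to identify the conditional law of the projection of $x_{t+1}$ onto $V_t^\bot$. Concretely, I would define $V_t$ so that its first $r_t$ columns form the Gram--Schmidt orthonormalization of the sequence $x_1,u_1,x_2,u_2,\ldots,x_t,u_t$ (in that order, discarding dependent vectors) and the last $d-r_t$ columns complete it to an orthonormal basis of $\mathbb{R}^{d_x}$ in any measurable canonical way. This guarantees the required measurability condition, namely that $V_t$ depends only on the conditioning variables.

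Next I would analyze what is revealed by conditioning on $\mathcal{F}_t:=\sigma(x_1,\ldots,x_t,u_1,\ldots,u_t,Au_1,\ldots,Au_{t-1})$. Since $x_{s+1}=Ax_s+u_s$ for $s<t$, this conditioning determines $Ax_s$ for $s\le t-1$ and $Au_s$ for $s\le t-1$, hence it determines $AV_{t-1}^{\|}$ (the action of $A$ on the subspace spanned by $x_1,u_1,\ldots,x_{t-1},u_{t-1}$). By rotation invariance of the entrywise i.i.d.\ $N(0,\gamma/d)$ law of $A$, the matrix $[AV_{t-1}^{\|}\ AV_{t-1}^{\bot}]=AV_{t-1}$ has the same distribution as $A$, and conditioning on $AV_{t-1}^{\|}$ leaves the block $AV_{t-1}^{\bot}$ with i.i.d.\ $N(0,\gamma/d)$ entries, independent of everything in $\mathcal{F}_t$.

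Now decompose $x_t=V_{t-1}^{\|}\alpha_t+V_{t-1}^{\bot}h_t$ with $\alpha_t=(V_{t-1}^{\|})^{\top}x_t$ and $h_t=(V_{t-1}^{\bot})^{\top}x_t$; both are $\mathcal{F}_t$-measurable. Since $u_t\in\mathrm{span}(x_1,u_1,\ldots,x_t,u_t)$ is spanned by the first $r_t$ columns of $V_t$, we have $(V_t^{\bot})^{\top}u_t=0$, so
\[
(V_t^{\bot})^{\top}x_{t+1}=(V_t^{\bot})^{\top}Ax_t=(V_t^{\bot})^{\top}(AV_{t-1}^{\|})\alpha_t+(V_t^{\bot})^{\top}(AV_{t-1}^{\bot})h_t.
\]
Call the first summand $c_t$; it is a product of $\mathcal{F}_t$-measurable quantities and therefore deterministic after conditioning. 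For the second summand, set $z_t:=(V_t^{\bot})^{\top}(AV_{t-1}^{\bot})h_t$. Conditional on $\mathcal{F}_t$, the random matrix $G:=AV_{t-1}^{\bot}$ has i.i.d.\ $N(0,\gamma/d)$ entries and is independent of $V_t^{\bot}$ and $h_t$; hence $Gh_t$ is Gaussian with covariance $(\gamma\|h_t\|^2/d)I_{d_x}$, and $z_t=(V_t^{\bot})^{\top}Gh_t$ is Gaussian with covariance $(\gamma\|h_t\|^2/d)(V_t^{\bot})^{\top}V_t^{\bot}=(\gamma\|h_t\|^2/d)I_{d_x-r_t}$ because $V_t^{\bot}$ has orthonormal columns. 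Thus the coordinates of $z_t$ are i.i.d.\ $N(0,\gamma\|h_t\|^2/d)$, which is precisely the claim.

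The main subtlety I expect is the bookkeeping around measurability: one must check that conditioning on the whole history $\mathcal{F}_t$ simultaneously freezes $V_t$, $\alpha_t$, $h_t$ and $AV_{t-1}^{\|}$, while leaving $AV_{t-1}^{\bot}$ with its unconditional Gaussian law and independent of these frozen quantities. This is where rotation invariance of the Gaussian (together with the fact that $V_{t-1}$ is $\mathcal{F}_{t-1}$-measurable and hence $\mathcal{F}_t$-measurable) is essential and must be invoked carefully; everything else is routine linear algebra.
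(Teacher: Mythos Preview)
Your proposal is correct and matches the paper's approach almost exactly: the same Gram--Schmidt construction of $V_t$, the same decomposition $(V_t^{\bot})^{\top}x_{t+1}=(V_t^{\bot})^{\top}AV_{t-1}^{\|}(V_{t-1}^{\|})^{\top}x_t+(V_t^{\bot})^{\top}AV_{t-1}^{\bot}h_t$, and the same identification of the second term as conditionally Gaussian with the stated variance. The paper isolates the key conditional-independence claim (that $(V_t^{\bot})^{\top}AV_{t-1}^{\bot}$ remains i.i.d.\ Gaussian given the history) into a separate lemma proved by explicit induction on $t$, which is precisely the ``bookkeeping around measurability'' you flag at the end; your one-line rotation-invariance sentence is not quite enough on its own since $V_{t-1}$ depends on $A$ through the trajectory, but you correctly identify this as the step requiring care, and the paper's inductive argument fills exactly that gap.
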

\begin{proof}
Fix $t\le T$, and condition on $x_1, \ldots, x_t,$ $u_1, \ldots, u_{t-1}, Au_1, \ldots, Au_{t}$. Let $V_1, \ldots, V_{t-1}$ be constructed as in Corollary \ref{cor:lower_bound_construction}. 
By construction, the first $r_{t-1}$ columns of $V_{t-1}$, denoted as $V_{t-1}^{\|}$, form a basis for $\text{span}(x_1, \ldots, x_{t-1}, u_1, \ldots, u_{t-1})$. Recall the last $d-r_{t-1}$ columns of $V_{t-1}$ is denoted as $V_{t-1}^\bot$. We have
\begin{align*}
    (V_t^\bot)^\top x_{t+1} &= (V_t^\bot)^\top Ax_t + (V_t^\bot)^\top u_t\\
    &= (V_t^\bot)^\top AV_{t-1} V_{t-1}^\top x_t \tag{$(V_t^\bot)^\top u_t = 0$}\\
    &= (V_t^\bot)^\top \left[
\begin{array}{c|c}
AV_{t-1}^{\|} & AV_{t-1}^\bot 
\end{array}
\right] V_{t-1}^\top x_t\\
&= \left[
\begin{array}{c|c}
(V_t^\bot)^\top AV_{t-1}^{\|} & (V_t^\bot)^\top AV_{t-1}^\bot 
\end{array}
\right] V_{t-1}^\top x_t\\
&= \left[
\begin{array}{c|c}
(V_t^\bot)^\top AV_{t-1}^{\|} & (V_t^\bot)^\top AV_{t-1}^\bot 
\end{array}
\right] \left[
\begin{array}{c}
(V_{t-1}^{\|})^\top \\
(V_{t-1}^{\bot})^\top
\end{array}
\right] x_t\\
&= (V_t^\bot)^\top AV_{t-1}^{\|}(V_{t-1}^{\|})^\top x_t + (V_t^\bot)^\top AV_{t-1}^\bot (V_{t-1}^{\bot})^\top x_t 
\end{align*}
Denote $(V_t^\bot)^\top AV_{t-1}^{\|}(V_{t-1}^{\|})^\top x_t$ as $c_t \in \mathbb{R}^{d-r_t}$, and recall $h_t = (V_{t-1}^{\bot})^\top x_t$. We have
\begin{align*}
    (V_t^\bot)^\top x_{t+1} = c_t + G_th_t,
\end{align*}
where $G_t = (V_t^\bot)^\top AV_{t-1}^\bot$. By Corollary \ref{cor:lower_bound_construction}, conditioned on $x_1, \ldots, x_t, u_1, \ldots, u_t$, $Au_1, \ldots, Au_{t-1}$, $G_t\sim N(d-r_t, d-r_{t-1}, \frac{\gamma}{d_x})$, and therefore the coordinates of $z_t = G_th_t$ are iid normally distributed with zero mean and $\frac{\gamma \|h_t\|^2}{d_x}$ variance.
\end{proof}

\begin{lemma}\label{lem:lower_bound_magnitude_restate}
Let $V_1, \ldots, V_T$ be as in Lemma \ref{lem:distr}, and $T \le d_x/8$. Let $h_t = (V_{t-1}^{\bot})^\top x_t$, with probability at least $1-\exp(-\frac{d_x}{25})$, conditioned on $x_1, x_2, \ldots, x_t, u_1, u_2, \ldots, u_t$, $Au_1, \ldots, Au_{t-1}$, we have $\|(V_{t}^\bot)^\top x_{t+1}\|^2 \ge \frac{\gamma \|h_t\|^2}{20}$.
\end{lemma}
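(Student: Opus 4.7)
The plan is to use the decomposition from Lemma \ref{lem:distr_restate}: conditioned on $x_1,\ldots,x_t,u_1,\ldots,u_t,Au_1,\ldots,Au_{t-1}$, we can write $(V_t^\bot)^\top x_{t+1} = c_t + z_t$, where $c_t \in \mathbb{R}^{d_x-r_t}$ is deterministic given the conditioning and $z_t$ has iid $N(0,\gamma\|h_t\|^2/d_x)$ coordinates. The task thus becomes lower bounding $\|c_t+z_t\|^2$ despite an essentially arbitrary deterministic shift $c_t$.

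To eliminate the shift I would project onto the hyperplane orthogonal to $c_t$ inside $\mathbb{R}^{d_x-r_t}$. Let $P$ denote the orthogonal projector onto $c_t^\perp$, so that $Pc_t=0$. Since $P$ is a contraction,
\[
\|c_t+z_t\|^2 \;\ge\; \|P(c_t+z_t)\|^2 \;=\; \|Pz_t\|^2,
\]
and by rotational invariance of the isotropic Gaussian, $\|Pz_t\|^2$ is distributed as $\sigma^2 \chi^2_k$ with $\sigma^2 = \gamma\|h_t\|^2/d_x$ and $k = d_x-r_t-1$. Moreover, each round enlarges $\mathrm{span}(x_1,\ldots,x_t,u_1,\ldots,u_t)$ by at most two vectors, so $r_t \le 2t \le 2T = d_x/4$, and hence $k \ge 3d_x/4-1$.

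The final step is the standard Laurent-Massart lower tail $\Pr(\chi^2_k \le k - 2\sqrt{ku}) \le e^{-u}$. Setting $u = d_x/25$, a short arithmetic check using $k \ge 3d_x/4-1$ shows $k - 2\sqrt{ku} \ge d_x/20$ once $d_x$ is moderately large (the dominant terms give roughly $0.75\, d_x - 2\sqrt{3}\, d_x/10 \approx 0.40\, d_x$), so
\[
\|c_t+z_t\|^2 \;\ge\; \sigma^2\bigl(k - 2\sqrt{ku}\bigr) \;\ge\; \frac{\gamma\|h_t\|^2}{20}
\]
with probability at least $1-e^{-d_x/25}$, as required. I do not anticipate any real obstacle; the only non-trivial check is that $c_t = (V_t^\bot)^\top A V_{t-1}^{\|}(V_{t-1}^{\|})^\top x_t$ really is measurable with respect to the conditioning. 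This holds because the conditioning determines $Ax_s = x_{s+1}-u_s$ and $Au_s$ for all $s \le t-1$, and the columns of $V_{t-1}^{\|}$ lie in $\mathrm{span}(x_s,u_s : s \le t-1)$, so $AV_{t-1}^{\|}$ and therefore $c_t$ are fixed.
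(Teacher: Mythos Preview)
Your proposal is correct and essentially identical to the paper's argument: the paper rotates so that the mean of $(V_t^\bot)^\top x_{t+1}$ is $\|c_t\|e_1$ and then drops the first coordinate, which is exactly your projection onto $c_t^\perp$; both routes yield a scaled $\chi^2_{d_x-r_t-1}$ random variable, and both finish with the Laurent--Massart lower tail at $u=d_x/25$ together with $r_t\le 2t\le d_x/4$. Your explicit measurability check for $c_t$ is an extra detail the paper leaves implicit.
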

\begin{proof}
By Lemma \ref{lem:distr}, conditioned on $x_1, \ldots, x_t$, $u_1, \ldots, u_t$, $Au_1, \ldots, Au_t$, we have $h_{t+1} = (V_{t}^{\bot})^\top x_{t+1} \sim N(c_t, \frac{\gamma \|h_t\|^2}{d_x} I)$. There exists a rotation $R$ of $h_{t+1}$, such that $Rh_{t+1} \sim N(\|c_t\|e_1, \frac{\gamma \|h_t\|^2}{d_x} I)$.
Let $Rh_{t+1}(i)$ denote the $i$-th coordinate of the vector $Rh_{t+1}$, then we have $\sum_{i=2}^{d_x-r_t}Rh_{t+1}(i)^2 \sim \frac{\gamma\|h_t\|^2}{d_x} \chi_{d_x-r_t-1}$ follows a chi-square distribution. By Lemma 1 in \citet{laurent2000}, for a random variable $Y \sim \chi_k$, $\Pr[Y\le k - 2\sqrt{kx}]\le \exp(-x)$. Therefore for $t \le T < \frac{d_x}{8}$, $r_t\le 2t\le \frac{d_x}{4}$, we have
\begin{align*}
    \Pr[\sum_{i=2}^{d_x-r_t}Rh_{t+1}(i)^2 \le \frac{\gamma \|h_t\|^2}{20}] &= \Pr[\frac{d_x}{\gamma\|h_t\|^2} \sum_{i=2}^{d_x-r_t}Rh_{t+1}(i)^2 \le \frac{d_x}{20}]\\
    &\le  \Pr[\frac{d_x}{\gamma\|h_t\|^2} \sum_{i=2}^{d_x-r_t}Rh_{t+1}(i)^2 \le d_x-r_t-1 - 2\sqrt{(d_x-r_t-1)d_x/25}] \\
    &\le \exp(-\frac{d_x}{25})
\end{align*}
We conclude that
\begin{align*}
    \Pr[\|(V_t^\bot)^\top x_{t+1}\|^2 \ge \frac{\gamma \|h_t\|^2}{20}] = \Pr[\|Rh_{t+1}\|^2 \ge \frac{\gamma \|h_t\|^2}{20}] &\ge \Pr[ \sum_{i=2}^{d_x-r_t} Rh_{t+1}(i)^2 \ge \frac{\gamma \|h_t\|^2}{20}] \\
    &\ge 1-\exp(-\frac{d_x}{25}).
\end{align*}
\end{proof}

\begin{lemma} \label{lem:lower_bound_construction}
Consider the observation model, where $A \sim N(d, d, \gamma)$, and a player can make queries defined by vectors $q_1, q_2, \ldots, q_T$, $T\le d$. In turn, the player observes $w_1 = Aq_1, w_2 = Aq_2, \ldots, w_T = Aq_T$. For any $t \le T$, the player is allowed to choose $q_t$ as a deterministic function of the previous queries and observations. Let $r_t$ denote the rank of $\text{span}(q_1, \ldots, q_t)$. Then for all $t\le T$, there exists an orthonormal matrix $V_t$ that can be constructed only as a function of $q_1, \ldots, q_t$, such that with $V_t^\bot$ denoting the last $d-r_t$ columns of $V_t$, the following hold:
\begin{enumerate}
    \item Conditioned on $q_1, q_2, \ldots, q_t, w_1, \ldots, w_{t-1}$, $(V_t^{\bot})^{\top} A V_{t-1}^\bot \sim N(d-r_t, d-r_{t-1}, \gamma)$.
    \item Conditioned on $q_1, q_2, \ldots, q_t, w_1, \ldots, w_{t}$, $A V_{t}^\bot \sim N(d, d-r_t, \gamma)$.
\end{enumerate}
\end{lemma}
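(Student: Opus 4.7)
The plan is to prove the claim by induction on $t$, constructing the matrices $V_t$ incrementally from $V_{t-1}$ by appending at most one basis vector at each step, and then invoking the rotation-invariance and column-independence of iid Gaussian matrices.

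\textbf{Construction.} Set $V_0 = I_d$, so that $V_0^{\bot} = I_d$ and $r_0 = 0$. Given $V_{t-1}$, decompose $q_t = q_t^{\|} + q_t^{\bot}$ with $q_t^{\|} \in \mathrm{span}(V_{t-1}^{\|})$ and $q_t^{\bot}$ in $\mathrm{span}(V_{t-1}^{\bot})$. If $q_t^{\bot}=0$, take $V_t = V_{t-1}$. Otherwise, let $v_t = q_t^{\bot}/\|q_t^{\bot}\|$, append $v_t$ to $V_{t-1}^{\|}$ to form $V_t^{\|}$, and define $V_t^{\bot}$ by removing the column $v_t$ from $V_{t-1}^{\bot}$ (after a permutation that depends only on $q_t$, so that $V_{t-1}^{\bot}$ equals the column-concatenation $[v_t, V_t^{\bot}]$). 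Since $q_t$ is a deterministic function of $q_1,\ldots,q_{t-1},w_1,\ldots,w_{t-1}$, the matrix $V_t$ is determined by $q_1,\ldots,q_t$ as required.

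\textbf{Induction.} The base case $t=0$ is $A V_0^{\bot} = A \sim N(d,d,\gamma)$ by definition. Assume claim 2 holds at step $t-1$, i.e., conditioned on $q_1,\ldots,q_{t-1},w_1,\ldots,w_{t-1}$, we have $A V_{t-1}^{\bot} \sim N(d,d-r_{t-1},\gamma)$. For claim 1 at step $t$: since $q_t$ is a deterministic function of the prior conditioning variables, further conditioning on $q_t$ does not alter the distribution, and $(V_t^{\bot})^{\top}$ is deterministic under the resulting conditioning. Each column $c$ of $A V_{t-1}^{\bot}$ is independent and distributed as $N(0,\gamma I_d)$, so $(V_t^{\bot})^{\top} c \sim N(0,\gamma I_{d-r_t})$ because $V_t^{\bot}$ has orthonormal columns; independence across columns is preserved, establishing claim 1.

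For claim 2 at step $t$, further condition on $w_t = A q_t$. Write $A q_t = A q_t^{\|} + \|q_t^{\bot}\| A v_t$. The first summand is a deterministic linear combination of $w_1,\ldots,w_{t-1}$ and is therefore already known under the conditioning, so revealing $w_t$ is equivalent to revealing $A v_t$ (in the case $q_t^{\bot}=0$, $w_t$ adds no information and $V_t=V_{t-1}$, so the conclusion is immediate). By construction $v_t$ is precisely the column of $V_{t-1}^{\bot}$ removed to form $V_t^{\bot}$, so $A v_t$ is a single column of $A V_{t-1}^{\bot}$ while $A V_t^{\bot}$ consists of the remaining columns. The column-independence inside $N(d,d-r_{t-1},\gamma)$ then yields $A V_t^{\bot} \sim N(d,d-r_t,\gamma)$.

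\textbf{Main obstacle.} The algebra is routine; the subtle point is the conditioning bookkeeping. In particular, one must carefully verify that at each step $V_t$ is deterministic given the current conditioning (so that the Gaussian distribution after left/right multiplication by $V_t^{\bot}$ or $(V_t^{\bot})^{\top}$ is well-defined), and that $w_t$ reveals exactly one column of $A V_{t-1}^{\bot}$ and nothing more. The cleanest way to handle this is to force $v_t$ to be literally a column of $V_{t-1}^{\bot}$ (rather than a rotation of one), so that column-independence transfers without any additional change of basis.
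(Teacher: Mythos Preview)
There is a genuine gap in your argument for claim 2. You set $v_t = q_t^{\bot}/\|q_t^{\bot}\|$ and then propose to ``remove the column $v_t$ from $V_{t-1}^{\bot}$'' after a permutation, so that $V_{t-1}^{\bot} = [v_t, V_t^{\bot}]$. But $v_t$ is an \emph{arbitrary} unit vector in the column span of $V_{t-1}^{\bot}$; it is not in general equal to any of its columns, and no column permutation can make it one. Your stated goal of ``forcing $v_t$ to be literally a column of $V_{t-1}^{\bot}$'' is therefore unachievable: $V_{t-1}^{\bot}$ was already fixed at step $t-1$ as a function of $q_1,\ldots,q_{t-1}$, whereas $v_t$ depends on $q_t$, which is a function of the random observations $w_1,\ldots,w_{t-1}$. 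Consequently the step ``$A v_t$ is a single column of $A V_{t-1}^{\bot}$ while $A V_t^{\bot}$ consists of the remaining columns'' is unjustified, and the column-independence conclusion does not follow.

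The repair is precisely the rotation you tried to avoid, and it is what the paper does. Write $v_t = V_{t-1}^{\bot} y_t$ for a unit vector $y_t \in \mathbb{R}^{d-r_{t-1}}$, extend $y_t$ to an orthonormal matrix $[\,y_t \mid \tilde{Z}_t\,]$, and define $V_t^{\bot} := V_{t-1}^{\bot}\tilde{Z}_t$. Under the conditioning in claim 2 (which includes $q_t$), the rotation $[\,y_t \mid \tilde{Z}_t\,]$ is deterministic, so by rotation invariance of the Gaussian, $A V_{t-1}^{\bot}[\,y_t \mid \tilde{Z}_t\,] \sim N(d,d-r_{t-1},\gamma)$. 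The first column of this product is $A v_t$ (exactly what $w_t$ reveals, up to the known summand $Aq_t^{\|}$) and the remaining block is $A V_t^{\bot}$; column independence within the Gaussian matrix then yields claim 2. Your proof of claim 1 and the overall inductive scaffolding are fine; only this one step needs the change of basis.
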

\begin{proof}
We first construct $V_1, \ldots, V_T$. For $t\le T$, let $r_t$ be the rank of $\text{span}(q_1, q_2, \ldots, q_{t})$, $r_t\le t$, and denote the normalized component of $q_t$ that lies outside of $\text{span}(q_1, q_2, \ldots, q_{t-1})$ as $\tilde{q}_t$. Let $W_1, \ldots, W_T$ be orthonormal matrices, such that if $\tilde{q}_t$ = 0, $W_t = I$; otherwise, the first $r_{t-1}$ columns of $W_t$ are standard basis vectors $e_1, \ldots, e_{r_{t-1}}$, and the $r_t$-th column of $W_t$, denoted as $z_t$, is such that $W_1 W_2\cdots W_{t-1} z_t = \tilde{q_t}$. Such a $W_t$ exists because the product $W_1\cdots W_{t-1}$ is an orthonormal matrix and thus is full rank. Moreover, $z_t$ is orthogonal to $e_1, \ldots e_{r_t-1}$ by the construction of $\tilde{q}_t$. Let $V_t = W_1W_2\cdots W_t$. Then by definition, $V_t$ is orthonormal, and the first $r_t$ columns of $V_t$ form a basis for $\text{span}(q_1, q_2, \ldots, q_t)$. Moreover, $V_t$ only depends on $q_1, \ldots, q_t$. Denote the first $r_t$ columns of $V_t$ by $V_t^{\|}$, and recall that the last $d-r_t$ columns of $V_t$ is denoted by $V_t^\bot$. Now we prove the lemma by induction. \\
\textbf{Base case. } Define $V_0 = 0$. Since $q_1$ is chosen without any observations, it is independent of $A$, and hence $V_1$ is independent of $A$. Therefore conditioned on $q_1$, $AV_1^{\|}$ is independent of $AV_1^{\bot}$, and $(V_1^{\bot})^{\top} A V_{0}^\bot  = (V_1^{\bot})^{\top} A I \sim N(d-r_1, d, \gamma)$, $AV_1^\bot \sim N(d, d-r_1, \gamma)$. Since $w_1$ only depends on $AV_1^{\|}$, it is independent of $AV_1^{\bot}$. We conclude that conditioned on $q_1$ and $w_1$, $AV_1^\bot \sim N(d, d-r_1, \gamma)$. \\
\textbf{Inductive step. } Suppose for all $s < t$, the two conditions in the lemma hold. By definition, $q_t$ is a deterministic function of $q_1, \ldots, q_{t-1}, w_1, \ldots, w_{t-1}$, so by the inductive hypothesis, conditioned on $q_1, \ldots, q_{t-1}, q_t, w_1, \ldots, w_{t-1}$, $A V_{t-1}^\bot \sim N(d, d-r_{t-1}, \gamma)$, and we can obtain $W_t$ and $V_t$. Since $V_t$ is only a function of $q_1, \ldots, q_t$, we have $(V_t^\bot)^\top A V_{t-1}^\bot \sim N(d-r_t, d-r_{t-1}, \gamma)$. Denote the last $d-r_t$ columns of $W_t$ as $Z_t$. Now observe 
\begin{align*}
   V_{t}^{\bot} &= V_{t-1} W_t \left[
\begin{array}{c}
0_{r_t\times r_t} \\ I_{d-r_t}
\end{array}
\right] =  \left[
\begin{array}{c|c}
V_{t-1}^{\|} &
V_{t-1}^{\bot}
\end{array}
\right] Z_t 
\end{align*}
By construction the columns of $Z_t$ are orthogonal to $e_1, \ldots, e_{r_{t-1}}$, therefore their first $r_{t-1}$ coordinates are all zero, and we can write $Z_t = \left[
\begin{array}{c}
0_{(r_{t-1}) \times (d-r_t)} \\ \tilde{Z}_t
\end{array}
\right]$, where $\tilde{Z}_t \in \mathbb{R}^{(d-r_{t-1}) \times (d-r_t)}$ have orthonormal columns. Therefore we have
\begin{align*}
  V_t^\bot &= 
V_{t-1}^{\bot}
\tilde{Z}_t
\end{align*}
Since $\tilde{Z}_t$ is independent of $A V_{t-1}^\bot$, we have $A V_{t}^\bot = AV_{t-1}^\bot \tilde{Z}_t \sim N(d, d-r_t, \gamma)$. Now we need to show that this distribution doesn't change conditioned on $w_t$. If $q_t \in \text{span}(q_1, \ldots, q_{t-1})$, then $w_t = Aq_t$ can be determined by $w_1, \ldots, w_{t-1}$, so the distribution of $A V_{t}^\bot$ remains the same conditioned on $w_t$. Now assume $q_t \notin \text{span}(q_1, \ldots, q_{t-1})$, and $r_t = r_{t-1}+1$. Since $w_t$ is determined by $AV_t^{\|}$, it suffices to show that $AV_t^{\|}$ is independent of $AV_t^{\bot}$ conditioned on $q_1, \ldots, q_t, w_1, \ldots, w_{t-1}$. Consider the following decomposition 
$$
A V_t^{\|} = A \left[
\begin{array}{c|c}
V_{t-1}^{\|} & V_te_{r_t}
\end{array}
\right] = \left[
\begin{array}{c|c}
AV_{t-1}^{\|} & AV_te_{r_t}
\end{array}
\right] .
$$
By the construction of $V_{t-1}^{\|}$, $AV_{t-1}^{\|}$ can be determined by $w_1, \ldots, w_{t-1}$. Therefore, by the inductive hypothesis, $AV_{t-1}^{\|}$ is independent of $AV_{t-1}^\bot\tilde{Z}_t = AV_{t}^\bot$. Now we expand $V_te_{r_t}= V_{t-1}W_te_{r_t}$, and as before, let $z_t = W_te_{r_t}$. Since $z_t$ is orthogonal to $e_1, \ldots, e_{r_{t-1}}$, the first $r_{t-1}$ coordinates of $z_t$ are zero. Let the last $d-r_{t-1}$ coordinates of $z_t$ be $y_t$, then we have $V_te_{r_t} = V_{t-1}z_t = V_{t-1}^{\bot}y_t$, and $y_t$ is orthogonal to the columns of $\tilde{Z}_t$. By the inductive hypothesis, $AV_{t-1}^\bot \sim N(d, d-r_{t-1}, \gamma)$, so $AV_{t-1}^\bot y_t$ is independent of $AV_{t-1}^\bot \tilde{Z}_t$. We conclude that $AV_te_{r_t}$ is independent of $AV_t^\bot$, so $AV_t^{\|}$ is independent of $AV_t^\bot$ and conditioned on $w_t$, $AV_t^\bot\sim N(d, d-r_t, \gamma)$.
\end{proof}

\begin{corollary} \label{cor:lower_bound_construction}
Consider an alternative observation model, where $A\sim N(d, d, \gamma)$, and a player can make two queries at a time: $p_1, q_1, p_2, q_2, \ldots, p_T, q_T, T<d/2$. The player observes $v_t = Ap_t, w_t = Aq_t$ for $t\in [T]$, and the player can choose $p_t, q_t$ as deterministic functions of $\{p_s\}_{s=1}^{t-1},$ $\{q_s\}_{s=1}^{t-1},$ $\{v_s\}_{s=1}^{t-1},$ $\{w_s\}_{s=1}^{t-1}$. Let $r_t$ denote the rank of $\text{span}(\{p_s\}_{s=1}^{t}, \{q_s\}_{s=1}^{t})$. Then for all $t\le T$, there exists an orthonormal matrix $V_t$ that can be constructed only as a function of $\{p_s\}_{s=1}^{t}, \{q_s\}_{s=1}^{t}$, such that with $V_t^\bot$ denoting the last $d-r_t$ columns of $V_t$, the following hold:
\begin{enumerate}
    \item Conditioned on $\{p_s\}_{s=1}^{t}, \{q_s\}_{s=1}^{t}, \{v_s\}_{s=1}^{t-1}, \{w_s\}_{s=1}^{t-1}$, $(V_t^{\bot})^{\top} A V_{t-1}^\bot \sim N(d-r_t, d-r_{t-1}, \gamma)$.
    \item Conditioned on $\{p_s\}_{s=1}^{t}, \{q_s\}_{s=1}^{t}, \{v_s\}_{s=1}^{t}, \{w_s\}_{s=1}^{t}$, $A V_{t}^\bot \sim N(d, d-r_t, \gamma)$.
\end{enumerate}
\end{corollary}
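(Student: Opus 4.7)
My plan is to reduce the corollary to Lemma~\ref{lem:lower_bound_construction} by interleaving each round's two queries into a single sequence of $2T$ queries and then extracting the even-indexed orthonormal matrices. Concretely, I would set $\tilde{q}_{2t-1} \defeq p_t$, $\tilde{q}_{2t} \defeq q_t$, with observations $\tilde{w}_{2t-1} \defeq v_t$, $\tilde{w}_{2t} \defeq w_t$. Since in the corollary's model $p_t, q_t$ depend only on $\{p_s, q_s, v_s, w_s\}_{s \le t-1}$, each $\tilde{q}_s$ is in particular a deterministic function of $\tilde{q}_1, \ldots, \tilde{q}_{s-1}, \tilde{w}_1, \ldots, \tilde{w}_{s-1}$ (with $\tilde{q}_{2t}$ simply ignoring $\tilde{w}_{2t-1}$), which is exactly the adaptivity allowed by the lemma. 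Applying the lemma yields orthonormal matrices $V'_1, \ldots, V'_{2T}$ with ranks $r'_s$; I would set $V_t \defeq V'_{2t}$, so $r_t = r'_{2t}$ and $V_t$ depends only on $\{p_s, q_s\}_{s=1}^t$ as required.

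Condition~2 then follows immediately from condition~2 of the lemma at index $s = 2t$, since the conditioning set $\{\tilde q_s\}_{s=1}^{2t} \cup \{\tilde w_s\}_{s=1}^{2t}$ is exactly $\{p_s, q_s, v_s, w_s\}_{s=1}^t$. Condition~1 is a bit more subtle because the lemma only relates consecutive indices. I would first apply condition~1 of the lemma at step $s = 2t-1$ to obtain that, conditioned on $\{\tilde q_s\}_{s=1}^{2t-1} \cup \{\tilde w_s\}_{s=1}^{2t-2}$, the matrix $G \defeq ((V'_{2t-1})^\bot)^\top A (V'_{2t-2})^\bot$ has i.i.d.\ $N(0,\gamma)$ entries. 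Since $\tilde q_{2t} = q_t$ is a deterministic function of the already-conditioned variables and does not depend on $\tilde w_{2t-1}$, I may augment the conditioning set to include $\tilde q_{2t}$ without altering the conditional distribution of $G$.

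To finish, I use the explicit construction inside the lemma's proof: $(V'_{2t})^\bot = (V'_{2t-1})^\bot \tilde Z'_{2t}$, where $\tilde Z'_{2t}$ has orthonormal columns and depends only on $\{\tilde q_s\}_{s=1}^{2t}$. Therefore $((V'_{2t})^\bot)^\top A (V'_{2t-2})^\bot = (\tilde Z'_{2t})^\top G$, which remains an i.i.d.\ $N(0,\gamma)$ matrix of the correct dimensions $(d - r_t) \times (d - r_{t-1})$ by the standard fact that left-multiplication by a matrix with orthonormal columns preserves the Gaussian ensemble. This closes condition~1.

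The main obstacle --- and the only step that is not pure bookkeeping --- is verifying that $(\tilde Z'_{2t})^\top G$ stays an i.i.d.\ Gaussian matrix; this is a one-line covariance check that uses $(\tilde Z'_{2t})^\top \tilde Z'_{2t} = I$ together with the independence of the columns of $G$. Everything else reduces to the observation that the corollary's restriction (each $q_t$ cannot depend on $v_t$) makes its query model a strict specialization of the adaptive one-query model covered by Lemma~\ref{lem:lower_bound_construction}, so the lemma's guarantees carry over intact.
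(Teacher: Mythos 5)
Your proposal is correct, but it takes a different route from the paper: the paper offers no argument beyond ``very similar to the proof of Lemma~\ref{lem:lower_bound_construction}'', the intended reading being that one re-runs the lemma's induction with pairs of queries per round. You instead reduce the corollary to the lemma by interleaving $(p_t,q_t)$ into a single adaptive query stream of length $2T<d$ and taking $V_t=V'_{2t}$, which is a cleaner and more verifiable presentation. Your handling of the two non-trivial points is sound: condition~2 is indeed an immediate instance of the lemma at index $2t$; for condition~1, enlarging the conditioning set by $\tilde q_{2t}$ is harmless because $q_t$ is measurable with respect to the round-$(t-1)$ data already being conditioned on, and composing $((V'_{2t-1})^\bot)^\top A(V'_{2t-2})^\bot\sim N(d-r'_{2t-1},d-r'_{2t-2},\gamma)$ with the deterministic orthonormal-column factor $\tilde Z'_{2t}$ preserves the i.i.d.\ Gaussian ensemble with the right dimensions $(d-r_t)\times(d-r_{t-1})$. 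The one caveat is that your reduction is not fully black-box: the identity $(V'_{2t})^\bot=(V'_{2t-1})^\bot\tilde Z'_{2t}$ is a fact about the construction inside the lemma's proof, not part of its statement, so you are implicitly strengthening the lemma to export that factorization (it does hold, including in the degenerate case $\tilde q_{2t}=0$ where $W_{2t}=I$). What the reduction buys is a short, checkable derivation and reuse of the lemma's induction; what the paper's re-derivation would buy is a self-contained statement that does not depend on internals of another proof.
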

\begin{proof}
The proof is very similar to the proof of Lemma \ref{lem:lower_bound_construction}. 
\end{proof}

\section{Lower Bound for Deterministic Black-box Control Algorithms}\label{a:lower_bound_det}
\begin{theorem} \label{thm:lowerbound}
Let $\A$ be a deterministic black-box control algorithm. Then there exists a stabilizable system that is also $(1, 1)$-strongly controllable, and a sequence of oblivious perturbations and costs, such that with $x_1 = e_1$, and $T = d_x$, we have 
$$ \regret_T(\A)= 2^{\Omega(\mL)} . $$
\end{theorem}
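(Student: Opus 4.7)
My plan is to follow a Nemirovski-style adversarial construction in the spirit of \citet{gradientlowerbound}, adapted to the control setting. Fix $B = I_{d_x}$, $x_1 = e_1$, $c_t(x,u) = \|x\|^2 + \|u\|^2$, and no disturbance. The adversary will build an orthonormal basis $v_1 = e_1, v_2, \ldots, v_{d_x}$ of $\mathbb{R}^{d_x}$ as a deterministic function of $\A$, and then set $A$ to be the scaled shift $A v_i = \lambda v_{i+1}$ for $i < d_x$ (with $A v_{d_x} = 0$) and $\lambda = 2$. Because $B = I$, the pair $(A,B)$ is $(1,1)$-strongly controllable, and the linear policy $\pi^\star(x) = -Ax$ drives the state to $0$ after one step, yielding total comparator cost at most $1 + \lambda^2 = O(1)$ and showing the system is stabilizable. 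With $\beta = \|A\| = \lambda$, $k=1$, $\kappa=1$, $G=O(1)$, the system parameter satisfies $\mL = O(d_x)$.

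The core quantitative step is to show that, under a suitable adversarial choice of basis, $\|x_t\|^2 \geq \lambda^{2(t-1)}$ for every $t \leq d_x/2$. Writing $x_t = A^{t-1}x_1 + \sum_{i=1}^{t-1} A^{t-1-i} u_i = \lambda^{t-1} v_t + \sum_{i=1}^{t-1} A^{t-1-i} u_i$ and expanding $u_i = \sum_j (v_j^\top u_i)\, v_j$, the shift structure of $A$ implies that the coordinate of $A^{t-1-i}u_i$ along $v_t$ is exactly $\lambda^{t-1-i}(v_{i+1}^\top u_i)$. Consequently, provided the adversary enforces the invariant
\begin{equation*}
v_{i+1}^\top u_i = 0 \qquad \text{for all } i = 1, \ldots, d_x/2 - 1,
\end{equation*}
one obtains $v_t^\top x_t = \lambda^{t-1}$, hence $\|x_t\|^2 \geq \lambda^{2(t-1)}$, and summing gives cost at least $\sum_{t=1}^{d_x/2} 2^{2(t-1)} = 2^{\Omega(d_x)} = 2^{\Omega(\mL)}$.

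To realize the invariant, I would construct the basis inductively. At iteration $t$, having already fixed $v_1,\ldots,v_t$, run a copy of $\A$ on the partial system to obtain $u_t$ (a deterministic function of $x_1,\ldots,x_t$), and then pick $v_{t+1}$ to be any unit vector orthogonal to $\{v_1,\ldots,v_t, u_t\}$; such a vector exists whenever $2t+1 < d_x$, which covers $t \leq d_x/2 - 1$. Completing the basis arbitrarily on the remaining dimensions yields a single fixed matrix $A$ for which the actual execution of $\A$ reproduces the simulation step by step.

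The main obstacle is the consistency of this inductive simulation: in principle, $x_t$ depends on $A$'s action on $u_1,\ldots,u_{t-1}$, whose components on basis vectors not yet committed could contaminate the state. I would handle this by carrying an additional invariant tracking the $v$-expansion of each $x_s$ and each relevant $u_i$, exploiting the orthogonality condition $v_{i+1}\perp u_i$ together with the shift structure of $A$ to show inductively that all quantities the algorithm observes depend only on the already-committed portion of the basis. As a fallback, the Gaussian construction of Theorem~\ref{thm:lowerbound_rand} combined with the probabilistic method already implies existence of some fixed $A$ forcing $2^{\Omega(\mL)}$ regret on any deterministic $\A$; the direct construction above is what delivers the improved constants and the extension to degenerate (in particular low-rank $B$) systems claimed in the statement.
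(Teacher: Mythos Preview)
Your construction has a genuine gap in the consistency step, and the additional invariant you propose cannot close it. The problem is that choosing $v_{t+1}\perp u_t$ does \emph{not} keep the observed state inside the committed span. Concretely, write $u_1=(v_1^\top u_1)v_1+w$ with $w\perp v_1$; since you also enforce $v_2\perp u_1$, the vector $w$ lies in $\text{span}(v_3,\ldots,v_{d_x})$. Then $x_3=Ax_2+u_2=\lambda^2 v_3+Au_1+u_2$ contains the term $Aw=\lambda\sum_{j\ge 3}(v_j^\top w)\,v_{j+1}$, a quantity that depends on how $w$ decomposes in the basis $v_3,v_4,\ldots$ and hence on vectors you have not yet committed. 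Different completions of the basis give different $x_3$, so the algorithm's response $u_3$ and your subsequent choice of $v_4$ are circularly entangled. The single orthogonality $v_{i+1}\perp u_i$ kills exactly one coordinate and says nothing about the components of $u_i$ along $v_{i+2},v_{i+3},\ldots$; the shift structure of $A$ is precisely what propagates those components forward.

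The paper's construction repairs this by making essentially the opposite choice: it sets the new direction $V_{t+1}^\top=y_t$ to be the (normalized) component of $u_t$ orthogonal to $V_1^\top,\ldots,V_t^\top$, rather than a vector orthogonal to $u_t$. This forces $u_t\in\text{span}(V_1^\top,\ldots,V_{t+1}^\top)$ by construction, and one then proves inductively that $x_t\in\text{span}(V_1^\top,\ldots,V_t^\top)$ for all $t$, so the simulation is consistent and $x_t$ depends only on already-committed data. The trade-off is that $u_t$ now has a possibly nonzero component $a_{t+1}^t$ along $V_{t+1}^\top$ that could cancel the shift; the paper handles this with a free sign in the shift, taking $A V_t^\top=d_t V_{t+1}^\top$ with $d_t=\pm 2$ chosen after seeing $u_t$ so that $|c_{t+1}^{t+1}|=2|c_t^t|+|a_{t+1}^t|\ge 2|c_t^t|$. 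Your fallback via Theorem~\ref{thm:lowerbound_rand} does give existence, but only the corrected direct construction delivers the full $T=d_x$ horizon and the explicit constants.
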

Let $c_t(x, u) = \|x\|^2 + \|u\|^2$ for all $t$. Consider the noiseless system $x_{t+1} = Q^\top Vx_t + u_t$ for some $Q$ and orthogonal $V$. Under this system $w_t = 0$ for all $t$, and a stabilizing controller is $K = -Q^\top V$. Observe that $J(K)$ is constant. The system is also $(1, 1)$ strongly controllable because $B = I$. Let $V_i, Q_i \in \mathbb{R}^{1\times d_x}$ denote the rows of $V$ and $Q$, respectively. Fix a deterministic algorithm $\A$, and let $u_t = \A(x_1, x_2, \ldots, x_t, c_1, \ldots, c_t)$ be the control produced by $\A$ at time $t$. There exists $Q, V$ such that under this system, $\A$ outputs controls such that $\|x_{d_x}\|\ge 2^{d_x-1}$. 
\allowdisplaybreaks
\paragraph{The construction.}
Set $x_1 = e_1$. We construct Q and V as follows: let $y_0 = e_1$, set $V_1 = y_0^\top = e_1^\top$; for $i=1, \ldots, d_x-1$, define
$$
z_i = \begin{cases} u_i &\mbox{if }u_i\notin \text{span}(V_1^\top, \ldots, V_i^\top)\\
v\text{ s.t. }v\in \text{span}(V_1^\top, \ldots, V_i^\top)^\perp, \|v\| = 1 &\mbox{otherwise}
\end{cases}
$$
Let $y_i$ be the component of $z_i$ that is independent of $V_1^\top, \ldots, V_i^\top$,
$$
y_i = \frac{z_i - \sum_{j=1}^{i}\Pi_{V_j^\top}(z_i)V_j^\top}{\|z_i - \sum_{j=1}^{i}\Pi_{V_j^\top}(z_i)V_j^\top\|},
$$
where $\Pi_v(z)$ denotes the projection of $z$ onto vector $v$. Set $Q_i = d_iy_i^\top$ for some $d_i \neq 0$ to be specified later, and set $V_{i+1} = y_i^\top$. 

The next lemma justifies this iterative construction of $V$ by showing that the trajectory $x_1, \ldots, x_t$ is not affected by the choice of $V_i, Q_i$ for $i\ge t$. As a result, without loss of generality we can set $V_t$ after obtaining $x_t$, and set $Q_t$ after receiving $u_t$.

\begin{restatable}{lemma}{lowerboundconstruction}
\label{lowerbound_construction}
As long as $V$ is orthogonal, the states satisfy
$x_t = \sum_{i=1}^{t-1} c^t_i V_i^\top + c^t_ty_{t-1}$ for some constants $c_i^t$ that only depend on $\A$ and $\{Q_i\}_{i=1}^{t-1}$.
\end{restatable}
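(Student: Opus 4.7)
The proof proceeds by induction on $t$. For the base case $t=1$, by construction $V_1 = y_0^\top = e_1^\top$ and $x_1 = e_1 = y_0$, matching the claimed form with $c_1^1 = 1$. For the inductive step, the key invariant I will maintain is that $x_t \in \text{span}(V_1^\top, \ldots, V_t^\top)$, which follows directly from the induction hypothesis since $y_{t-1} = V_t^\top$ by construction. Equivalently, I can write the hypothesis as $x_t = \sum_{i=1}^t c_i^t V_i^\top$.

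Using that $V$ is orthogonal, i.e. $V_j V_i^\top = \delta_{ji}$, the vector $V x_t$ has its $i$-th coordinate equal to $c_i^t$ for $i \le t$ and zero otherwise. Since $Q_i = d_i y_i^\top$ and $V_{i+1} = y_i^\top$, this gives
\begin{equation*}
Q^\top V x_t \eq \sum_{i=1}^{t} c_i^t\, Q_i^\top \eq \sum_{i=1}^{t} c_i^t\, d_i\, V_{i+1}^\top,
\end{equation*}
an expression that involves only $Q_1, \ldots, Q_t$. This is already the right structure: each term in the sum contributes to a coefficient of some $V_j^\top$ with $j \le t+1$.

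Next I decompose $u_t$. By the iterative construction, $V_1, \ldots, V_t$ are determined from $\A$ and $Q_1, \ldots, Q_{t-1}$ alone, so the induction hypothesis makes $x_1, \ldots, x_t$, and hence $u_t = \A(x_1, \ldots, x_t, c_1, \ldots, c_t)$, functions of $\A$ and $\{Q_j\}_{j=1}^{t-1}$ (the costs are fixed). By the definition of $y_t$, any component of $u_t$ orthogonal to $\text{span}(V_1^\top, \ldots, V_t^\top)$ is a scalar multiple of $y_t = V_{t+1}^\top$: if that component is nonzero then $z_t = u_t$ and $y_t$ is precisely its normalization; otherwise $u_t$ already lies in the span. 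Either way $u_t = \sum_{i=1}^{t+1} a_i V_i^\top$ with $a_i$ functions of $\A, \{Q_j\}_{j=1}^{t-1}$. Adding this to the previous display,
\begin{equation*}
x_{t+1} \eq Q^\top V x_t + u_t \eq a_1 V_1^\top + \sum_{i=2}^{t+1}\bigl(a_i + c_{i-1}^t\, d_{i-1}\bigr) V_i^\top,
\end{equation*}
which is the claimed form once one reads off $c_i^{t+1}$ from the coefficients and uses $V_{t+1}^\top = y_t$.

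The main subtlety I expect is the dependency bookkeeping: showing that $c_i^{t+1}$ depends only on $\A$ and $\{Q_j\}_{j=1}^{t}$, never on any $Q_j$ with $j > t$. This is exactly what justifies the online nature of the adversary's construction in the surrounding proof — at time $t$ one may commit to $Q_t = d_t y_t^\top$ after seeing $u_t$, and the trajectory through $x_{t+1}$, together with the next subspace $\text{span}(V_1^\top,\ldots,V_{t+1}^\top)$, is already fully determined. Beyond this the argument is a mechanical combination of the inductive hypothesis with the two displayed expansions.
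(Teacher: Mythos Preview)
Your proposal is correct and follows essentially the same route as the paper's own proof: induction on $t$, using orthogonality of $V$ to compute $Q^\top V x_t = \sum_{i=1}^t c_i^t Q_i^\top$, decomposing $u_t$ in the basis $V_1^\top,\ldots,V_t^\top,y_t$, and tracking that the resulting coefficients depend only on $\A$ and $\{Q_j\}_{j\le t}$. The paper's write-up separates the $i=t$ term of $Q^\top V x_t$ as $c_t^t d_t y_t$ rather than $c_t^t d_t V_{t+1}^\top$, but this is purely notational.
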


\begin{proof}
We prove the lemma by induction. For our base case, $x_1$ is trivially $c_1^1 e_1$ and it is fixed for all choices of $Q, V$. Set $V_1 = e_1^\top$.  Assume the lemma is true for $x_t$, and we have specified $V_i$ for $i\le t$, $Q_i$ for $i < t$. The specified rows of $V$ are orthonormal by construction. Note that by our construction, $x_t$ is obtained first, and then we set $V_t = y_{t-1}^\top$. Since $u_t$ only depends on the current trajectory up to $x_t$, it is well-defined, and we can obtain $z_t$. By definition of $y_t$, we can write $u_t = \sum_{i=1}^{t} a_i^tV_i^\top + a_{t+1}^t y_t$ for some coefficients $a_i^t$. Set $Q_t = d_ty_t^\top$ as in the lemma. The next state is then
\begin{equation}
\begin{aligned}[b]
    x_{t+1} = Q^\top Vx_t + u_t &= Q^\top V\sum_{i=1}^t c_i^t V_i^\top + \sum_{i=1}^{t} a_i^t V_i^\top + a_{t+1}^ty_t &\mbox{$V_t = y_{t-1}^\top$}\\
    &=\sum_{i=1}^t c_i^t Q^\top e_i + \sum_{i=1}^{t} a_i^t V_i^\top + a_{t+1}^ty_t &\mbox{$V$ is orthogonal}\\
    &= \sum_{i=1}^t c_i^t Q_i^\top + \sum_{i=1}^{t} a_i^t V_i^\top + a_{t+1}^ty_t\\
    &= \sum_{i=1}^{t-1} c_i^t d_iV_{i+1}^\top + c_t^td_ty_t +\sum_{i=1}^{t} a_i^t V_i^\top + a_{t+1}^ty_t &\mbox{$Q_i = d_iy_i^\top = d_iV_{i+1}$}\\
    &= \sum_{i=1}^{t} c_i^{t+1} V_i^\top + c_t^td_ty_t+ a_{t+1}^ty_t 
\end{aligned}
\label{eqn1}
\end{equation}
We have shown in the inductive step that $x_{t+1}$ does not depend on the choice of $V_{t+1}$ as long as $V$ is orthogonal, hence we can set $V_{t+1} = y_t^\top$. Moreover, $x_{t+1}$ is not affected by $Q_i$ for $i \ge t+1$ by inspection. 
\end{proof}

\paragraph{The magnitude of the state.} In this section we specify the constants $d_i$ in the construction to ensure that the state has an exponentially increasing magnitude. Let $u_i = \sum_{j=1}^{i} a_j^iV_j^\top + a_{i+1}^i y_i$, $x_i = \sum_{j=1}^{i-1} c^i_j V_j^\top + c_i^i y_{i-1}$. Set $d_i = \sign(c_i^i)\sign(a_{i+1}^i)\cdot 2$. The quantities $c_i^i$ and $a_{i+1}^i$ are well-defined when we set $Q_i$ after obtaining $x_i$ and $u_i$. Intuitively, $Q^\top V$ applied to $x_i$ aligns $y_{i-1}$ to $y_i$, and we grow the magnitude of the $y_i$ component in $x_{t+1}$ multiplicatively.

\begin{restatable}{lemma}{lowerboundmagnitude}
The states satisfy $x_t = \sum_{i=1}^t c_i^t V_i^\top$, and $|c_t^t|\ge 2|c_{t-1}^{t-1}|$.
\end{restatable}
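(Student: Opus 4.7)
The plan is to leverage the computation already carried out in the proof of \cref{lowerbound_construction} (the identity labeled by equation \eqref{eqn1}), match coefficients in the basis $V_1,\dots,V_t$, and then use the sign alignment built into the definition $d_i = \sign(c_i^i)\sign(a_{i+1}^i)\cdot 2$ to conclude that the two contributions to $c_t^t$ add rather than cancel.

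First, I would derive the claimed decomposition $x_t = \sum_{i=1}^{t} c_i^t V_i^\top$. By the previous lemma, $x_t = \sum_{i=1}^{t-1} c_i^t V_i^\top + c_t^t y_{t-1}$, and by construction $V_t = y_{t-1}^\top$, so $y_{t-1} = V_t^\top$ and the last summand is simply $c_t^t V_t^\top$. Thus the $y_{t-1}$--coefficient in the previous lemma and the $V_t^\top$--coefficient here are the same quantity, which legitimizes using $c_t^t$ for both.

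Next, I would read off $c_{t}^{t}$ from the inductive step of \cref{lowerbound_construction}. Equation \eqref{eqn1} (with the index shifted by one) gives
\begin{equation*}
x_{t} \eq \sum_{i=1}^{t-1} c_i^{t} V_i^\top \;+\; \bigl(c_{t-1}^{t-1} d_{t-1} + a_{t}^{t-1}\bigr)\,y_{t-1},
\end{equation*}
where $u_{t-1} = \sum_{j=1}^{t-1} a_j^{t-1} V_j^\top + a_t^{t-1} y_{t-1}$ is the decomposition used when defining $Q_{t-1}$. Identifying with the expression above (and using $V_t^\top = y_{t-1}$) yields the recursion
\begin{equation*}
c_t^t \eq c_{t-1}^{t-1}\, d_{t-1} + a_t^{t-1}.
\end{equation*}

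Finally, I would plug in the definition $d_{t-1} = \sign(c_{t-1}^{t-1})\sign(a_t^{t-1})\cdot 2$. Then $c_{t-1}^{t-1} d_{t-1} = 2|c_{t-1}^{t-1}|\sign(a_t^{t-1})$, which has the same sign as $a_t^{t-1}$, so the two terms cannot cancel. Taking absolute values gives $|c_t^t| = 2|c_{t-1}^{t-1}| + |a_t^{t-1}| \ge 2|c_{t-1}^{t-1}|$, as required. There is no real obstacle here: the work was done in the previous lemma, and the only subtlety is ensuring that $c_{t-1}^{t-1}, a_t^{t-1}$ (hence $d_{t-1}$) are well-defined at the moment $Q_{t-1}$ is committed — which is true because the adversary sets $Q_{t-1}$ only after observing both $x_{t-1}$ and $u_{t-1}$, exactly as discussed after \cref{lowerbound_construction}. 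An edge case to address is $a_t^{t-1}=0$ or $c_{t-1}^{t-1}=0$, which can be handled by adopting the convention $\sign(0)=1$ in the definition of $d_{t-1}$ so that the argument above goes through unchanged.
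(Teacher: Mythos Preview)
Your proposal is correct and follows essentially the same route as the paper: use equation \eqref{eqn1} from \cref{lowerbound_construction} to read off the recursion $c_{t}^{t} = c_{t-1}^{t-1} d_{t-1} + a_{t}^{t-1}$, substitute the sign-aligned choice of $d_{t-1}$, and conclude $|c_t^t| = 2|c_{t-1}^{t-1}| + |a_t^{t-1}| \ge 2|c_{t-1}^{t-1}|$. The only differences are cosmetic (your index shift $t-1\to t$ versus the paper's $t\to t+1$) and your explicit remark about the $\sign(0)=1$ convention, which the paper leaves implicit.
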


\begin{proof}
By equation \ref{eqn1} in Lemma \ref{lowerbound_construction}, we can express
$x_{t+1} = \sum_{i=1}^{t} c_i^{t+1} V_i^\top + c_t^td_ty_t+ a_{t+1}^ty_t $. As we claimed before, since $x_{t+1}$ does not depend on the choice of $V_{t+1}$, we set $V_{t+1} = y_t$, and write $x_{t+1} = \sum_{i=1}^{t+1} c_i^{t+1} V_i^\top$.
By our choice of $d_t$, we have
\begin{align*}
    c_{t+1}^{t+1} = c_t^td_t + a_{t+1}^t = \sign(c_t^t)\sign(a_{t+1}^t) \cdot 2c_t^t + a_{t+1}^t = \sign(a_{t+1}^t)(2|c_t^t| + |a_{t+1}^t|).
\end{align*}
We conclude that $|c_{t+1}^{t+1}| = 2|c_t^t| + |a_{t+1}^t| \ge 2|c_t^t|.$
\end{proof}

Observe that $x_1 = c_1^1e_1$ where $|c_1^1| = 1$; therefore we have $\|x_{d_x}\| \ge |c_{d_x}^{d_x}| \ge 2^{d_x-1}$.

\paragraph{Size of the system.} Our construction only requires $Q_1, \ldots, Q_{d_x-1}$ to be specified, and without loss of generality we take $Q_{d_x} = d_{d_x}V_1 = 2V_1$. By inspection, $Q$ can be written as $Q = DPV$, where $D = Diag(d_1, d_2, \ldots, d_{d_x})$ and $P$ is a permutation matrix that satisfies $(PV)_i = V_{i+1\pmod{d_x}}$. Therefore the spectral norm of $Q^\top V$ is at most $\|Q\|\|V\| \le 2$. We conclude that for this system, $\mL = d_u+d_x+7$, and the total cost is at least $2^{\Omega(\mL)}$.

\end{document}